\documentclass{article}

\usepackage{microtype}
\usepackage{graphicx}
\usepackage{subcaption}
\usepackage{booktabs} % for professional tables

\usepackage{hyperref}

\PassOptionsToPackage{table}{xcolor} % enable \rowcolors for table striping

\usepackage[accepted]{icml2026}

\usepackage{amsmath}
\usepackage{amssymb}
\usepackage{todonotes} % Load after xcolor to prevent option clash
\usepackage{appendix}
\usepackage{multirow}
\usepackage[font=small,labelfont=bf]{caption}
\usepackage{graphicx}
\usepackage{subcaption}
\usepackage{array}
\usepackage{subcaption}
\usepackage{wrapfig} 
\usepackage{titletoc}

\usepackage{listings}
\usepackage{wrapfig}

\usepackage{colortbl}  % xcolor already loaded above with dvipsnames option
\definecolor{lavender}{rgb}{0.9, 0.9, 0.98}
\definecolor{red}{RGB}{255, 0, 0}
\definecolor{orange}{RGB}{252, 130, 62}
\definecolor{blue}{RGB}{0, 0,255}
\definecolor{darkgreen}{RGB}{0, 180,0}
\definecolor{lightgray}{gray}{0.9}  
\definecolor{lightblue}{rgb}{0.68,0.85,0.9}  
\definecolor{lightorange}{rgb}{1.0, 0.8, 0.6}
\definecolor{myblue}{HTML}{598BE7}

\newcommand{\note}[1]{\color{orange}}
\usepackage{booktabs}
\usepackage{acronym}
\acrodef{vla}[VLA]{vision-language-action}
\acrodef{il}[IL]{imitation learning}
\acrodef{rl}[RL]{reinforcement learning}
\acrodef{rlft}[RLFT]{RL fine-tuning}
\acrodef{fm}[FM]{flow matching}

\usepackage{caption}
\usepackage{array}

\usepackage{amsmath,amsthm,amsfonts,mathtools}
\newcommand{\DKL}{D_{\mathrm{KL}}}

\usepackage[most]{tcolorbox}

\newcommand{\E}{\mathbb{E}}
\newcommand{\KL}{D_{\mathrm{KL}}}

\usepackage{etoolbox}
\renewcommand{\arraystretch}{1.0}  % tighter rows
\AtBeginEnvironment{tabular}{\footnotesize}
\AtBeginEnvironment{tabular*}{\footnotesize}

\newcommand{\methodname}{BMD}

\makeatletter
\renewcommand\paragraph{\@startsection{paragraph}{4}{\z@}%
  {0.25ex \@plus 0.5ex \@minus 0.2ex}% space before
  {-0.8em}% space after (negative = run-in)
  {\normalfont\normalsize\bfseries}}
\makeatother

\newcommand{\tablerowcolors}{\rowcolors{1}{lightgray!50}{white}}

\usepackage[capitalize,noabbrev]{cleveref}

\theoremstyle{plain}
\newtheorem{theorem}{Theorem}[section]
\newtheorem{proposition}[theorem]{Proposition}

\theoremstyle{definition}

\theoremstyle{remark}
\newtheorem{remark}[theorem]{Remark}

\icmltitlerunning{Behavioral Mode Discovery for Fine-tuning Multimodal Generative Policies}

\begin{document}

\twocolumn[
  % \icmltitle{Unsupervised Mode Discovery for Fine-tuning Multimodal Generative Policies}
  
  \icmltitle{Behavioral Mode Discovery for Fine-tuning Multimodal Generative Policies}

  % It is OKAY to include author information, even for blind submissions: the
  % style file will automatically remove it for you unless you've provided
  % the [accepted] option to the icml2026 package.

  % List of affiliations: The first argument should be a (short) identifier you
  % will use later to specify author affiliations Academic affiliations
  % should list Department, University, City, Region, Country Industry
  % affiliations should list Company, City, Region, Country

  % You can specify symbols, otherwise they are numbered in order. Ideally, you
  % should not use this facility. Affiliations will be numbered in order of
  % appearance and this is the preferred way.
  \icmlsetsymbol{equal}{*}

  \begin{icmlauthorlist}
    \icmlauthor{Alberta Longhini}{sch,comp}
    \icmlauthor{David Emukpere}{comp}
    \icmlauthor{Jean-Michel Renders}{comp}
    \icmlauthor{Seungsu Kim}{comp}
  \end{icmlauthorlist}

  \icmlaffiliation{comp}{Naver Labs Europe, 6 Chem. de Maupertuis, Meylan, France}
  \icmlaffiliation{sch}{Department of Computer Science, Stanford University, CA, USA}

  \icmlcorrespondingauthor{Alberta Longhini}{alberta@stanford.edu}
  \icmlcorrespondingauthor{David Emukpere}{david.emukpere@naverlabs.com}

  % You may provide any keywords that you find helpful for describing your
  % paper; these are used to populate the "keywords" metadata in the PDF but
  % will not be shown in the document
  \icmlkeywords{Machine Learning, ICML}

% ]
    % \vskip 0.1in
    % {%
    % \renewcommand\twocolumn[1][]{#1}%
    % \begin{center}
    %     \centering
    %     \captionsetup{type=figure}
    % \includegraphics[width=\textwidth]{figures/01_intro/multimodality_comparison_v4.pdf} 
    % % \vspace{-1cm}
    %     \captionof{figure}{\textbf{Behavioral Mode Discovery (\methodname{}) prevents mode collapse during RL fine-tuning}.  Standard RL fine-tuning (RLFT) of a pre-trained multimodal policy often concentrates probability mass on few reward-maximizing behaviors, forgetting modes discovered during pre-training. Our approach, \methodname{}, regularizes RLFT to preserve multimodality while adapting to the downstream task. In the figure, each panel overlays trajectories (black) starting from the origin in a reward landscape with four symmetric goals (bright peaks). \emph{Left:} the pre-trained diffusion policy covers all four modes. \emph{Middle (DPPO):}  under two rotated reward shifts (cyan arrows), fine-tuning collapses to a subset of goals. \emph{Right (\methodname{}):} under the same shifts, the policy adapts without forgetting the modes.}

    %     \label{fig:fig1}
        
    % \end{center}%
    % }

  \vskip 0.3in
]

    % \vskip 0.3in

% this must go after the closing bracket ] following \twocolumn[ ...

% This command actually creates the footnote in the first column listing the
% affiliations and the copyright notice. The command takes one argument, which
% is text to display at the start of the footnote. The \icmlEqualContribution
% command is standard text for equal contribution. Remove it (just {}) if you
% do not need this facility.

% Use ONE of the following lines. DO NOT remove the command.
% If you have no special notice, KEEP empty braces:
    \printAffiliationsAndNotice{}  % no special notice (required even if empty)
% Or, if applicable, use the standard equal contribution text:
% \printAffiliationsAndNotice{\icmlEqualContribution}

% \begin{abstract}
% We address the problem of fine-tuning pre-trained generative policies with reinforcement learning (RL) while preserving the multimodality of their action distributions. Existing approaches often collapse diverse behaviors into a single reward-maximizing mode. We propose an unsupervised mode discovery framework that regularizes RL fine-tuning via mutual information, enabling improved task performance while maintaining behavioral diversity.
% \end{abstract}

\begin{abstract}
    We address the problem of fine-tuning pre-trained generative policies with reinforcement learning (RL) while preserving the multimodality of their action distributions. Existing methods for RL fine-tuning of generative policies (e.g., diffusion policies) improve task performance but often collapse diverse behaviors into a single reward-maximizing mode. To mitigate this issue, we propose an unsupervised mode discovery framework that uncovers latent behavioral modes within generative policies. The discovered modes enable the use of mutual information as an intrinsic reward, regularizing RL fine-tuning to enhance task success while maintaining behavioral diversity. Experiments on robotic manipulation tasks demonstrate that our method consistently outperforms conventional fine-tuning approaches, achieving higher success rates and preserving richer multimodal action distributions.
    \footnote{Website: \url{https://behav-mod-dis.github.io}}
    % Fine-tuning pre-trained generative policies with reinforcement learning (RL) can improve task performance, but it often collapses the multimodal behaviors acquired during supervised pre-training into a single reward-maximizing strategy. We propose an unsupervised mode-discovery framework that uncovers latent behavioral modes in a pre-trained generative policy and uses a mutual-information objective to make these modes identifiable. This signal yields an intrinsic reward that regularizes RL fine-tuning, promoting task adaptation while preserving behavioral diversity. Experiments on multimodal robotic manipulation tasks show that our approach improves success rates over conventional fine-tuning and better maintains multimodal action distributions.
\end{abstract}

% In the unusual situation where you want a paper to appear in the
% references without citing it in the main text, use \nocite

% \begin{figure}[t]
%     \centering
%     \includegraphics[width=0.97\columnwidth]{figures/01_intro/all_skills.pdf}
%     \caption{\textbf{Lorem ipsum dolor sit amet.} Lorem ipsum dolor sit amet, consectetur adipiscing elit. Sed do eiusmod tempor incididunt ut labore et dolore magna aliqua. TO TRY: Anymal and Avoid on top, or on second page. Try }
%     \label{fig:fig1}
%   \end{figure}

\section{Introduction}
\label{sec:intro}

% Importance and explanation of the Problem we address: Fine-tuning pre-trained diffusion policies while maintaining multimodality.
 Robotic tasks are inherently multimodal, admitting diverse yet valid strategies: a cup can be grasped from either side, a block can be rotated clockwise or counterclockwise, and kinematic redundancy enables diverse motions to reach the same goal. Preserving this diversity is crucial for policies that are robust, versatile, and adaptable to perturbations and unforeseen situations. %Policies that collapse to a single dominant strategy not only lose flexibility but also fail to generalize when the environment demands an alternative solution. 
Recent advances in policy learning show that expressive architectures such as diffusion~\citep{chi2023diffusion,kang2023efficient,psenka2023learning} and flow-based models~\citep{lipman2022flow,park2025flow} can capture multimodal behavior from demonstrations. However, their behavior remains limited by the support of the demonstration dataset. \Ac{rl} provides a natural mechanism to improve these pre-trained policies beyond demonstrations, yet \ac{rlft} often concentrates probability mass on a small set of reward-maximizing behaviors, degrading behavioral diversity~\citep{zhou2024rethinking,brown2019extrapolating}. 
The central problem we address is therefore: \textit{how can we fine-tune pre-trained generative policies with \ac{rl} while preserving the multimodality acquired during supervised pre-training?}

% What the community currently does: most overlook the multimodal aspect, destroying multimodality, some try to be clearer on what multimodality is, but they assum knowledge of the number of modalities of the policy
% Despite the community’s growing interest in policies showcasing multimodal behaviors, little work systematically examines how \ac{rl} adaptation affects multimodality. Existing research splits broadly into two directions. A first line of work focuses on \ac{rlft} of expressive policies such as diffusion or flow models to improve robustness and returns~\citep{park2025flow,ren2024diffusion,chen2024diffusion}. These approaches, however, do not account for multimodality in the action distribution, and often collapse the diverse behaviors captured during demonstration into a single dominant strategy. A second line of work begins to address multimodality more explicitly, for instance by proposing metrics to characterize it~\citep{jia2024towards} or by leveraging language conditioning and instruction diversity~\citep{black2410pi0,kim2024openvla}. Yet, these efforts either rely on assumptions that the number of modes is known in advance or that multimodality can be fully captured through language and labels. In practice, the modalities contained in the demonstration are usually unknown, and language provides only a coarse handle on behavior, which prevents precise encoding of low-level motor attributes such as magnitudes, scales, and endpoints~\citep{lee2025molmoact}. %, leaving much of the multimodal structure unaddressed during RL adaptation. 
Despite growing interest in multimodal policies, the effect of \ac{rlft} on behavioral diversity remains underexplored. Prior work falls broadly into two directions. A first line studies \ac{rlft} of generative policies such as diffusion or flow models to improve robustness and task success~\citep{park2025flow,ren2024diffusion,chen2024diffusion}. However, these methods typically optimize task reward without explicitly accounting for multimodality, and can induce mode collapse, eliminating behaviors learned from demonstration. A second line addresses multimodality more directly, for instance, by proposing diversity metrics~\citep{jia2024towards} or leveraging language conditioning and instruction diversity~\citep{black2410pi0,kim2024openvla}. Yet, these approaches often assume that the number of modes is known \emph{a priori} or that language and labels provide a sufficient description of multimodal structure. In practice, the modes present in demonstrations are rarely known, and language offers only coarse semantic control, making it difficult to preserve fine-grained motor attributes such as magnitudes, temporal scales, and endpoints~\citep{lee2025molmoact}. % , leaving much of the multimodal structure unaddressed during \ac{rl} adaptation. 

We propose \methodname{} (\emph{\textbf{B}ehavioral \textbf{M}ode \textbf{D}iscovery}), a framework for \ac{rlft} that preserves multimodal behavior by uncovering latent behavioral modes in pre-trained generative policies. 
We begin by introducing a tractable proxy for multimodality in noise-conditioned generative policies 
based on the mutual information between the latent noise and the induced trajectories. Inspired by unsupervised skill discovery~\citep{gregor2016variational, eysenbach2018diversity}, we then design a mode-discovery procedure that exposes trajectory-level modes implicitly encoded in the latent noise without annotations or prior knowledge, yielding a controllable latent representation and a practical mutual-information estimate. Finally, we use this estimate as an intrinsic reward during \ac{rlft}, regularizing fine-tuning to prevent mode collapse while maintaining task performance. Our experiments show that across diverse multimodal robotic tasks, our approach matches or improves task success relative to standard fine-tuning while preserving multimodality.

% In summary, our contributions are: \textbf{1)} A mutual-information--based proxy for quantifying multimodality in noise-conditioned generative policies, without mode labels or language supervision. \textbf{2)} An unsupervised mode-discovery procedure that identifies and makes controllable latent behavioral modes in pre-trained generative policies. \textbf{3)} A mode-preserving \ac{rlft} objective that uses a mutual-information intrinsic reward to regularize fine-tuning and prevent mode collapse while maintaining task performance. \textbf{4)} An empirical evaluation on multimodal robotic manipulation tasks demonstrating improved task success and stronger multimodality preservation compared to standard fine-tuning.
In summary, our contributions are:
\textbf{1)} A mutual-information--based proxy for measuring multimodality in noise-conditioned generative policies. % without mode labels or language supervision.
\textbf{2)} An unsupervised procedure for discovering and controlling latent behavioral modes in pre-trained generative policies.
\textbf{3)} A mode-preserving \ac{rlft} objective that uses a mutual-information intrinsic reward to regularize fine-tuning and prevent mode collapse while maintaining task performance.
\textbf{4)} An empirical evaluation on multimodal robotic manipulation and locomotion tasks showing improved success and stronger multimodality preservation over standard fine-tuning.

\section{Related Work}
\label{sec:rw}
\label{sec:fine-tuning_tech}

We briefly review two areas closely connected to our central idea and contributions. For a more comprehensive discussion on related work, see Appendix~\ref{appendix:rw}.

\paragraph{Fine-tuning of Pre-trained Generative Policies.}
\label{sec:fine-tuning_tech}

Diffusion- and flow-based models provide expressive policy parameterizations for multimodal action distributions, but \ac{rlft} is challenging due to sequential sampling and the cost of backpropagating through the generative process. Recent work addresses these issues through three main strategies: direct fine-tuning, residual policies, and steering policies. 
\emph{Direct fine-tuning} approaches adapt the network weights either by distilling the model into a one-step sampler for easier backpropagation~\citep{park2025flow, chen2024diffusion}, by casting the denoising process as a sequential decision problem~\citep{ren2024diffusion}, or by using differentiable approximations that allow offline Q-learning without backpropagating through all denoising steps~\citep{kang2023efficient}. %Despite their promise, such approaches often collapse to a single reward-maximizing mode. 
% adapt generative models to RL by directly biasing action samples toward high-value regions of the learned $Q$-function by either distilling the action sampling to enable one-step sampling which is more amenable for backpropagarion~\citep{park2025flow,chen2024diffusion}, reframing the diffusion as a decision-making problem~\cite{ren2024diffusion}, or thorugh differentiable approximations to the sampling process, enabling policy optimization via offline Q-learning while avoiding full gradient flow through the denoising trajectory~\cite{kang2023efficient}. These methods, however, often suffer from mode collapse.
\emph{Residual policy} learning methods instead freeze a pre-trained generative policy and learn a small corrective controller via \ac{rl} to address execution errors \citep{ankile2024imitation,yuan2024policy}. These techniques can yield substantial performance gains over pure \ac{il}, and potentially preserve the diversity learned from demonstrations.
\emph{Steering policy} methods instead bias the sampling process toward high-value actions without modifying the generative model itself~\cite{wagenmaker2025steering,yang2023policy,wang2022diffusion}.
% Some methods directly adjust training data or sampled actions using Q-values, either by nudging demonstration actions toward higher values~\citep{yang2023policy} or by combining diffusion with Q-learning to bias samples while staying close to the demonstration manifold~\citep{wang2022diffusion}.
% For example, ~\cite{wagenmaker2025steering} proposed to learn to control the latent noise of generative models, guiding the sampling process toward regions of the noise space whose denoised actions yield higher reward.
A common limitation of the aforementioned approaches is that they lack explicit mechanisms to preserve multimodality, and often converge to a single reward-maximizing behavior. Our work extends the steering-policy framework by discovering and controlling latent modalities of noise-conditioned generative policies, allowing us to regularize \ac{rlft} to preserve multimodality while adapting to the downstream task.

\paragraph{Skill Discovery.}
Multimodal behavior learning has also been studied through unsupervised skill discovery, which aims to acquire diverse and distinguishable behaviors without external rewards. A common approach is to maximize mutual information between a latent skill variable and visited states or trajectories~\citep{gregor2016variational, eysenbach2018diversity}. Most existing methods train policies from scratch in reward-free settings, but diversity alone often leads to skills that may be ill-suited for downstream tasks. To address this, prior work has incorporated language guidance~\citep{LGSD}, extrinsic rewards~\citep{SLIM}, or state-space regularization~\citep{CSD}. Our approach differs by leveraging a pre-trained generative model to uncover useful behaviors already encoded in demonstrations. To our knowledge, we are the first to study skill discovery in this context, treating skills as modes in the latent noise space of a pre-trained generative policy.

\section{Problem Formulation}
\label{sec:problem_formulation}

We study \ac{rlft} of a pre-trained noise-conditioned generative policy to maximize expected return while preserving the multimodality acquired from demonstrations. Here, multimodality refers to the presence of multiple distinct high-probability behaviors, which may arise from heterogeneous task goals and/or multiple feasible trajectories that solve the same goal. We model the environment as a Markov Decision Process (MDP) $(\mathcal{S}, \mathcal{A}, r, p, \gamma)$ with state space $\mathcal{S}$, action space $\mathcal{A}$, reward function $r$, transition dynamics $p$, and discount factor $\gamma \in [0,1)$. The objective of \ac{rl} is to learn a policy $\pi_\theta(a \mid s)$ maximizing the expected discounted return
\[
    J(\pi) = \mathbb{E}_{\pi}\!\left[ \sum_{t=0}^\infty \gamma^t r(s_t,a_t)\right],\]

where $s_t$ and $a_t$ are distributed according to the transition dynamics $p$ and the policy $\pi$.

% In our setting, multimodality arises within a single task: the policy’s trajectory distribution may place mass on multiple distinct solutions, where each mode corresponds to a self-consistent behavioral strategy that successfully accomplishes the objective. 

%%%%%%%%%%%%%%%%%%%%%%%%%%%%%%%%%% OLD %%%%%%%%%%%%%%%%%%%%%%%%%%%%%%%%%%%%%%%
% \paragraph{Pre-trained Multimodal Action Distirbutions.} %\al{Here we discuss mode in the trajectories but not in the action distribution.}
% We assume access to an offline \new{}dataset of state-action pairs 
% ${\mathcal{D} = \{(s_t, a_t)\}_{i=1}^N}$  collected by diverse behavioral policies (e.g.\, human demonstrations), which is used to pre-train a generative policy $\pi_\theta(a \mid s)$ via imitation learning.  We define a mode of the policy $\pi_\theta$ as a latent variable $z \in \mathcal{Z}$, implicitly encoded in the pre-trained multimodal policy, which induces the trajectory distribution ${ p^\pi(\tau \mid z) = p(s_0)\prod_{t=0}^{T-1} \pi(a_t \mid s_t, z)\, p(s_{t+1}\mid s_t,a_t)}$, so that different values of $z$ correspond to distinct self-consistent strategies that solve the task, i.e., different modes. We assume the original modes $z\in\mathcal{Z}$ contained in the datasets are unknown. When relevant, we make explicit the dependence of the generative policy on its input noise variable  $w \in \mathcal{W}$ by denoting it as $\pi_\theta(a \mid s, w)$.

%%%%%%%%%%%%%%%%%%%%%%%%%%%%%%%%%%%%%%%%%%%%%%%%%%%%%%%%%%%%%%%%%%%%%%

\paragraph{Pre-Trained Multimodal Generative Policies.}%Action Distirbutions.} %\al{Here we discuss mode in the trajectories but not in the action distribution.}
We assume access to an offline demonstration dataset ${\mathcal{D}
= \{\tau^{(i)}\}_{i=1}^N, \tau^{(i)} = (s^{(i)}_0,a^{(i)}_0,\dots,s^{(i)}_{T_i},a^{(i)}_{T_i})}$, collected with diverse behavioral policies (e.g.\, human demonstrations), used to pre-train a generative policy $\pi_\theta$ via imitation learning. We further assume that the resulting policy is \emph{multimodal}, in the sense that it assigns non-negligible probability mass to multiple distinct behaviors reflected in the dataset~\cite{chi2023diffusion}. When relevant, we make explicit the dependence of the generative policy on its input noise variable $w \in \mathcal{W}$ by denoting it as $\pi_\theta(a \mid s, w)$.

\paragraph{Behavioral Modes.}
We represent behavioral modes with a discrete latent variable $z \in \mathcal{Z}$, where each value indexes a policy instance inducing the trajectory distribution
$
p^\pi(\tau \mid z) = p(s_0)\prod_{t=0}^{T-1} \pi(a_t \mid s_t, z)\, p(s_{t+1}\mid s_t,a_t).
$
Different $z$ correspond to distinct self-consistent strategies (behavioral modes) in the data. Although we focus on discrete $z$, continuous latents (e.g., $z \in \mathbb{R}^d$) are also possible. We assume the true modes are unknown but implicitly encoded in the pre-trained multimodal policy.

%Following~\citep{wagenmaker2025steering}, we assume the pre-trained policy is \emph{steerable} through a latent input noise variable $w \in \mathcal{W}$. %$w \in \mathcal{W} := \mathbb{R}^d$. 
% Specifically, given $w$, a diffusion or flow-based policy can be rewritten by making explicit the dependence of the action on the input noise as $\pi_{\theta}(a \mid s, w)$. 

% \paragraph{Steerability Assumption.} We assume that the pre-trained generative policy $\pi_\theta(a \mid s, w)$ is \emph{steerable}, in the sense that its behavior can be systematically influenced through the choice of the latent noise input $w \in \mathcal{W}$. A \emph{steering policy} $\pi_\psi^{\mathcal{W}}(w \mid s)$, parameterized by $\psi$, selects which point $w$ in the latent-noise space to denoise, %acts in the latent space $\mathcal{W}$ and  
% biasing the generative model toward different behavioral modes~\citep{wagenmaker2025steering}. %By choosing which point in the latent-noise space to denoise, the steering policy can bias the generative model toward different behavioral modes.

\paragraph{Steerability Assumption.} We assume that the pre-trained generative policy $\pi_\theta(a \mid s, w)$ can be \emph{steered} by controlling its latent noise input $w \in \mathcal{W}$. We introduce a \emph{steering policy} $\pi_\psi^{\mathcal{W}}(w \mid s)$ that maps the current state to a distribution over noise inputs, thereby biasing the actions produced by $\pi_\theta$ toward different behaviors~\citep{wagenmaker2025steering}. 

%selects noise variables conditioned on the current state, thereby indirectly shaping the action distribution of $\pi_\theta$. selecting which point in the latent-noise space to denoise, we can steer the action produced by $\pi_\theta$ to a desired mode.  %Steerability also requires that the generative process preserves dependencies between the input noise and the generated actions~\citep{domingo2024adjoint}. %; in particular, the noise schedule must not be memoryless.
%We assume that the pre-trained policy is \emph{steerable} through its latent input noise variable $w \in \mathcal{W}$. A \emph{steering policy} $\pi_\psi^{\mathcal{W}}(w \mid s)$  parameterized by $\psi$ is a policy that selects latent noise variables, thereby indirectly controlling the behavior of $\pi_\theta$. While we do not assume that the pretrained policy has to be deterministic to be steerable, the noise schedule must not be memoryless~\citep{domingo2024adjoint}, meaning that the dependency between noise variables and the generated samples is preserved throughout the generation process.

\paragraph{Fine-tuning Objective.}
Our goal is to fine-tune the policy $\pi_{\theta}$ in order to (i) maximize the expected return and (ii) preserve the multimodality present in the pre-trained policy $\pi_\theta$. 
We formalize this as the regularized optimization problem
\[
    \max_\theta \; J(\pi_{\theta}) + \lambda \, \mathcal{M}(\pi_{\theta}),
\]
where $\mathcal{M}$ denotes a multimodality measure of the generative policy, %induced action distribution, 
and $\lambda \ge 0$ balances task performance with diversity preservation. Importantly, we do not assume prior knowledge of the number of modes in $\pi_\theta$. %, nor do we rely on differentiability of the policy to locate local maxima.
%Designing a practical measure for multimodality under these constraints is a central contribution of this work.

% \paragraph{Fine-tuning Objective.}
% Let $\pi_{\theta_0}$ denote a pre-trained generative policy. Our goal is to fine-tune its parameters $\theta$ to (i) maximize expected return and (ii) preserve the multimodality present in $\pi_{\theta_0}$. We formalize this as
% \[
%     \max_{\theta}\; J(\pi_{\theta}) + \lambda \, \mathcal{M}(\pi_{\theta}),
% \]
% where $\mathcal{M}$ measures multimodality of the generative policy and $\lambda \ge 0$ trades off task performance and diversity preservation. In the following, we introduce auxiliary models (e.g., a steering policy) to obtain a tractable estimate of $\mathcal{M}(\pi_{\theta})$ and an intrinsic reward for optimization.
\section{Mode Discovery for \ac{rl} Fine-tuning}
\label{sec:method}

To prevent mode collapse during \ac{rlft} of pre-trained generative policies, our method explicitly discovers and controls latent behavioral modes of the policy. This allows us to regularize \ac{rlft} with a trajectory-diversity objective while maintaining task performance. Our framework comprises three components:
(i) a tractable mutual-information proxy $\mathcal{M}(\cdot)$ for quantifying multimodality;
(ii) an unsupervised mode-discovery mechanism that reparameterizes a steering policy $\pi_{\psi}^{\mathcal{W}}(w \mid s)$ with a latent variable $z \in \mathcal{Z}$ to uncover and control modes while estimating multimodality;
(iii) a mutual-information intrinsic reward combined with task rewards to regularize \ac{rlft}, explicitly retaining diverse behaviors. We describe each component below.

\begin{figure*}[t]
    \centering
    \includegraphics[width=0.98\textwidth]{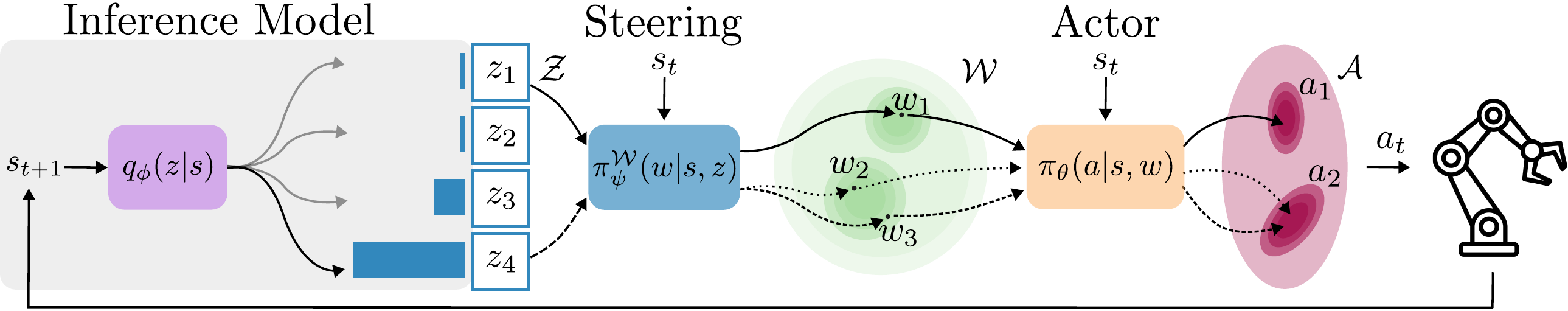}
    \caption{ 
    \textbf{Behavioral Mode Discovery via Latent Reparameterization of a Steering Policy.}  
Joint training of an inference model $q_\phi(z \mid s)$ and a steering policy $\pi^{\mathcal{W}}_\psi(w \mid s,z)$ to uncover latent modes $z \in \mathcal{Z}$ of a frozen policy $\pi_\theta(a \mid s,w)$.  
The steering policy structures the latent noise space by sampling $w\sim \mathcal{W}$ according to $z$, biasing towards diverse $a \in \mathcal{A}$. The inference model identifies $z$ given the visited state $ s _ {t+1} $, estimating $I(Z; S)$ (Eq.~\ref{eq:variational_MI}). This estimate is used as an intrinsic reward during mode discovery and \ac{rlft}.
    }
    \label{fig:method}
\end{figure*}

\subsection{Measuring Multimodality in Generative Policies}

% \reb{
%We assume access to a pre-trained generative policy that is inherently multimodal, reflecting the diverse behavioral strategies present in the demonstration dataset. 
% Classical definitions of multimodality characterize modes as local maxima of the explicit action distribution ${\pi_\theta(a \mid s)}$~\citep{stoepker2016testing}, but this view becomes impractical for diffusion or flow-based policies, whose action densities are not available in closed form.
% Under the assumption taht  the multimodality of the pre-trained policy $\pi_\theta(a \mid s)$ is realized through this latent noise, in 
% the sense that there exists a set of states of non-zero measure for which different values of $w$ 
% induce different action distributions, i.e.,  $\pi_\theta(\cdot \mid s, w_1) \neq \pi_\theta(\cdot \mid s, w_2)$ for some $w_1 \neq w_2$.  it is possible to show that the latent $W$ and the action $A$ are statistically dependent given $S$, and 
% therefore the conditional mutual information is strictly positive
% (proof in Appendix~\ref{sec:proof_MI_positive})
% \[
%     I(W; A \mid S) = \mathbb{E}_{s \sim p(s)}\!\left[D_{\mathrm{KL}}\!\big(\pi_\theta(a \mid s, w)\,\|\, p(a \mid s)\big)\right] > 0,
% \]
% where $p(a \mid s) = \mathbb{E}_{w \sim \mathcal{W}}[\pi_\theta(a \mid s, w)]$ is the marginal action distribution. 

Classical notions of multimodality define modes as local maxima of an explicit action density 
$\pi_\theta(a \mid s)$~\citep{stoepker2016testing}. This definition becomes impractical for 
modern generative policies such as diffusion or flow-based models, whose action densities are unavailable in closed form.
We instead characterize multimodality through the latent noise $\mathcal{W}$ of a pre-trained 
generative policy ${\pi_\theta(a \mid s, w)}$ with $w \sim \mathcal{W}$. 
We assume that multimodality is realized through this latent representation, in the sense that 
there exists a set of states of non-zero measure for which different latent values induce distinct 
action distributions, i.e.,
${\pi_\theta(\cdot \mid s, w_1) \neq \pi_\theta(\cdot \mid s, w_2)}$ for some $w_1 \neq w_2$. Under this assumption, it is possible to show that the latent variable $W$ and the action $A$ are statistically dependent conditioned on the state $S$. 
As a consequence, the conditional mutual information between $W$ and $A$ given $S$ is strictly positive (proof in Appendix~\ref{sec:proof_MI_positive})
\[
\tiny
I(W; A \mid S)
= \mathbb{E}_{s,w}\!\left[
    D_{\mathrm{KL}}\!\big(\pi_\theta(\cdot \mid s, w)\,\|\, p(\cdot \mid s)\big)
\right] > 0,
\]
% \begin{equation}
% {\small
% \begin{aligned}
% I(W;A\mid S)
% &=
% \mathbb{E}_{s,w}
% \Big[
% D_{\mathrm{KL}}\big(
% \pi_\theta(\cdot\mid s,w)
% \,\|\, 
% p(\cdot\mid s)
% \big)
% \Big] \\
% &> 0,
% \qquad
% s\sim p(s),\; w\sim p(w\mid s).
% \end{aligned}
% }
% \end{equation}
with $s\sim p(s),\; w\sim p(w\mid s)$ and where ${p(a \mid s) = \mathbb{E}_{w \sim \mathcal{W}}[\pi_\theta(a \mid s, w)]}$ is the marginal action distribution.

Although $I(W; A \mid S) > 0$ provides a valid proxy for multimodality in the pre-trained action
distribution, it does not guarantee multimodality in the induced trajectories during fine-tuning.
Action-level diversity does not necessarily translate into trajectory-level diversity, as distinct
actions may lead to similar state transitions under the environment dynamics. For instance, in a
kinematically redundant manipulator, multiple joint-space actions can yield nearly identical
end-effector motions, collapsing multimodality in action space into a single mode in trajectory
space. %Consequently, this quantity alone is insufficient to guarantee the preservation of behavioral diversity during fine-tuning.} 

To address this limitation, we quantify multimodality at the trajectory level via the mutual information between the latent noise variable $W$ and visited states, $I(W; S)$, which captures diversity in the induced behaviors. Under the environment dynamics, actions mediate the influence of $W$ on future states, forming the Markov chain $W \rightarrow A \rightarrow S'$, and the data processing inequality yields $I(W; S') \le I(W; A)$. Consequently, state-based mutual information provides a conservative estimate of multimodality, retaining only distinctions that persist through the dynamics. Henceforth, with slight abuse of notation, $I(W; S)$ denotes the mutual information between $W$ and the state-visitation distribution induced by trajectories, omitting time indices for clarity.

This perspective is inspired by the unsupervised skill discovery literature, which similarly
emphasizes trajectory diversity through mutual information between latent variables and visited
states~\citep{gregor2016variational,eysenbach2018diversity,sharma2019dynamics}. Unlike prior work,
however, which seeks to \emph{learn} a latent skill space from scratch, our setting starts from a
pre-trained generative policy whose latent noise space $\mathcal{W}$ already encodes multiple
behavioral modes. 
% These modes, however, are implicit in the structure of $\mathcal{W}$.
% These modes, however, are not explicitly represented in $\mathcal{W}$ as the noise variable $w \in \mathcal{W}$ is
% time-local and varies at each time step, whereas the behavioral modes we are interested in emerge at the trajectory level. As a result, the mode structure is implicit in
% $\mathcal{W}$ and not directly accessible through $W$ alone.
% In the next section, we introduce a method to uncover and control
% these latent behavioral modes, effectively lifting step-wise noise into coherent trajectory-level
% structure.
However, these modes are not explicitly represented in $\mathcal{W}$: the noise variable $w \in \mathcal{W}$ is time-local and varies at each step, whereas the behavioral modes of interest emerge at the trajectory level. Consequently, mode structure is only implicit in $\mathcal{W}$ and not directly accessible from $\mathcal{W}$ alone. In the next section, we introduce a method to uncover and control these latent modes, lifting step-wise noise into coherent trajectory-level structure.

\subsection{Behavioral Mode Discovery of Generative Policies}

To explicitly uncover and organize the behavioral modes implicit in the latent noise space
$\mathcal{W}$, we introduce \methodname{} (\emph{\textbf{B}ehavioral \textbf{M}ode \textbf{D}iscovery}). Specifically, \methodname{} reparameterizes a steering policy with a latent variable $z \in \mathcal{Z}$ that organizes $\mathcal{W}$ into trajectory-level modes. 

\paragraph{Latent Reparameterization.}

Let $\pi_\psi^{\mathcal{W}}(w\mid s)$ denote a steering policy that selects the latent noise $w\in\mathcal{W}$ seeding the generation process. We introduce a latent variable $z\in\mathcal{Z}$ and define a latent-conditioned steering policy $\pi_\psi^{\mathcal{W}}(w\mid s,z)$, which induces the family of action distributions
\begin{equation*}
    \pi_{\theta,\psi}(a\mid s,z) \;=\; \int \pi_\theta(a\mid s,w)\,\pi_\psi^{\mathcal{W}}(w\mid s,z)\,dw.
    \label{eq:steered_policy}
\end{equation*}
% Under this reparameterization, multimodality, \reb{or trajectory-level diversity,} can be measured by the mutual information \reb{$I(Z;S)$,  which places us back in the standard skill-discovery setting and allows us to leverage this class of methods to optimize the steering policy.
% By training the steering policy to maximize $I(Z;S)$, we encourage different values of $z$ to induce distinct state–trajectory distributions by steering the sampling of $\mathcal{W}$, thereby uncovering the modes implicitly encoded in the latent noise space.} Distinct values of $z$ can therefore select different behaviors (modes) encoded by the \emph{fixed} pretrained policy $\pi_\theta(a\mid s,w)$ through the steering policy $\pi_\psi^{\mathcal{W}}(w\mid s,z)$. 

Under this reparameterization, multimodality, or trajectory-level diversity, can be quantified by
the mutual information $I(Z; S)$ between the latent code and the visited states. Combined with the
assumption that $I(W; S) > 0$, this formulation allows us to discover the behavioral modes implicitly
encoded in the latent noise space by training a steering policy to maximize $I(Z; S)$. Maximizing
this objective encourages different values of $z$ to steer the sampling of $\mathcal{W}$ toward
distinct state–trajectory distributions, thereby inducing coherent and identifiable behavioral
modes. %However, $I(Z; S)$ is not directly tractable to optimize. 

% While $I(Z; S)$ defines the desired objective, it is not directly tractable to optimize. In the
% following, we therefore derive a variational lower bound that yields a practical learning signal
% for training the steering policy. Distinct values of $z$ can then be used to selectively invoke
% different behaviors (modes) of the \emph{fixed} pre-trained policy
% $\pi_\theta(a \mid s, w)$ through the latent-conditioned steering policy
% $\pi_\psi^{\mathcal{W}}(w \mid s, z)$.

\paragraph{Variational Lower Bound.}
To optimize $\pi_\psi^{\mathcal{W}}$ via $I(Z;  S)$
we follow standard practice in skill discovery~\citep{eysenbach2018diversity}, which derives a variational lower bound on $I(Z;  S)$ by introducing an inference model $q_\phi(z \mid s)$ that approximates the posterior over latent codes. This yields

\begin{align}
    I(Z; S) 
    &= \mathbb{E}_{p(s,z)} \left[ \log \frac{p(z \mid s)}{p(z)} \right] \nonumber\\
    &\geq \mathbb{E}_{p(s,z)} \left[ \log q_\phi(z \mid s) - \log p(z) \right],
    \label{eq:variational_MI}
\end{align}

where $p(s,z)$ denotes the joint distribution induced by sampling 
$z \sim p(z)$ and rolling out a policy $\pi(a \mid s, z)$ in the environment.
We refer to ~\cite{eysenbach2018diversity} for the derivation of Equation~\ref{eq:variational_MI}.

\paragraph{Steering Policy Training.} The log-posterior likelihood in Equation~\ref{eq:variational_MI} is used as an intrinsic reward for training the steering policy $\pi_\psi^{\mathcal{W}}$, aligning the RL objective with the identifiability of $z$. This establishes a feedback loop in which $q_\phi$ improves at classifying latent codes while the $\pi_\psi^{\mathcal{W}}$ is incentivized to select $w\in \mathcal{W}$ so as to induce trajectories that are consistent and discriminable. 

As a result, the proposed method explicitly uncovers and organizes the behavioral modes implicit in
the pre-trained policy, yielding a trajectory-level representation that both quantifies
multimodality, via Equation~\ref{eq:variational_MI}, and allows different modes to be controlled via $z$. An overview of the mode
discovery process is shown in Figure~\ref{fig:method}.

% The proposed method simultaneously enables controllability over the latent space and the measurement of multimodality at the trajectory level. An overview of the method for mode discovery is illustrated in Figure~\ref{fig:method}.

\subsection{Regularized Policy Fine-tuning}
% Recall from Section~\ref{sec:problem_formulation} that we formulated the fine-tuning objective as maximizing task return regularized by a multimodality measure $\mathcal{M}$.  
% The variational lower bound introduced in Equation~\ref{eq:variational_MI} provides a tractable instantiation of $\mathcal{M}$, which allows us to regularize the fine-tuning objective as define in Section~\ref{sec:problem_formulation} by leveraging it as an intrinsic signal to preserve multimodality during \ac{rlft}.  We define the augmented reward

We leverage the variational lower bound from Equation~\ref{eq:variational_MI} as a tractable instantiation of $\mathcal{M}$ and use it as an intrinsic signal to regularize the fine-tuning objective defined in Section~\ref{sec:problem_formulation}. We define the augmented reward
\begin{equation}
    r_{\text{total}}(s,z) = r_{\mathrm{env}}(s,a) 
    + \lambda \Big( \log q_\phi(z \mid s) - \log p(z) \Big),
\end{equation}
where $r_{\mathrm{env}}$ is the environment reward and $\lambda \geq 0$ balances task performance with multimodality preservation.
Directly combining task and intrinsic rewards may lead to mode collapse if the task signal dominates before the multimodal structure is discovered. We therefore adopt a two-stage scheme: first, the steering policy is trained with the intrinsic objective alone to uncover the modes of the pretrained policy; then, the environment reward is introduced to guide fine-tuning toward high-return behaviors without destroying diversity.  During mode discovery, we also apply a short-to-long horizon curriculum to stabilize learning.
Algorithm~\ref{alg:mode_finetuning} summarizes the procedure (details in Appendix~\ref{appendix:algorithm}). While we use PPO~\citep{schulman2017proximal} in our experiments, the framework is agnostic to the underlying RL algorithm.

% \paragraph{Broader Use of the Framework.}
% While the formulation above fine-tunes the generative model indirectly via the steering policy, the framework is not limited to this case. Because the steering head actively explores diverse input-noise regions while pursuing reward, it can be combined with direct fine-tuning of the diffusion weights, acting as a structured exploration agent. At test time, the steering policy can either be retained—allowing explicit control over the behavioral mode—or removed, reverting to random sampling from the noise prior. Furthermore, while outside the present study, the learned latent space $\mathcal{Z}$ provides a natural basis for grounding semantic labels (e.g.\, language instructions) when limited annotations are available. 

\paragraph{Broader Use of the Framework.}
\textbf{1)} We discussed fine-tuning the generative model indirectly through the steering policy. However, the proposed framework is agnostic to the choice of fine-tuning mechanism: the steering head can be used as a form of structured exploration over the noise space and coupled with direct optimization of the diffusion parameters. \textbf{2)} At test time, the steering policy can either be retained to enable explicit control over behavioral modes or removed to revert to sampling from the noise prior. \textbf{3)} While outside the scope of this work, the learned latent space $\mathcal{Z}$ also provides a natural basis for grounding semantic labels (e.g., language instructions) when limited annotations are available.

\begin{algorithm}[tb]
\small
\caption{Mode Discovery and Fine-Tuning of Generative Policies}
\label{alg:mode_finetuning}
\begin{algorithmic}
\STATE {\bfseries Input:} pre-trained diffusion policy $\pi_{\theta}(a\mid s,w)$;
steering policy $\pi^{\mathcal{W}}_{\psi}(w\mid s,z)$;
inference model $q_{\phi}(z\mid s,a)$;
critic $V_{\omega}(s,z)$;
latent prior $p(z)$;
epochs $E$, episodes $N$, warm-up epochs $E_{\text{wp}}$;
horizon scheduler $H_{\text{schedule}}(e)$

\STATE {\bfseries Init:} $\psi,\phi,\omega$; set $\lambda \ge 0$

\FOR{$e = 1$ {\bfseries to} $E$}
  \FOR{$n = 1$ {\bfseries to} $N$}
    \STATE $H \gets H_{\text{schedule}}(e)$
    \STATE Sample $z \sim p(z)$ and rollout the policy:
    \STATE $w_t \sim \pi^{\mathcal{W}}_{\psi}(w\mid s_t,z)$
    \STATE $a_t \sim \pi_{\theta}(a\mid s_t,w_t)$
    \STATE $s_{t+1}\sim p(\cdot\mid s_t,a_t)$
    \STATE Intrinsic reward:
    \STATE $r^{\text{int}}_t \gets \lambda(\log q_{\phi}(z\mid s_{t+1}) - \log p(z))$
    \IF{$e < E_{\text{wp}}$}
      \STATE $r^{\text{tot}}_t \gets r^{\text{int}}_t$
    \ELSE
      \STATE $r^{\text{tot}}_t \gets r_{\text{env}}(s_t,a_t) + r^{\text{int}}_t$
    \ENDIF
  \ENDFOR
  \STATE Update steering policy and critic using $r^{\text{tot}}_t$ (PPO):
  \STATE $\min_{\psi,\omega} L_{\pi}^{\text{PPO}}(\psi) + c_V L_V(\omega) + c_{\mathcal{H}} L_{\mathcal{H}}(\psi)$
  \STATE Update inference model:
  \STATE $\min_{\phi} L_q(\phi) = -\mathbb{E}[\log q_{\phi}(z\mid s)]$
\ENDFOR
\end{algorithmic}
\end{algorithm}

\section{Experiments}

Our experimental evaluation is centered around three main questions: (i) Is the mutual information in Equation~\ref{eq:variational_MI} a valid estimate of multimodality? (ii) Do existing fine-tuning techniques preserve multimodality? (iii) Does our method retain multimodality without sacrificing task performance? We evaluate \methodname{} across increasingly challenging domains: a 2D Gaussian-mixture reward landscape; multimodal manipulation in ManiSkill~\citep{tao2024maniskill3} and D3IL~\citep{jia2024towards}; and high-dimensional and sequential control (ANYmal locomotion, Franka Kitchen~\citep{fu2020d4rl}). We also ablate key design choices.

\paragraph{Baselines.}
% Following the characterization introduced in Section~\ref{sec:fine-tuning_tech},
We benchmark our approach against representative strategies for on-policy fine-tuning of generative policies, focusing on diffusion models but noting that analogous evaluations apply to flow-matching policies. We include \texttt{DPPO}~\citep{ren2024diffusion}, as a direct finetuning approach, Policy Decorator~\citep{yuan2024policy} as a residual fine-tuning approach (\texttt{RES}), and we consider \texttt{DSRL}~\cite{wagenmaker2025steering} as a steering policy-based approach. For \texttt{DPPO}, we select DDIM parameterization to ensure steerability while balancing $\eta>0$ and the number of diffusion steps for stable weight updates. We further include a DDPM-based version that samples with the full denoising chain and fine-tunes the last $10$ diffusion steps for completeness (\texttt{DPPO[10]}).
Importantly, our approach is orthogonal to these categories and can be combined with any of them.
Therefore, we report results both for the standalone baselines and their variants augmented with our multimodality regularizer, denoted as \texttt{X[\methodname{}]}, where \texttt{X} indicates the corresponding baseline. Full implementation details for all baselines and their regularized variants are provided in Appendix~\ref{appendix:implementation_details}.

\paragraph{Evaluation Metrics.}
% We assume access in simulation to the ground truth modes of the trajectories executed by the policy, and we evaluate fine-tuned policies along two axes: \emph{task success} and \emph{behavioral diversity}.

We assume access to the ground truth modes of the trajectories executed by the policy in simulation, and we evaluate fine-tuned policies along two axes: \emph{task success} and \emph{behavioral diversity}. We report the overall success rate $\mathrm{SR}$, and two mode-aggregated measures of the success rate to integrate behavioral diversity: the success rate weighted for each mode
$
{\mathrm{SR}_{\text{M}}=\tfrac{1}{K}\sum_{i=1}^K \mathrm{SR}_i,}
$
which guards against degenerate solutions (e.g., \(100\%\) success on a single mode but failure on others), and
mode coverage ${\mathrm{mc}@\tau=\tfrac{1}{K}\sum_{i=1}^K \mathbf{1}\{\mathrm{SR}_i \ge \tau \}}$,
the fraction of modes solved above threshold \(\tau=0.8\). 
We further compute the entropy of the empirical distribution over modes among all rollouts: $H(\pi)=-\sum_{i=1}^K p_i\log p_i$, where $p_i$ is the fraction of episodes in mode $i$. %A higher entropy reflects more balanced usage of the available modes, whereas a reduction after fine-tuning is indicative of mode collapse. 
All metrics are computed from $N=1024$ evaluation episodes with fixed seeds for fair comparison, and we report both the mean and standard deviation over three independent runs with different random seeds. 

\begin{figure*}[t]
  \centering
  \includegraphics[width=0.88\textwidth]{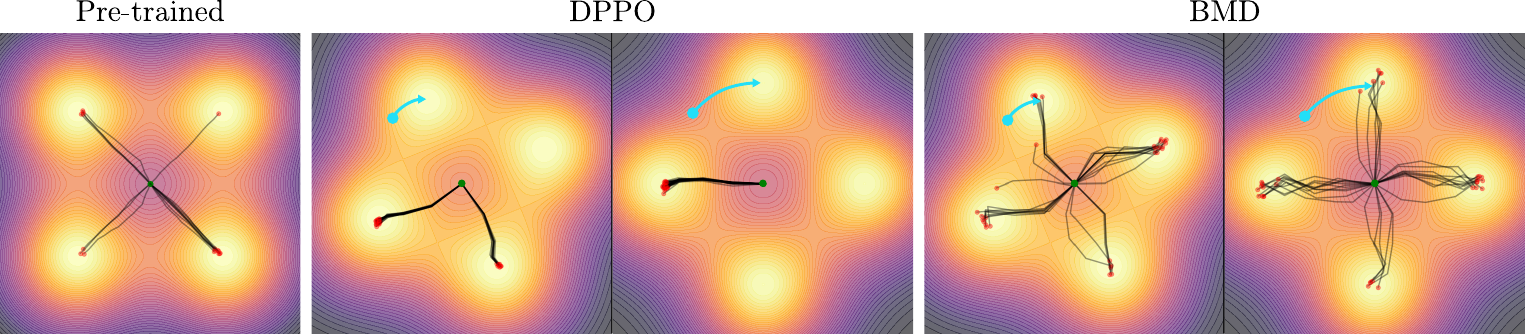}
  \caption{ 
  \textbf{Qualitative trajectories for two rotated reward landscapes.} Each panel overlays policy rollouts originating from the center of the environment, in a four-goal reward landscape (bright peaks). \emph{Left:} the pre-trained policy covers all modes. \emph{Middle:} \texttt{DPPO} fine-tuning collapses to a subset of modes after reward rotation. \emph{Right:} \texttt{\methodname{}} fine-tunes the policy while maintaining full mode coverage.}
  \label{fig:multimodality_comparison}
\end{figure*}

\subsection{2D Gaussian Mixture}
\label{sec:2D_Gaussians}
To study the proposed questions in a controlled setting, we designed a 2D navigation environment where the reward landscape is a mixture of $4$ Gaussians centered at fixed goal locations.
% The agent starts at the origin and moves by 2D displacements, receiving rewards defined by the Gaussian peaks.  
%We study both a \emph{balanced} variant, where all goals have equal weight, and an \emph{unbalanced} variant, where mode weights are randomized and normalized via a softmax, producing uneven but non-degenerate reward magnitudes.  
Further details and illustrations about the environment are provided in Appendix~\ref{appendix:gaussian_env}.

\paragraph{Mutual Information as a Proxy for Multimodality.}  
We first evaluate if mutual information provides a reliable proxy for multimodality. To this end, we construct $3$ expert datasets in the Gaussian-mixture environment containing one, two, or four goal modes, and train separate policies on each dataset. Visualizations of the demonstrations,  rollouts of policies trained on each dataset alongside Monte Carlo estimates of their action distributions are shown in the Appendix in Figure~\ref{fig:q1-env-trajectories}. %). Figure~\ref{fig:q1-env-trajectories} shows rollouts of policies trained on each dataset (top row) alongside Monte Carlo estimates of their action distributions at $t=0$ (bottom row). 
% As expected, policies trained on multimodal datasets exhibit multi-peaked action distributions.
We hypothesize that a valid multimodality metric $\mathcal{M}$ should increase with the number of modes. To test this, we estimate $\mathcal{M}$ with Equation~\ref{eq:variational_MI} by jointly training a steering policy and an inference model $q_\phi$ over a discrete latent space $\mathcal{Z}=\{0,1,2,3\}$. 

% \renewcommand{\arraystretch}{1.} % Adjust row spacing
% \setlength{\arrayrulewidth}{0.05pt} % Adjust line thickness

% Wrapped table: placed here so it is anchored next to text.
\begin{wraptable}[8]{r}{0.48\columnwidth}
  \vspace{0.2\baselineskip}
  \centering
  \caption{Mutual information and inference-model loss.}
  \label{tab:mi_disc}
  \begin{tabular}{lcc}
  \toprule
  \textbf{Policy} & \textbf{$\mathcal{M}$} & $\mathbf{L_q(\phi)}$ \\
  \midrule
  \tablerowcolors
  1 mode  & $0.00 \scriptscriptstyle\pm 0.00$  & $1.38 \scriptscriptstyle\pm 0.00$ \\
  2 modes & $0.58 \scriptscriptstyle\pm 0.02$  & $0.82 \scriptscriptstyle\pm 0.02$ \\
  4 modes & $1.06 \scriptscriptstyle\pm 0.00$  & $0.33 \scriptscriptstyle\pm 0.02$ \\
  \bottomrule
  \end{tabular}
  \vspace{-0.7\baselineskip}
\end{wraptable}

Table~\ref{tab:mi_disc} reports the estimated mutual information and inference-model loss from $q_\phi$ on $1024$ trajectories with randomly sampled $z\in\mathcal{Z}$. As expected, mutual information increases with the number of modes, while the inference loss decreases, indicating that $q_\phi$ reliably discriminates latent codes when multimodality exists. These results support mutual information as a proxy for measuring multimodality, when such multimodality exists. Figure~\ref{fig:q1-modes-by-z} in the Appendix further shows that conditioning the steering policy on individual $z$ produces distinct, coherent trajectories, confirming that the latent space organizes noise into meaningful and controllable behavioral modes.

\renewcommand{\arraystretch}{1.10} % Adjust row spacing
\setlength{\arrayrulewidth}{0.04pt} % Adjust line thickness

\begin{table}[!t]
  \centering
  \caption{Evaluation on the Gaussian-mixture environment under fine-tuning reward landscape \textbf{G1}.}
  \label{tab:toy_results_g1}
  \resizebox{\columnwidth}{!}{%
  \begin{tabular}{lcc>{\columncolor{lightblue!20}}c c}
  \toprule
   \textbf{Method} & $\mathrm{SR}$ $(\uparrow)$ & $\mathrm{SR}_{\mathrm{M}}$ $(\uparrow)$ & $\mathrm{mc}@80$ $(\uparrow)$ & $\mathcal{{H}}$ $(\uparrow)$ \\
  \midrule
  \rowcolor{lightgray!50}\texttt{RES} & $0.98 \scriptscriptstyle\pm 0.02$ & $0.98 \scriptscriptstyle\pm 0.02$ & \cellcolor{lightblue!20}$4.00 / 4$  & $1.00 \scriptscriptstyle\pm 0.00$  \\
  \texttt{DSRL} & $1.00 \scriptscriptstyle\pm 0.00$ & $0.25 \scriptscriptstyle\pm 0.00$ & $1.00 / 4$& $0.00 \scriptscriptstyle\pm 0.00$  \\
  \rowcolor{lightgray!50}\texttt{DPPO} & $1.00 \scriptscriptstyle\pm 0.00$ & $0.58 \scriptscriptstyle\pm 0.12$ & \cellcolor{lightblue!20}$2.33 / 4$& $0.40 \scriptscriptstyle\pm 0.03$  \\
  \texttt{DPPO[10]} & $0.66 \scriptscriptstyle\pm 0.32$ & $0.16 \scriptscriptstyle\pm 0.08$ & $0.33 / 4$& $0.00 \scriptscriptstyle\pm 0.00$  \\
  \midrule
  \rowcolor{lightgray!50}\texttt{RES[\methodname{}]} & \textbf{$1.00 \scriptscriptstyle\pm 0.00$} & \textbf{$1.00 \scriptscriptstyle\pm 0.00$} & \cellcolor{lightblue!20}\textbf{$4.00 / 4$} &\textbf{ $0.99 \scriptscriptstyle\pm 0.00$} \\
  \texttt{DSRL[\methodname{}]} & $0.33 \scriptscriptstyle\pm 0.47$ & $0.33 \scriptscriptstyle\pm 0.47$ & $1.33 / 4$ & $0.46 \scriptscriptstyle\pm 0.41$  \\
  \rowcolor{lightgray!50}\texttt{DPPO[\methodname{}]} & $1.00 \scriptscriptstyle\pm 0.00$ & $1.00 \scriptscriptstyle\pm 0.00$ & \cellcolor{lightblue!20}$4.00 / 4$ & $0.99 \scriptscriptstyle\pm 0.00$  \\
  \bottomrule
  \end{tabular}%
  }
  \end{table}

\paragraph{Fine-Tuning Evaluation.}
We evaluate the performance of existing fine-tuning methods against \methodname{} in preserving multimodality when the reward used for adaptation differs from the one implicitly encoded in the demonstrations.  
To simulate this mismatch, we define two shifted reward landscapes obtained by rotating the Gaussian peaks used for demonstrations by ${\tfrac{\pi}{8}}$ and ${\tfrac{\pi}{4}}$, denoted as \textbf{G1} and \textbf{G2}.  

Tables~\ref{tab:toy_results_g1} and~\ref{tab:toy_results_g2} report results for \ac{rlft} with \textbf{G1} and \textbf{G2}, respectively. Among baselines, the residual policy (\texttt{RES}) performs best, solving \textbf{G1} and retaining two modes in \textbf{G2}, as tuning the magnitude of the residual corrections to small values helps preserve multimodality. Gradient-based methods (\texttt{DPPO}, \texttt{DPPO[10]}) improve task success, with \texttt{DPPO} aided by extra denoising steps, but both collapse to fewer modes when rewards diverge from demonstrations. Steering alone (\texttt{DSRL}) is least effective, with limited success in \textbf{G1} and full performance drop in \textbf{G2}. %Multimodality retention further degrades in the unbalanced setting, where reward bias toward specific goals causes even strong baselines to collapse to dominant peaks. 
These results suggest that standard fine-tuning techniques fail to fully preserve the original multimodality when the reward landscape deviates from the distribution underlying the demonstrations.

In contrast, the \texttt{[\methodname{}]} variants preserve diversity more consistently: \texttt{RES[\methodname{}]} recovers full mode coverage across both goals, and \texttt{DPPO[\methodname{}]} shows similar gains. For the \texttt{DSRL} method,  \texttt{[\methodname{}]} mitigates but does not prevent collapse, indicating that additional fine-tuning of the original policy is required in this case. Overall, \methodname{} regularized fine-tuning prevents the RL objective from biasing the policy toward fewer behaviors.  Qualitative visualizations of the trajectories learned by the \texttt{DPPO} and \texttt{RES[\methodname{}]} policies are shown in Figure~\ref{fig:multimodality_comparison}. Appendix~\ref{appendix:dim_z} reports ablations on the role of the dimensionality of $\mathcal{Z}$.

\begin{table}[!t]
  \centering
  \caption{Evaluation on the Gaussian-mixture environment under fine-tuning reward landscape \textbf{G2}.}
\label{tab:toy_results_g2}
  \resizebox{\columnwidth}{!}{%
  \begin{tabular}{lcc>{\columncolor{lightblue!20}}c c}
  \toprule
  \textbf{Method} & $\mathrm{SR}$ $(\uparrow)$ & $\mathrm{SR}_{\mathrm{M}}$ $(\uparrow)$ & $\mathrm{mc}@80$ $(\uparrow)$ & $\mathcal{{H}}$ $(\uparrow)$ \\
  \midrule
  \rowcolor{lightgray!50}\texttt{RES} & $0.92 \scriptscriptstyle\pm 0.12$ & $0.50 \scriptscriptstyle\pm 0.00$ & \cellcolor{lightblue!20}$2.00 / 4$& $0.59 \scriptscriptstyle\pm 0.13$ \\
  \texttt{DSRL} & $0.33 \scriptscriptstyle\pm 0.47$ & $0.08 \scriptscriptstyle\pm 0.12$ & $0.33 / 4$& $0.00 \scriptscriptstyle\pm 0.00$ \\
  \rowcolor{lightgray!50}\texttt{DPPO} & $1.00 \scriptscriptstyle\pm 0.00$ & $0.42 \scriptscriptstyle\pm 0.12$ & \cellcolor{lightblue!20}$1.67 / 4$& $0.02 \scriptscriptstyle\pm 0.02$ \\
  \texttt{DPPO[10]} & $0.32 \scriptscriptstyle\pm 0.22$ & $0.11 \scriptscriptstyle\pm 0.05$ & $0.00 / 4$& $0.60 \scriptscriptstyle\pm 0.22$ \\
  \midrule
  \rowcolor{lightgray!50}\texttt{RES[\methodname{}]} & \textbf{$1.00 \scriptscriptstyle\pm 0.00$} & \textbf{$1.00 \scriptscriptstyle\pm 0.00$} & \cellcolor{lightblue!20}\textbf{$4.00 / 4$ }& \textbf{$0.94 \scriptscriptstyle\pm 0.00$} \\
  \texttt{DSRL[\methodname{}]} & $0.33 \scriptscriptstyle\pm 0.04$ & $0.08 \scriptscriptstyle\pm 0.12$ & $0.33 / 4$ & $0.84 \scriptscriptstyle\pm 0.14$ \\
  \rowcolor{lightgray!50}\texttt{DPPO[\methodname{}]} & $1.00 \scriptscriptstyle\pm 0.00$ & $0.75 \scriptscriptstyle\pm 0.00$ & \cellcolor{lightblue!20}$3.00 / 4$ & $0.74 \scriptscriptstyle\pm 0.00$ \\
  \bottomrule
  \end{tabular}%
  }
  \end{table}

\begin{table*}[t]
\centering
\begin{minipage}{0.465\linewidth}
\centering
\caption{Baselines fine-tuning. }
\label{tab:baselines}
\resizebox{\linewidth}{!}{%
\centering
\begin{tabular}{lcc>{\columncolor{lightblue!20}}c c}
\toprule
 \textbf{Method} & $\mathrm{SR}$$(\uparrow)$ & $\mathrm{SR}_{\mathrm{M}}$ $(\uparrow)$& $\mathrm{mc}@0.80$ $(\uparrow)$& $\mathcal{H}$ $(\uparrow)$\\
\midrule
\rowcolor{lightgray!50} \multicolumn{5}{c}{\emph{Reach}} \\
\midrule
\texttt{PRE} & $0.32 \scriptscriptstyle\pm 0.01$ & $0.31 \scriptscriptstyle\pm 0.00$ & $0.00 / 2$& $0.99 \scriptscriptstyle\pm 0.00$  \\
\midrule
\rowcolor{lightgray!50}\texttt{RES} & $1.00 \scriptscriptstyle\pm 0.00$ & $1.00 \scriptscriptstyle\pm 0.00$ & \cellcolor{lightblue!20}$2.00 / 2$& $0.98 \scriptscriptstyle\pm 0.01$  \\
\texttt{DSRL} & $0.98 \scriptscriptstyle\pm 0.00$ & $0.98 \scriptscriptstyle\pm 0.00$ & $2.00 / 2$& $0.97 \scriptscriptstyle\pm 0.00$ \\
\rowcolor{lightgray!50}\texttt{DPPO} & $0.93 \scriptscriptstyle\pm 0.01$ & $0.94 \scriptscriptstyle\pm 0.02$ & \cellcolor{lightblue!20}$2.00 / 2$& $0.66 \scriptscriptstyle\pm 0.33$  \\
\texttt{DPPO[10]} & $0.99 \scriptscriptstyle\pm 0.00$ & $0.99 \scriptscriptstyle\pm 0.00$ & $2.00 / 2$& $0.97 \scriptscriptstyle\pm 0.03$  \\
\midrule
\rowcolor{lightgray!50} \multicolumn{5}{c}{\emph{Lift}} \\
\midrule
\texttt{PRE} & $0.14 \scriptscriptstyle\pm 0.01$ & $0.15 \scriptscriptstyle\pm 0.01$ & $0.00 / 2$& $0.97 \scriptscriptstyle\pm 0.01$\\
\midrule
\rowcolor{lightgray!50}\texttt{RES} & $1.00 \scriptscriptstyle\pm 0.00$ & $0.50 \scriptscriptstyle\pm 0.00$ & \cellcolor{lightblue!20}$1.00 / 2$& $0.00 \scriptscriptstyle\pm 0.00$  \\
\texttt{DSRL} & $0.78 \scriptscriptstyle\pm 0.03$ & $0.78 \scriptscriptstyle\pm 0.03$ & $0.67 / 2$& $0.98 \scriptscriptstyle\pm 0.01$  \\
\rowcolor{lightgray!50}\texttt{DPPO} & $0.99 \scriptscriptstyle\pm 0.01$ & $0.57 \scriptscriptstyle\pm 0.10$ & \cellcolor{lightblue!20}$1.00 / 2$& $0.05 \scriptscriptstyle\pm 0.03$ \\
\texttt{DPPO[10]} & $1.00 \scriptscriptstyle\pm 0.00$ & $0.56 \scriptscriptstyle\pm 0.08$ & $1.00 / 2$& $0.02 \scriptscriptstyle\pm 0.01$  \\
\midrule
\rowcolor{lightgray!50} \multicolumn{5}{c}{\emph{Avoid}} \\
\midrule
\texttt{PRE} & $0.94 \scriptscriptstyle\pm 0.04$ & $0.86 \scriptscriptstyle\pm 0.04$ & $20.00 / 24$& $0.63 \scriptscriptstyle\pm 0.00$ \\
\midrule
\rowcolor{lightgray!50}\texttt{RES} & $0.98 \scriptscriptstyle\pm 0.03$ & $0.04 \scriptscriptstyle\pm 0.00$ & \cellcolor{lightblue!20}$1.00 / 24$& $0.00 \scriptscriptstyle\pm 0.00$  \\
\texttt{DSRL} & $1.00 \scriptscriptstyle\pm 0.01$ & $0.09 \scriptscriptstyle\pm 0.02$ & $2.00 / 24$& $0.01 \scriptscriptstyle\pm 0.00$ \\
\rowcolor{lightgray!50}\texttt{DPPO} & $1.00 \scriptscriptstyle\pm 0.00$ & $0.26 \scriptscriptstyle\pm 0.11$ & \cellcolor{lightblue!20}$6.33 / 24$& $0.13 \scriptscriptstyle\pm 0.15$ \\
\texttt{DPPO[10]} & $1.00 \scriptscriptstyle\pm 0.00$ & $0.04 \scriptscriptstyle\pm 0.00$ & $1.00 / 24$& $0.00 \scriptscriptstyle\pm 0.00$  \\
\bottomrule
\end{tabular}%
}
\end{minipage}\hfill
\begin{minipage}{0.5\linewidth}
\centering
\caption{Fine-tuning with regularization (\methodname{}).}
\label{tab:our_method}
\centering
\resizebox{0.95\linewidth}{!}{%
\begin{tabular}{lcc>{\columncolor{lightblue!20}}c c}
\toprule
 \textbf{Method} & $\mathrm{SR}$ $(\uparrow)$& $\mathrm{SR}_{\mathrm{M}}$ $(\uparrow)$ & $\mathrm{mc}@0.80$ $(\uparrow)$& $\mathcal{H}$ $(\uparrow)$ \\
\midrule
\rowcolor{lightgray!50} \multicolumn{5}{c}{\emph{Reach}} \\
\midrule
\texttt{PRE} & $0.32 \scriptscriptstyle\pm 0.01$ & $0.31 \scriptscriptstyle\pm 0.00$ & $0.00 / 2$& $0.99 \scriptscriptstyle\pm 0.00$  \\
\midrule
\rowcolor{lightgray!50}\texttt{RES[\methodname{}]} & $0.99 \scriptscriptstyle\pm 0.00$ & $0.99 \scriptscriptstyle\pm 0.00$ & \cellcolor{lightblue!20}$2.00 / 2$& $1.00 \scriptscriptstyle\pm 0.00$ \\
\texttt{DSRL[\methodname{}]} & $1.00 \scriptscriptstyle\pm 0.00$ & $1.00 \scriptscriptstyle\pm 0.00$ & $2.00 / 2$& $0.97 \scriptscriptstyle\pm 0.01$ \\
\rowcolor{lightgray!50}\texttt{DPPO[\methodname{}]} & $0.98 \scriptscriptstyle\pm 0.01$ & $0.98 \scriptscriptstyle\pm 0.01$ & \cellcolor{lightblue!20}$2.00 / 2$& $0.67 \scriptscriptstyle\pm 0.43$ \\
\texttt{DPPO[10]} & - & -  &  -  & -   \\
\midrule
\rowcolor{lightgray!50} \multicolumn{5}{c}{\emph{Lift}} \\
\midrule
\texttt{PRE} & $0.14 \scriptscriptstyle\pm 0.01$ & $0.15 \scriptscriptstyle\pm 0.01$ & $0.00 / 2$& $0.97 \scriptscriptstyle\pm 0.01$ \\
\midrule
\rowcolor{lightgray!50}\texttt{RES[\methodname{}]} & $0.99 \scriptscriptstyle\pm 0.00$ & $0.99 \scriptscriptstyle\pm 0.00$ & \cellcolor{lightblue!20}$2.00 / 2$& $1.00 \scriptscriptstyle\pm 0.00$ \\
\texttt{DSRL[\methodname{}]} & $0.88 \scriptscriptstyle\pm 0.07$ & $0.88 \scriptscriptstyle\pm 0.07$ & $1.67 / 2$& $0.99 \scriptscriptstyle\pm 0.01$ \\
\rowcolor{lightgray!50}\texttt{DPPO[\methodname{}]} & $0.99 \scriptscriptstyle\pm 0.00$ & $0.55 \scriptscriptstyle\pm 0.07$ & \cellcolor{lightblue!20}$1.00 / 2$& $0.06 \scriptscriptstyle\pm 0.04$ \\
\texttt{DPPO[10]} & - & -  &  -  & -   \\
\midrule
\rowcolor{lightgray!50} \multicolumn{5}{c}{\emph{Avoid}} \\
\midrule
\texttt{PRE} & $0.94 \scriptscriptstyle\pm 0.04$ & $0.86 \scriptscriptstyle\pm 0.04$ & $20.00 / 24$& $0.63 \scriptscriptstyle\pm 0.00$ \\
\midrule
\rowcolor{lightgray!50}\texttt{RES[\methodname{}]} & $0.99 \scriptscriptstyle\pm 0.01$ & $0.30 \scriptscriptstyle\pm 0.02$ & \cellcolor{lightblue!20}$7.33 / 24$& $0.53 \scriptscriptstyle\pm 0.01$  \\
\texttt{DSRL[\methodname{}]} & $1.00 \scriptscriptstyle\pm 0.00$ & $0.42 \scriptscriptstyle\pm 0.00$ & $10.0 / 24$& $0.58 \scriptscriptstyle\pm 0.00$  \\
\rowcolor{lightgray!50}\texttt{DPPO[\methodname{}]} & $0.94 \scriptscriptstyle\pm 0.07$ & $0.43 \scriptscriptstyle\pm 0.05$ & \cellcolor{lightblue!20}$9.67 / 24$& $0.57 \scriptscriptstyle\pm 0.01$  \\
\texttt{DPPO[10]} & - & -  &  -  & -   \\
\bottomrule
\end{tabular}%
}
\end{minipage}
\end{table*}

\begin{table*}[t]
\centering
\begin{minipage}{0.465\linewidth}
\centering
\caption{ANYmal locomotion environment. }
\label{tab:anymal_results}
\resizebox{\linewidth}{!}{%
\centering
\begin{tabular}{lcc>{\columncolor{lightblue!20}}c c}
\toprule
 \textbf{Method} & $\mathrm{SR}$ $(\uparrow)$ & $\mathrm{SR}_{\mathrm{M}}$ $(\uparrow)$ & $\mathrm{mc}@0.80$ $(\uparrow)$ & $\mathcal{H}$ $(\uparrow)$ \\
\midrule
\rowcolor{lightgray!50}\texttt{PRE} & $0.41 \scriptscriptstyle\pm 0.01$ & $0.39 \scriptscriptstyle\pm 0.01$ & \cellcolor{lightblue!20}$0.00 / 4$ & $0.96 \scriptscriptstyle\pm 0.00$ \\
\midrule
\rowcolor{lightgray!50}\texttt{RES} & $0.98 \scriptscriptstyle\pm 0.01$ & $0.90 \scriptscriptstyle\pm 0.11$ & \cellcolor{lightblue!20}$3.67 / 4$ & $0.90 \scriptscriptstyle\pm 0.09$ \\
\texttt{DSRL} & $1.00 \scriptscriptstyle\pm 0.00$ & $0.25 \scriptscriptstyle\pm 0.00$ & $1.00 / 4$ & $0.00 \scriptscriptstyle\pm 0.00$ \\
\rowcolor{lightgray!50}\texttt{DPPO} & $0.99 \scriptscriptstyle\pm 0.01$ & $0.32 \scriptscriptstyle\pm 0.10$ & \cellcolor{lightblue!20}$1.33 / 4$ & $0.09 \scriptscriptstyle\pm 0.13$ \\
\texttt{DPPO[10]} & $1.00 \scriptscriptstyle\pm 0.00$ & $0.42 \scriptscriptstyle\pm 0.12$ & $1.67 / 4$ & $0.15 \scriptscriptstyle\pm 0.20$\\
\midrule
\rowcolor{lightgray!50} \texttt{DSRL[ENTROPY]} & $1.00 \pm 0.00$ & $0.25 \pm 0.00$ & $1.00 / 4$ & $0.00 \pm 0.00$ \\
\texttt{DSRL[RND]} & $1.00 \pm 0.00$ & $0.25 \pm 0.00$ & $1.00 / 4$ & $0.00 \pm 0.00$ \\
\rowcolor{lightgray!50}\texttt{DSRL[\methodname{}]}  & $0.97 \scriptscriptstyle\pm 0.01$ & $0.89 \scriptscriptstyle\pm 0.12$ & \cellcolor{lightblue!20}$3.67 / 4$ & $0.91 \scriptscriptstyle\pm 0.12$ \\
\bottomrule
\end{tabular}
}
\end{minipage}\hfill
\begin{minipage}{0.465\linewidth}
\centering
\caption{Franka Kitchen environment.}
\label{tab:franka_results}
\centering
\resizebox{\linewidth}{!}{%

\centering
\begin{tabular}{lcc>{\columncolor{lightblue!20}}c c}
\toprule
  \textbf{Method} & $\mathrm{SR}$ $(\uparrow)$ & $\mathrm{SR}_{\mathrm{M}}$ $(\uparrow)$ & $\mathrm{mc}@0.80$ $(\uparrow)$ & $\mathcal{H}$ $(\uparrow)$ \\
\midrule
\rowcolor{lightgray!50}\texttt{PRE} & $0.00 \scriptscriptstyle\pm 0.00$ & $0.00 \scriptscriptstyle\pm 0.00$ & \cellcolor{lightblue!20}$0.00 / 24$ & $0.33 \scriptscriptstyle\pm 0.07$ \\
\midrule
% \rowcolor{lightgray!50}\texttt{RES} & 1.00 & 0.04 & $1.00 / 24$ & 0.00 \\
% \texttt{DSRL} & 1.00 & 0.04 & $1.00 / 24$ & 0.00 \\
% \rowcolor{lightgray!50}\texttt{DPPO[10]} & 1.00 & 0.04 & $1.00 / 24$ & 0.00 \\
% \texttt{DPPO} & 0.71 & 0.04 & $1.00 / 24$ & 0.19 \\
% \midrule
\rowcolor{lightgray!50}\texttt{RES} & $1.00 \scriptscriptstyle\pm 0.00$ & $0.04 \scriptscriptstyle\pm 0.00$ & \cellcolor{lightblue!20}$1.00 / 24$ & $0.00 \scriptscriptstyle\pm 0.00$ \\
\texttt{DSRL} & $0.99 \scriptscriptstyle\pm 0.01$ & $0.04 \scriptscriptstyle\pm 0.00$ & $1.00 / 24$ & $0.01 \scriptscriptstyle\pm 0.02$ \\
\rowcolor{lightgray!50}\texttt{DPPO} & $0.56 \scriptscriptstyle\pm 0.41$ & $0.03 \scriptscriptstyle\pm 0.02$ & \cellcolor{lightblue!20}$0.67 / 24$ & $0.08 \scriptscriptstyle\pm 0.08$ \\
\texttt{DPPO[10]} & $1.00 \scriptscriptstyle\pm 0.00$ & $0.04 \scriptscriptstyle\pm 0.00$ & $1.00 / 24$ & $0.00 \scriptscriptstyle\pm 0.00$ \\
\rowcolor{lightgray!50}\texttt{DSRL[\methodname{}]} & $0.82 \scriptscriptstyle\pm 0.10$ & $0.10 \scriptscriptstyle\pm 0.02$ & \cellcolor{lightblue!20}$2.33 / 24$ & $0.30 \scriptscriptstyle\pm 0.05$ \\
\bottomrule
\end{tabular}%
}
\end{minipage}
\end{table*}

% \al{To integrate: 
% \begin{itemize}
%     \item comparison with KL regularization, 
%     \item comparison without pre-training the steering policy.
%     \item (If time) Flow-based policies.
% \end{itemize}}

\begin{figure*}[t!]
  \centering
  % Constrain figure height to avoid oversized floats.
  \newcommand{\skillTopH}{0.18\textheight}
  \newcommand{\skillBotH}{0.22\textheight}
  % --- top row ---
  \begin{subfigure}[t]{0.485\linewidth}
    \centering
    \includegraphics[width=\linewidth,height=\skillTopH,keepaspectratio]{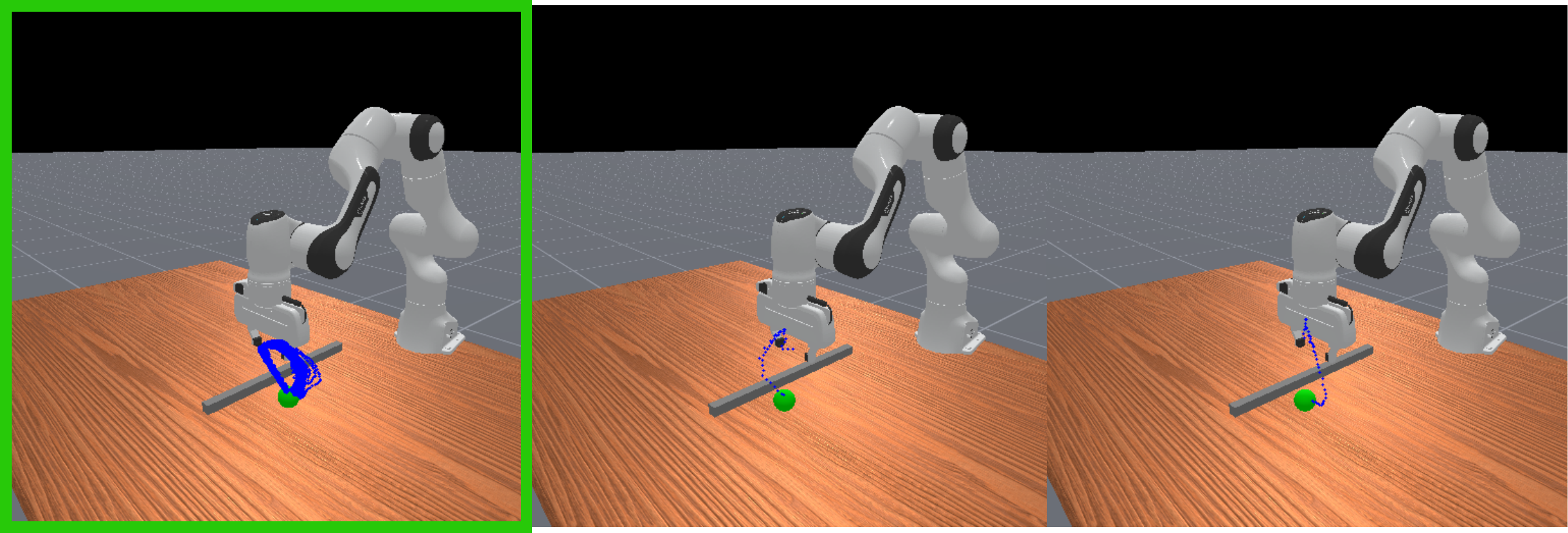}
    \caption{\emph{Reach}: (Left, green box) \texttt{DPPO}. (Right) \texttt{DPPO[\methodname{}]} modes.}
    \label{fig:top-left}
  \end{subfigure}\hfill
  \begin{subfigure}[t]{0.485\linewidth}
    \centering
    \includegraphics[width=\linewidth,height=\skillTopH,keepaspectratio]{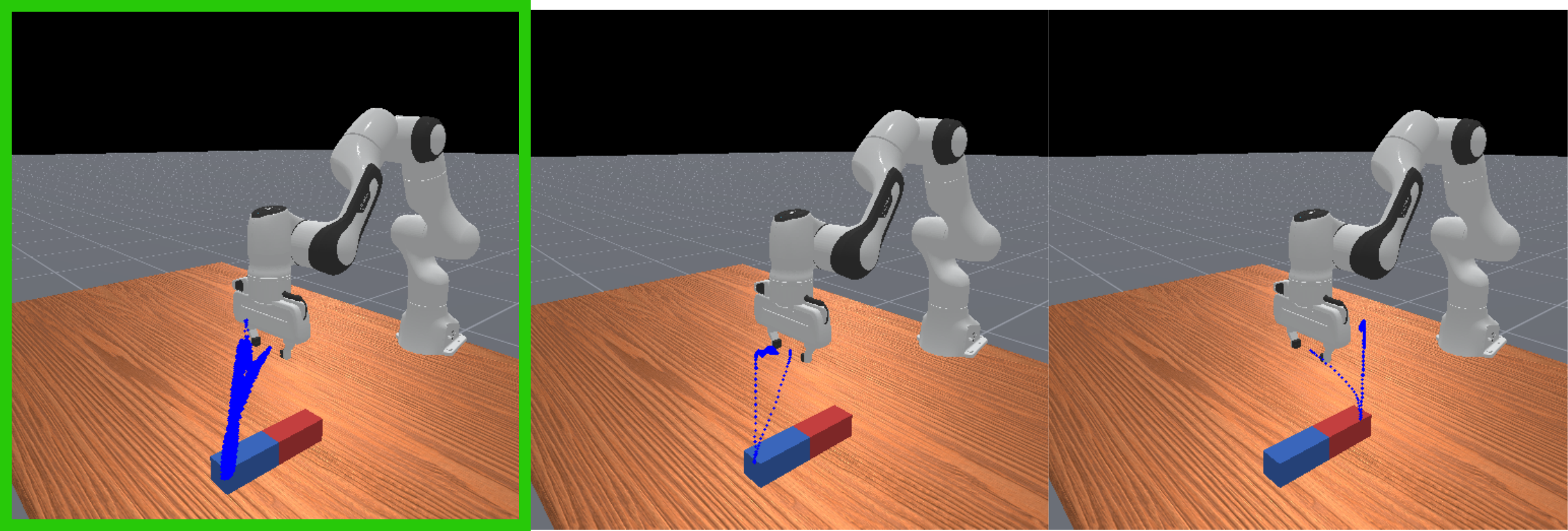}
    \caption{\emph{Lift}: (Left, green box) \texttt{DPPO}. (Right) \texttt{DPPO[\methodname{}]} modes.}
    \label{fig:top-right}
  \end{subfigure}

  % --- bottom row (wide) ---
  \vspace{0.3em}
  \begin{subfigure}[t]{0.80\linewidth}
    \centering
    \includegraphics[width=\linewidth,height=\skillBotH,keepaspectratio]{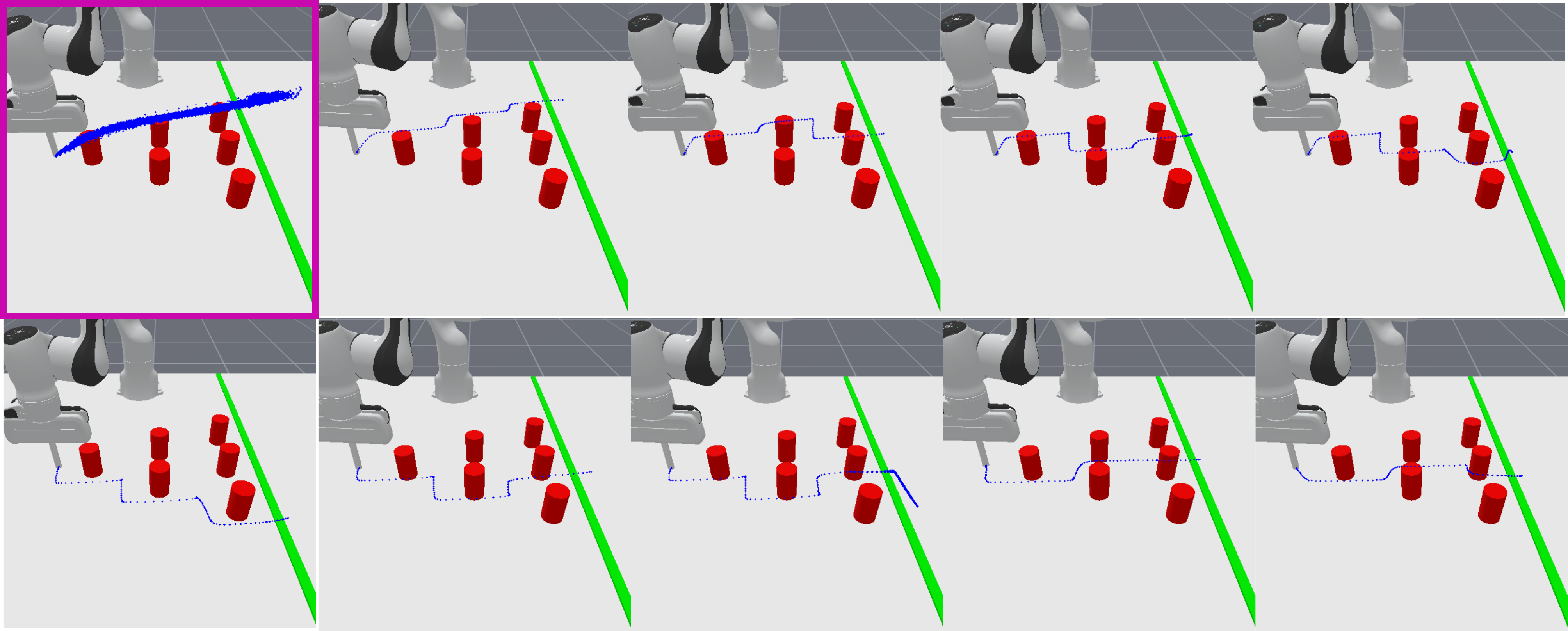}
    \caption{\emph{Avoid}: (Left, purple box) \texttt{DPPO}. (Right) \texttt{DPPO[\methodname{}]} modes.}
    \label{fig:bottom-wide}
  \end{subfigure}

  \caption{Visualization of policy rollouts (blue) from standard fine-tuning and \methodname{} fine-tuning across different tasks. Highlighted boxes (green, purple) show trajectories sampled form  \texttt{DPPO}, which exhibits multimodal behavior only in the \emph{Reach} task. The remaining visualizations represent the modes learned by \texttt{DPPO[\methodname{}]}, where trajectories are sampled by varying $z \in \mathcal{Z}$}
  \label{fig:learned_skills_main}
\end{figure*}

\subsection{Robotic Manipulation}

% We evaluate fine-tuning strategies for pre-trained diffusion policies on simulated robotic manipulation tasks, with the aim of assessing their ability to preserve multimodal behaviors. In particular, we test whether our proposed method regularizes learning to retain diversity while improving task performance.

% \paragraph{Tasks} 
% Next, we evaluate our method on three simulated robotic tasks: \emph{Reach}, \emph{Lift}, and \emph{Avoid}, based on ManiSkill~\citep{tao2024maniskill3} and visualized in Figure~\ref{fig:tasks}. Multimodality arises either from goal diversity or trajectory diversity in achieving the same goal. In \emph{Reach}, the agent can approach the target sphere from either side, though the choice is committed early. In \emph{Lift}, the peg can be grasped from different sides or regions, leading to richer modality structure. \emph{Avoid} is the most complex, with many avoidance strategies emerging later in the episode, each with different trajectory lengths. Dense or intermediate rewards are provided to support fine-tuning, and a heuristic is used to assign trajectories to modes for evaluation. For each task, we collect $1000$ demos with a motion planner and pre-train a diffusion model for $1000$ epochs.  Further implementation details are given in Appendix~\ref{appendix:manip_tasks}.

Next, we evaluate our method on three simulated robotic tasks: \emph{Reach}, \emph{Lift}, and \emph{Avoid}, implemented on ManiSkill~\citep{tao2024maniskill3} and visualized in Figure~\ref{fig:tasks}. Multimodality arises either from goal diversity or trajectory diversity in achieving the same goal. For each task, we collect $1000$ demos with a motion planner and pre-train a diffusion model for $1000$ epochs. Subsequently, dense or intermediate rewards are provided to support fine-tuning, and a heuristic is used to assign trajectories to modes for evaluation. Further implementation details are given in Appendix~\ref{appendix:manip_tasks}.

\paragraph{Standard Fine-tuning.} Table~\ref{tab:baselines} summarizes results for baselines without explicit multimodality preservation. In \emph{Reach}, all methods fine-tune the pre-trained policy without collapse, indicating that the inherent exploration of diffusion policies suffices to adapt both modes. In \emph{Lift}, fine-tuning improves success rates but fails to consistently solve both modalities; only the steering-based baseline (\texttt{DSRL}) maintains higher entropy but fails to consistently solve both modalities. In \emph{Avoid}, fine-tuning achieves high task success but eliminates multimodality, likely due to trajectory length asymmetries that bias toward shorter-horizon solutions. Taken together, the results align with the 2D Gaussian mixture experiments and indicate that standard RL fine-tuning progressively destroys multimodality as the reward landscape deviates from the pre-trained trajectory distribution and becomes unbalanced across modes. %This underscores the need for explicit regularization and motivates our method as a general mechanism to preserve multimodal behaviors during fine-tuning.

\paragraph{\methodname{} Fine-tuning.} 
% Here we evaluate the impact of incorporating our regularization during fine-tuning. As summarized in Table~\ref{tab:our_method}, the intrinsic reward enables adaptation of the pre-trained policy while preserving, in most cases, the multimodal structure of its behavior, with only a minimal trade-off between diversity and task performance. In the \emph{Reach} task, introducing our regularization has no adverse effect on the final success rate, confirming that diversity can be preserved without compromising performance. For the \emph{Lift} task, regularization enables the policy to retain both of the original solution modes from the pre-trained policy, improving upon standard fine-tuning. In the more challenging \emph{Avoid} task, it maintains high success rates while retaining a subset of the original modes, again surpassing standard fine-tuning. Although some collapse persists, the results indicate that the proposed regularization substantially mitigates mode loss, even under pronounced reward imbalance.
Table~\ref{tab:our_method} shows that incorporating our regularization enables fine-tuning of the pre-trained policy while largely preserving multimodality, with only minimal trade-offs between diversity and task performance. In \emph{Reach}, regularization leaves success rates unaffected, confirming that our regularization term does not sacrifice performance. In \emph{Lift}, \methodname{} enables the policy to retain both solution modes from the pre-trained policy, improving over standard fine-tuning. In the more challenging \emph{Avoid}, our method retains high success while preserving a subset of the modes, again outperforming baselines. Although some collapse remains, the results indicate that regularization substantially mitigates mode loss, even under pronounced reward imbalance. Qualitative visualizations of the skills learned by our method are shown in Figure~\ref{fig:learned_skills_main}. %For completeness, Appendix~\ref{appendix:manipulation_ablation} also presents ablations covering the curriculum, steering-policy pre-training for mode discovery, the choice of $\lambda$, and the effect of removing the steering policy after fine-tuning.
Additionally, ablations covering design choices such as curriculum learning and pre-training the steering policy for mode discovery, or the role of the regularization weight $\lambda$, are reported in Appendix~\ref{appendix:manipulation_ablation}.

\subsection{Robotic Locomotion and Sequential Manipulation}

We evaluate \methodname{} in more challenging settings: ANYmal locomotion, a high-dimensional extension of the 2D Gaussian setup, and the sequential, combinatorial Franka Kitchen task, both detailed in Appendix~\ref{appendix:manip_tasks}. We focus on  the \texttt{DSRL[\methodname{}]}  variant and compare against the previously introduced baselines, using latent dimensionalities of $4$ and $64$, respectively. %a high-dimensional locomotion task and a sequential manipulation task with combinatorial structure. We evaluate on ANYmal Locomotion, which extends the 2D Gaussian setup, and on the Franka Kitchen environment, both described in detail in the Appendix~\ref{appendix:manip_tasks}. In both cases, we fine-tune the steering policy and compare against the baselines introduced earlier. The latent dimensionality is set to $4$ and $16$, respectively.

Table~\ref{tab:anymal_results} summarizes the locomotion results. Despite the increased dimensionality, the results closely mirror those of the 2D Gaussian Mixture experiments: \texttt{RES} is a strong baseline in this setting, and \methodname{} preserves all four modes of the pre-trained policy. Additional rollout visualizations, mode-stability analyses, and robustness evaluations under observation noise and dynamics perturbations are provided in Appendix~\ref{appendix:mode stability} and Appendix~\ref{sec:noise_dyn_perturbations}.

Table~\ref{tab:franka_results} reports the results for Franka Kitchen, where success is defined as completing three subtasks in any order. We first observe that the original pre-trained policy exhibits limited mode coverage, as reflected by its relatively low entropy. All baselines successfully improve task success, but collapse the multimodality present in the pre-trained policy. In contrast, \methodname{} maintains the original behavioral diversity while still fine-tuning the policy to high success, but we observe the loss of one of the modalities for some seeds. The slight decrease in $\mathrm{SR}$ reflects the different success rate per sequences, but \methodname{} still achieves a higher $\mathrm{SR}_{\mathrm{M}}$ than the baselines. The unique successful sequences per method are reported in Table~\ref{tab:kitchen_sequences}. Overall, these results demonstrate that \methodname{} effectively preserves multimodal structure in both high-dimensional control tasks and sequential manipulation domains, where the baselines collapse to a single solution.

\section{Conclusions}
We studied \ac{rlft} of pre-trained generative policies while preserving multimodal behaviors. By focusing on diffusion policies pre-trained via imitation learning, we showed that standard fine-tuning often collapses into fewer behaviors. To address this, we proposed \methodname{}, which discovers latent behavioral modes via a steering policy and uses a mutual-information–based intrinsic reward to regularize fine-tuning toward retaining diversity. Across a range of robotic domains, this approach mitigated mode collapse while matching or improving task success relative to standard fine-tuning.

% Our results also highlight important limitations and directions for future work. The intrinsic-reward regularization requires careful tuning, and maintaining an inference model during fine-tuning can introduce instability as the policy distribution shifts. Moreover, the current formulation focuses on within-task modes and may require hierarchical or richer latent parameterizations to scale to large, heterogeneous datasets. Exploring alternative trajectory-diversity metrics beyond mutual information, more expressive latent spaces, and mechanisms for semantic grounding of discovered modes—e.g., via language or preference learning—are promising avenues for extending the framework.

Our results also highlighted several limitations and directions for future work. The proposed regularization relies on a specific trajectory-diversity proxy based on mutual information, motivating the exploration of alternative diversity metrics that may better capture task-relevant behavioral variation. In addition, we assumed a discrete latent space for mode discovery; extending the framework to continuous or hierarchical latents is an important direction. Finally, while we focused on leveraging discovered modes for \ac{rlft}, uncovering latent structure in pre-trained generative policies may enable broader applications, such as language grounding or enhanced controllability and human alignment, opening new avenues beyond fine-tuning.

\section*{Impact Statement}

This paper advances the understanding of controllability and fine-tuning behavior of multimodal generative policies, contributing to the advancement of the field of Machine Learning. There are many potential societal consequences of our work, none of which we feel must be specifically highlighted here.

\section*{Acknowledgements}
We would like to thank Tomi Silander and Chris Dance for their insightful feedback on our work.

\bibliography{icml2026_conference}

@article{park2025flow,
  title = {Flow Q-Learning},
  author = {Park, Seohong and Li, Qiyang and Levine, Sergey},
  journal = {International Conference on Machine Learning},
  year = {2025},
  doi = {10.48550/arXiv.2502.02538},
}

@article{black2410pi0,
  title = {$\pi_0$: A Vision-Language-Action Flow Model for General Robot Control},
  author = {Kevin Black and Noah Brown and Danny Driess and Adnan Esmail and Michael Equi and Chelsea Finn and Niccolo Fusai and Lachy Groom and Karol Hausman and Brian Ichter and others},
  year = {2024},
  eprint = {2410.24164},
  archiveprefix = {arXiv},
  primaryclass = {cs.LG},
  url = {https://arxiv.org/abs/2410.24164},
  journal = {arXiv.org},
  doi = {10.48550/arXiv.2410.24164},
}

@article{wang2022diffusion,
  title = {Diffusion policies as an expressive policy class for offline reinforcement learning},
  author = {Wang, Zhendong and Hunt, Jonathan J and Zhou, Mingyuan},
  journal = {International Conference on Learning Representations},
  year = {2022},
  doi = {10.48550/arXiv.2208.06193},
}

@article{kang2023efficient,
  title={Efficient diffusion policies for offline reinforcement learning},
  author={Kang, Bingyi and Ma, Xiao and Du, Chao and Pang, Tianyu and Yan, Shuicheng},
  journal={Advances in Neural Information Processing Systems},
  volume={36},
  pages={67195--67212},
  year={2023}
}

@article{wagenmaker2025steering,
  title={Steering Your Diffusion Policy with Latent Space Reinforcement Learning},
  author={Wagenmaker, Andrew and Nakamoto, Mitsuhiko and Zhang, Yunchu and Park, Seohong and Yagoub, Waleed and Nagabandi, Anusha and Gupta, Abhishek and Levine, Sergey},
  journal = {Conference on Robot Learning},
  year={2025}
}

@article{ankile2024imitation,
  title = {From Imitation to Refinement--Residual RL for Precise Assembly},
  author = {Ankile, Lars and Simeonov, Anthony and Shenfeld, Idan and Torne, Marcel and Agrawal, Pulkit},
  journal = {IEEE International Conference on Robotics and Automation},
  year = {2024},
  doi = {10.1109/ICRA55743.2025.11127442},
}

@article{lipman2022flow,
  title = {Flow matching for generative modeling},
  author = {Lipman, Yaron and Chen, Ricky TQ and Ben-Hamu, Heli and Nickel, Maximilian and Le, Matt},
  journal = {International Conference on Learning Representations},
  year = {2022},
}

@article{jia2024towards,
  title = {Towards diverse behaviors: A benchmark for imitation learning with human demonstrations},
  author = {Jia, Xiaogang and Blessing, Denis and Jiang, Xinkai and Reuss, Moritz and Donat, Atalay and Lioutikov, Rudolf and Neumann, Gerhard},
  journal = {International Conference on Learning Representations},
  year = {2024},
  doi = {10.48550/arXiv.2402.14606},
}

@article{yuan2024policy,
  title = {Policy decorator: Model-agnostic online refinement for large policy model},
  author = {Yuan, Xiu and Mu, Tongzhou and Tao, Stone and Fang, Yunhao and Zhang, Mengke and Su, Hao},
  journal = {International Conference on Learning Representations},
  year = {2024},
  doi = {10.48550/arXiv.2412.13630},
}

@inproceedings{
tao2024maniskill3,
title={ManiSkill3: {GPU} Parallelized Robot Simulation and Rendering for Generalizable Embodied {AI}},
author={Stone Tao and Fanbo Xiang and Arth Shukla and Yuzhe Qin and Xander Hinrichsen and Xiaodi Yuan and Chen Bao and Xinsong Lin and Yulin Liu and Tse-Kai Chan and Yuan Gao and Xuanlin Li and Tongzhou Mu and Nan Xiao and Arnav Gurha and Viswesh N and Yong Woo Choi and Yen-Ru Chen and Zhiao Huang and Roberto Calandra and Rui Chen and Shan Luo and Hao Su},
booktitle={7th Robot Learning Workshop: Towards Robots with Human-Level Abilities},
year={2025},
url={https://openreview.net/forum?id=GgTxudXaU8}
}

@article{song2020denoising,
  title = {Denoising diffusion implicit models},
  author = {Song, Jiaming and Meng, Chenlin and Ermon, Stefano},
  journal = {International Conference on Learning Representations},
  year = {2020},
}

@article{misra2019mish,
  title = {Mish: A self regularized non-monotonic activation function},
  author = {Misra, Diganta},
  journal = {British Machine Vision Conference},
  year = {2020},
  doi = {10.5244/c.34.191},
  publisher = {British Machine Vision Association},
}

@article{ho2020denoising,
  title={Denoising diffusion probabilistic models},
  author={Ho, Jonathan and Jain, Ajay and Abbeel, Pieter},
  journal={Advances in neural information processing systems},
  volume={33},
  pages={6840--6851},
  year={2020}
}

@article{zhou2024rethinking,
  title={Rethinking inverse reinforcement learning: from data alignment to task alignment},
  author={Zhou, Weichao and Li, Wenchao},
  journal={Advances in Neural Information Processing Systems},
  volume={37},
  pages={27647--27688},
  year={2024}
}

@inproceedings{brown2019extrapolating,
  title={Extrapolating beyond suboptimal demonstrations via inverse reinforcement learning from observations},
  author={Brown, Daniel and Goo, Wonjoon and Nagarajan, Prabhat and Niekum, Scott},
  booktitle={International conference on machine learning},
  pages={783--792},
  year={2019},
  organization={PMLR}
}

@article{li2024learning,
  title={Learning multimodal behaviors from scratch with diffusion policy gradient},
  author={Li, Steven and Krohn, Rickmer and Chen, Tao and Ajay, Anurag and Agrawal, Pulkit and Chalvatzaki, Georgia},
  journal={Advances in Neural Information Processing Systems},
  volume={37},
  pages={38456--38479},
  year={2024}
}

@article{yang2023policy,
  title={Policy representation via diffusion probability model for reinforcement learning},
  author={Yang, Long and Huang, Zhixiong and Lei, Fenghao and Zhong, Yucun and Yang, Yiming and Fang, Cong and Wen, Shiting and Zhou, Binbin and Lin, Zhouchen},
  journal={arXiv preprint arXiv:2305.13122},
  year={2023}
}

@article{chi2023diffusion,
  title={Diffusion policy: Visuomotor policy learning via action diffusion},
  author={Chi, Cheng and Xu, Zhenjia and Feng, Siyuan and Cousineau, Eric and Du, Yilun and Burchfiel, Benjamin and Tedrake, Russ and Song, Shuran},
  journal={The International Journal of Robotics Research},
  pages={02783649241273668},
  year={2023},
  publisher={SAGE Publications Sage UK: London, England}
}

@article{gregor2016variational,
  title={Variational intrinsic control},
  author={Gregor, Karol and Rezende, Danilo Jimenez and Wierstra, Daan},
  journal={International Conference on Learning Representations},
  year={2016}
}

@article{achiam2018variational,
  title={Variational option discovery algorithms},
  author={Achiam, Joshua and Edwards, Harrison and Amodei, Dario and Abbeel, Pieter},
  journal={arXiv preprint arXiv:1807.10299},
  year={2018}
}

@article{hansen2019fast,
  title = {Fast task inference with variational intrinsic successor features},
  author = {Hansen, S. and Dabney, Will and Barreto, André and Wiele, T. and Warde-Farley, David and Mnih, Volodymyr},
  journal = {International Conference on Learning Representations},
  year = {2019},
}

@article{sharma2019dynamics,
  title = {Dynamics-aware unsupervised discovery of skills},
  author = {Sharma, Archit and Gu, Shixiang and Levine, Sergey and Kumar, Vikash and Hausman, Karol},
  journal = {International Conference on Learning Representations},
  year = {2019},
}

@phdthesis{stoepker2016testing,
  title={Testing for multimodality},
  author={Stoepker, Ivo and van den Heuvel, E},
  year={2016},
  school={BS thesis, Eindhoven University of Technology, Eindhoven}
}

@inproceedings{liu2021aps,
  title={Aps: Active pretraining with successor features},
  author={Liu, Hao and Abbeel, Pieter},
  booktitle={International Conference on Machine Learning},
  pages={6736--6747},
  year={2021},
  organization={PMLR}
}

@misc{fu2020d4rl,
    title={D4RL: Datasets for Deep Data-Driven Reinforcement Learning},
    author={Justin Fu and Aviral Kumar and Ofir Nachum and George Tucker and Sergey Levine},
    year={2020},
    eprint={2004.07219},
    archivePrefix={arXiv},
    primaryClass={cs.LG}
}

@article{kim2021unsupervised,
  title = {Unsupervised skill discovery with bottleneck option learning},
  author = {Kim, Jaekyeom and Park, Seohong and Kim, Gunhee},
  journal = {International Conference on Machine Learning},
  year = {2021},
}

@article{zhang2021hierarchical,
  title = {Hierarchical reinforcement learning by discovering intrinsic options},
  author = {Zhang, Jesse and Yu, Haonan and Xu, Wei},
  journal = {International Conference on Learning Representations},
  year = {2021},
}

@article{machado2017eigenoption,
  title = {Eigenoption discovery through the deep successor representation},
  author = {Machado, Marlos C and Rosenbaum, Clemens and Guo, Xiaoxiao and Liu, Miao and Tesauro, Gerald and Campbell, Murray},
  journal = {International Conference on Learning Representations},
  year = {2017},
}

@article{liu2021behavior,
  title={Behavior from the void: Unsupervised active pre-training},
  author={Liu, Hao and Abbeel, Pieter},
  journal={Advances in Neural Information Processing Systems},
  volume={34},
  pages={18459--18473},
  year={2021}
}

@article{hausman2017multi,
  title={Multi-modal imitation learning from unstructured demonstrations using generative adversarial nets},
  author={Hausman, Karol and Chebotar, Yevgen and Schaal, Stefan and Sukhatme, Gaurav and Lim, Joseph J},
  journal={Advances in neural information processing systems},
  volume={30},
  year={2017}
}

@article{eysenbach2018diversity,
  title = {Diversity is all you need: Learning skills without a reward function},
  author = {Eysenbach, Benjamin and Gupta, Abhishek and Ibarz, Julian and Levine, Sergey},
  journal = {International Conference on Learning Representations},
  year = {2018},
}

@inproceedings{
chen2022latent,
title={{LAPO}: Latent-Variable Advantage-Weighted Policy Optimization for Offline Reinforcement Learning},
author={Xi Chen and Ali Ghadirzadeh and Tianhe Yu and Jianhao Wang and Yuan Gao and Wenzhe Li and Liang Bin and Chelsea Finn and Chongjie Zhang},
booktitle={Advances in Neural Information Processing Systems},
editor={Alice H. Oh and Alekh Agarwal and Danielle Belgrave and Kyunghyun Cho},
year={2022},
url={https://openreview.net/forum?id=pHd0v8W30O}
}

@article{lee2025molmoact,
  title={MolmoAct: Action Reasoning Models that can Reason in Space},
  author={Lee, Jason and Duan, Jiafei and Fang, Haoquan and Deng, Yuquan and Liu, Shuo and Li, Boyang and Fang, Bohan and Zhang, Jieyu and Wang, Yi Ru and Lee, Sangho and others},
  journal={arXiv preprint arXiv:2508.07917},
  year={2025}
}

@article{kim2024openvla,
  title = {Openvla: An open-source vision-language-action model},
  author = {Kim, Moo Jin and Pertsch, Karl and Karamcheti, Siddharth and Xiao, Ted and Balakrishna, A. and Nair, Suraj and Rafailov, Rafael and Foster, E. and Lam, Grace and Sanketi, Pannag R. and others},
  journal = {Conference on Robot Learning},
  year = {2024},
  doi = {10.48550/arXiv.2406.09246},
}

@article{psenka2023learning,
  title = {Learning a diffusion model policy from rewards via q-score matching},
  author = {Psenka, Michael and Escontrela, Alejandro and Abbeel, Pieter and Ma, Yi},
  journal = {International Conference on Machine Learning},
  year = {2023},
  doi = {10.48550/arXiv.2312.11752},
}

@article{ren2024diffusion,
  title = {Diffusion policy policy optimization},
  author = {Ren, Allen Z and Lidard, Justin and Ankile, Lars L and Simeonov, Anthony and Agrawal, Pulkit and Majumdar, Anirudha and Burchfiel, Benjamin and Dai, Hongkai and Simchowitz, Max},
  journal = {International Conference on Learning Representations},
  year = {2024},
  doi = {10.48550/arXiv.2409.00588},
}

@article{chen2024diffusion,
  title = {Diffusion policies creating a trust region for offline reinforcement learning},
  author = {Chen, Tianyu and Wang, Zhendong and Zhou, Mingyuan},
  journal = {Neural Information Processing Systems},
  year = {2024},
  pages = {50098-50125},
  doi = {10.48550/arXiv.2405.19690},
  publisher = {Neural Information Processing Systems Foundation, Inc. (NeurIPS)},
}

@article{schulman2017proximal,
  title={Proximal policy optimization algorithms},
  author={Schulman, John and Wolski, Filip and Dhariwal, Prafulla and Radford, Alec and Klimov, Oleg},
  journal={arXiv preprint arXiv:1707.06347},
  year={2017}
}

@inproceedings{huang2023reparameterized,
  title={Reparameterized policy learning for multimodal trajectory optimization},
  author={Huang, Zhiao and Liang, Litian and Ling, Zhan and Li, Xuanlin and Gan, Chuang and Su, Hao},
  booktitle={International Conference on Machine Learning},
  pages={13957--13975},
  year={2023},
  organization={PMLR}
}

@article{li2025train,
  title = {How to Train Your Robots? The Impact of Demonstration Modality on Imitation Learning},
  author = {Li, Haozhuo and Cui, Yuchen and Sadigh, Dorsa},
  journal = {IEEE International Conference on Robotics and Automation},
  year = {2025},
  pages = {1113-1120},
  doi = {10.1109/ICRA55743.2025.11128520},
  publisher = {IEEE},
}

@article{LGSD,
      title={Language Guided Skill Discovery}, 
      author={Seungeun Rho and Laura Smith and Tianyu Li and Sergey Levine and Xue Bin Peng and Sehoon Ha},
      journal={International Conference on Learning Representations},
      year={2025}
}

@article{SLIM,
      title={SLIM: Skill Learning with Multiple Critics}, 
      author={David Emukpere and Bingbing Wu and Julien Perez and Jean-Michel Renders},
      journal={2024 International Conference on Robotics and Automation (ICRA)},
      year={2024}
}

@article{CSD,
  title={Controllability-Aware Unsupervised Skill Discovery},
  author={Seohong Park and Kimin Lee and Youngwoon Lee and P. Abbeel},
  journal={International Conference on Machine Learning},
  year={2023},
}

@article{MUSIC,
  title={Mutual Information State Intrinsic Control},
  author={Zhao, Rui and Gao, Yang and Abbeel, Pieter and Tresp, Volker and Xu, Wei},
  journal={International Conference on Learning Representations},
  year={2021}
}

@article{UnsupervisedRLTransferManip,
  title={Unsupervised Reinforcement Learning for Transferable Manipulation Skill Discovery},
  author={Daesol Cho and Jigang Kim and H. Jin Kim},
  journal={IEEE Robotics and Automation Letters},
  year={2022},
  volume={7},
  pages={7455-7462},
}
\bibliographystyle{icml2026}

%%%%%%%%%%%%%%%%%%%%%%%%%%%%%%%%%%%%%%%%%%%%%%%%%%%%%%%%%%%%%%%%%%%%%%%%%%%%%%%
%%%%%%%%%%%%%%%%%%%%%%%%%%%%%%%%%%%%%%%%%%%%%%%%%%%%%%%%%%%%%%%%%%%%%%%%%%%%%%%
% APPENDIX
%%%%%%%%%%%%%%%%%%%%%%%%%%%%%%%%%%%%%%%%%%%%%%%%%%%%%%%%%%%%%%%%%%%%%%%%%%%%%%%
%%%%%%%%%%%%%%%%%%%%%%%%%%%%%%%%%%%%%%%%%%%%%%%%%%%%%%%%%%%%%%%%%%%%%%%%%%%%%%%
\newpage
\appendix
\onecolumn

% \section*{Appendix}
% \al{
% TODOs:
% \begin{itemize}
%     \item Background on diffusion and flow-based policies?
% \end{itemize}

% }

% \addcontentsline{toc}{section}{Appendix}

% Create appendix-specific ToC
% \startcontents[appendix]
% \printcontents[appendix]{l}{1}{\setcounter{tocdepth}{2}}

\section{Extended Conclusions, Limitations and Future Work}
\paragraph{Summary of the Contributions.} 

We studied the problem of \ac{rlft} of pre-trained generative policies while preserving multimodal action distributions. Focusing on diffusion policies trained from demonstrations, we showed that standard fine-tuning often collapsed multimodality to a dominant behavior when the fine-tuning reward landscape diverged from the demonstrations. To address this, we proposed using mutual information as a proxy for multimodality and introduced \methodname{}, an unsupervised mode-discovery method based on a latent reparameterization of a steering policy. We then used the steering policy together with the mutual-information estimate to provide an intrinsic reward that regularized \ac{rlft} toward retaining diverse behaviors.  We benchmarked the method across different robotics environments, and showcased that the proposed regularization mitigated mode collapse, supporting \methodname{} as a practical approach to fine-tuning generative policies without sacrificing behavioral diversity.

\paragraph{Limitations and Future Work.} Our study revealed several trade-offs and open directions. The intrinsic-reward regularization required careful tuning, as excessive weight slowed learning and reduced task success. Maintaining an inference model during fine-tuning also introduced instabilities, as it needed to track the policy’s shifting state distribution. 
Moreover, in its current form, \methodname{} is designed as a task-level mode extractor, as it focuses on discovering and preserving \emph{within-task} behavioral modes, rather than modeling the structure of multi-task datasets. 
Therefore, scaling \methodname{} to large, heterogeneous datasets may require richer latent parametrizations, such as a hierarchical latent space, to capture multi-task-level multimodality. 

Designing the appropriate parametrization and structure of the latent space is also challenging.  In all experiments, we employed a single categorical latent $\mathcal{Z}$ indexing discrete behavioral modes within each task. We deliberately used a mildly overparameterized space, and our results indicated that \methodname{} could reliably collapse redundant codes and recover the relevant modes, suggesting that overparameterization is not critical in practice. 
Nonetheless, exploring more expressive continuous or hybrid latent spaces, together with regularization strategies that improve the controllability of $\mathcal{Z}$, such as entropy or KL terms to balance code usage and mitigate mode collapse, is an important direction for future work.

We occasionally observed that distinct latent codes were mapped to the same environment-defined mode, indicating that our mutual information objective can be sensitive to small variations in the visited state distribution. This is a known weakness of mutual information-based unsupervised skill discovery illustrated in~\cite{CSD}, and systematically assessing which trajectory-diversity metrics most effectively retain multimodal behavior is a promising direction for future work.

Finally, although the formulation was independent of language supervision, the learned latent space is amenable to post-hoc semantic grounding. Aligning modes with language via preference learning or VLA mappings and developing a joint inference model that preserves diversity while enabling reliable semantic labels are compelling directions for future work.

\section{Extended Related Work}
\label{appendix:rw}

Robotic manipulation often admits multiple distinct solutions arising from kinematic redundancies, multimodal goals, or heterogeneous demonstrations~\citep{li2025train}. We review related work on handling such multimodality in the action distribution from three perspectives: (i) general approaches to learning multimodal behaviors, (ii) fine-tuning of generative policies, where we identify three categories of RL-based techniques schematically illustrated in Figure~\ref{fig:related_work_categories}, and (iii) the skill discovery literature, which closely connects to our central idea of unsupervised mode discovery. %A more comprehensive discussion is provided in Appendix~\ref{appendix:rw} \al{is it even needed a more extensive related work? Seems big and hard to shrink even more}.

\subsection{Multimodal Behavior Learning and Action Diversity}

Robotic manipulation often admits multiple distinct solutions, arising from kinematic redundancies, multimodal goals, or heterogeneous demonstrations~\citep{li2025train}. Standard RL policies parameterized by unimodal Gaussians collapse to a single behavior, limiting expressiveness and trapping learning in suboptimal modes~\citep{huang2023reparameterized}. Early work tackled this by introducing a latent‑conditioned policy within the policy gradient framework, casting trajectory generation as a latent‑variable model to encourage exploration of distinct modes and avoid local minima~\cite{huang2023reparameterized}. Imitation learning and offline RL have built on latent representations to infer discrete behaviors directly from data. Hausman et al.\ segment unlabeled demonstrations into “intention” clusters and learn a mode‑conditioned policy for each cluster~\citep{hausman2017multi}, while LAPO refines a multimodal policy via an advantage‑weighted divergence penalty that preserves original modes during offline finetuning~\citep{chen2022latent}. Integrating expressive policy representations such as diffusion and flow‑based generative policies further improves upon this by capturing complex, high‑dimensional distributions. Deep Diffusion Policy Gradient (DDiffPG)~\citep{li2024learning} demonstrated an RL agent that discovers and maintains multiple strategies by parameterizing the policy with a diffusion model. They address the tendency of the greedy RL objective to collapse to one mode by clustering experience and doing mode-specific value learning, thereby ensuring improvement of all discovered modes. 

Similar to these approaches, our work builds on a latent-variable model, but we employ it within a steering policy rather than the main policy. Unlike prior approaches that learn multimodal behaviors from scratch, we leverage a pre-trained diffusion model that already encodes diverse demonstrations and focus on \ac{rlft} while preserving multimodality in the action distribution.

\begin{figure}[t]
  \centering
  \includegraphics[width=0.95\linewidth]{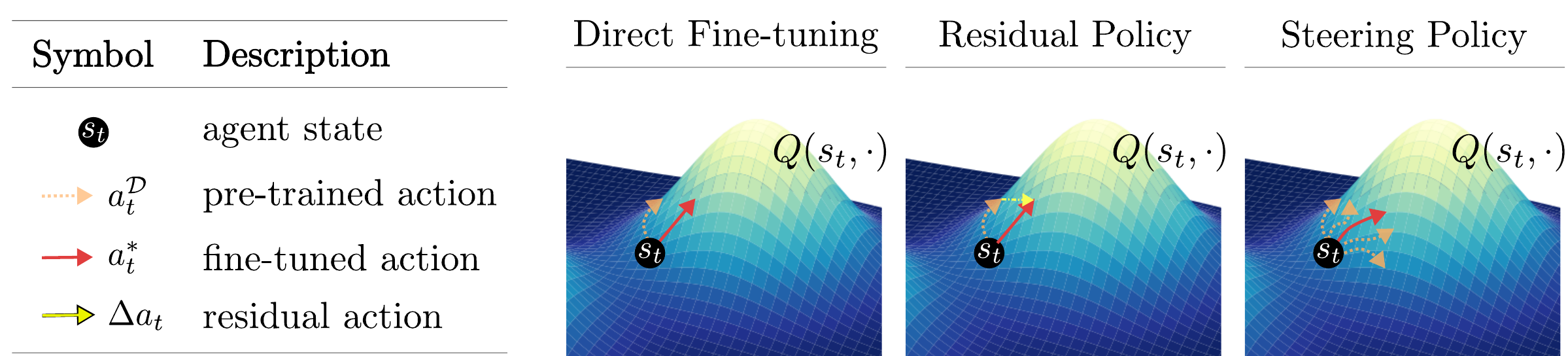}
  \caption{ \textbf{Taxonomy of \ac{rlft} Techniques Discussed in this Work.}  Each plot illustrates the learned action-value function $Q(s_t,\cdot)$ as the underlying reward landscape. Direct fine-tuning (left) adapts the pre-trained policy weights to optimize task performance, directly shifting the action distribution toward higher-value regions. Residual policies (center) learn an additive correction $\Delta a_t$ to the pre-trained action $a_t^{\mathcal{D}}$, combining them into a fine-tuned action $a_t^*$. Steering policies (right) learn a policy over the input latent noise of the generative model, biasing sampling toward regions of the noise space whose denoised actions have high-reward behaviors.  } 

  \label{fig:related_work_categories}
\end{figure}

\subsection{Fine-tuning of Pre-trained Generative Policies}
\label{sec:fine-tuning_tech}

Diffusion- and flow-based models provide expressive policy parameterizations for multimodal action distributions, but \ac{rlft} is challenging due to sequential sampling and the cost of backpropagating through the generative process. Recent work addresses these issues through three main strategies (illustrated in Figure~\ref{fig:related_work_categories}): direct fine-tuning, residual policies, and steering policies. 
\emph{Direct fine-tuning} approaches adapt the network weights either by distilling the model into a one-step sampler for easier backpropagation~\citep{park2025flow, chen2024diffusion}, by casting the denoising process as a sequential decision problem~\citep{ren2024diffusion}, or by using differentiable approximations that allow offline Q-learning without backpropagating through all denoising steps~\citep{kang2023efficient}. Despite their promise, such approaches often collapse to a single reward-maximizing mode. 
% adapt generative models to RL by directly biasing action samples toward high-value regions of the learned $Q$-function by either distilling the action sampling to enable one-step sampling which is more amenable for backpropagarion~\citep{park2025flow,chen2024diffusion}, reframing the diffusion as a decision-making problem~\cite{ren2024diffusion}, or thorugh differentiable approximations to the sampling process, enabling policy optimization via offline Q-learning while avoiding full gradient flow through the denoising trajectory~\cite{kang2023efficient}. These methods, however, often suffer from mode collapse.
\emph{Residual policy} learning methods instead freeze a pre-trained generative policy and learn a small corrective controller via \ac{rl} to address execution errors \citep{ankile2024imitation,yuan2024policy}. These techniques, along with careful regularization and architectural choices, can yield substantial performance gains over pure \ac{il}, with the potential to preserve the diversity learned from demonstrations.
\emph{Steering policy} methods instead bias the sampling process toward high-value actions without modifying the generative model itself. Some methods directly adjust training data or sampled actions using Q-values, either by nudging demonstration actions toward higher values~\citep{yang2023policy} or by combining diffusion with Q-learning to bias samples while staying close to the demonstration manifold~\citep{wang2022diffusion}. More recently, ~\cite{wagenmaker2025steering} proposed to learn to control the latent noise of generative models, guiding the sampling process toward regions of the noise space whose denoised actions yield higher reward.

% While all these approaches enable RL fine-tuning of pre-trained generative policies, they lack explicit mechanisms to preserve multimodality, and often converge to a single reward-maximizing solution. Our work extends the steering-policy framework of~\cite{wagenmaker2025steering}, by using it not only to bias behaviors toward reward, but also to discover and control latent modalities, and to quantify multimodality for use as an intrinsic reward.

Although all these approaches can successfully \ac{rlft} of pretrained policies, they lack an explicit mechanism to preserve multimodality, often collapsing to a single reward-maximizing behavior. Our approach extends the steering-policy framework~\cite{wagenmaker2025steering} by using it not only to bias behavior toward reward but also to uncover and control the latent multimodal structure of a pre-trained diffusion policy. Notably, this perspective positions the steering policy as a complementary module that can be combined with other fine-tuning methods to enforce the retention of diverse behaviors.

\subsection{Skill Discovery}

Multimodal behavior learning has also been explored through the lens of skill discovery methods.
The goal of skill discovery is to acquire a set of diverse and distinguishable behaviors without relying on external rewards. A common approach is to maximize mutual information between a latent skill variable and the states or trajectories visited by the policy, as in VIC~\citep{gregor2016variational}, DIAYN~\citep{eysenbach2018diversity}, VALOR~\citep{achiam2018variational}, VISR~\citep{hansen2019fast}, or DADS~\citep{sharma2019dynamics}. Other methods rely on successor features~\citep{machado2017eigenoption, hansen2019fast}, exploration bonuses~\citep{liu2021aps, liu2021behavior}, or hierarchical decompositions~\citep{kim2021unsupervised, zhang2021hierarchical} to induce skill diversity. 

Most of these works assume training policies from scratch in reward-free settings. However, purely diversity‑driven objectives often neglect reward alignment and directed exploration, yielding skills that may not transfer to specific manipulation goals. To mitigate this,  previous work has explored a range of approaches such as incorporating language guidance~\citep{LGSD}, combining discovery with generic extrinsic rewards~\citep{SLIM}, maximization of hard-to-achieve state transitions~\citep{CSD}, or mutual information maximization between agent and environment sections of state space~\citep{MUSIC, UnsupervisedRLTransferManip}. Our perspective is different: we leverage a pre-trained model to uncover diverse and useful behaviors already encoded in it. In particular, we are the first to study skill discovery in diffusion policies, where skills are represented as modes in the latent noise space of the generative model.

 %%%%%%%%%%%%%%%%%%%%%%%%%%%%%%%%%%%%%%%%%%%%%%%%%%%%%%%%%%%%%%%%%%%%%%%%%%%%%%%%%%%%%%%%%%
\section{Derivation of Mutual Information in Latent-Conditioned Policies}
\label{sec:app_MI_KL}

We begin by recalling the definition of conditional mutual information between a latent variable $w$ and actions $a$, given states $s$:
\begin{equation}
    I(W; A \mid S) := \E_{s \sim p(s)} \left[ 
        \E_{(a, w) \sim p(w, a \mid s)} 
        \left[ \log \frac{p(w, a \mid s)}{p(a \mid s) \, p(w \mid s)} \right] 
    \right].
\end{equation}

In the setting of latent-conditioned policies, we assume a generative process where the state $s \sim p(s)$ is sampled from a fixed distribution, the latent $w \sim p(w)$ is sampled independently of $s$, and actions are sampled from a conditional policy $\pi(a \mid s, w)$. This induces the joint distribution
\begin{equation}
    p(s, w, a) = p(s) \cdot p(w) \cdot \pi(a \mid s, w),
\end{equation}
and the conditional joint and marginals:
\begin{align}
    p(w, a \mid s) &= p(w) \cdot \pi(a \mid s, w), \\
    p(w \mid s) &= p(w).
\end{align}

Substituting these expressions into the definition of conditional mutual information, we obtain:
\begin{equation}
\begin{aligned}
    I(W; A \mid S)
    &= \E_{s \sim p(s)} \Big[ 
        \E_{w \sim p(w),\, a \sim \pi(a \mid s, w)} 
        \Big[ \log \frac{p(w)\, \pi(a \mid s, w)}{p(w)\, p(a \mid s)} \Big] 
    \Big] \\
    &= \E_{s \sim p(s)} \Big[ 
        \E_{w \sim p(w),\, a \sim \pi(a \mid s, w)} 
        \Big[ \log \frac{\pi(a \mid s, w)}{p(a \mid s)} \Big] 
    \Big].
\end{aligned}
\end{equation}

Recognizing this expression as the Kullback–Leibler (KL) divergence between the conditional distribution $\pi(a \mid s, w)$ and its marginal $p(a \mid s)$, we rewrite the mutual information as:
\begin{equation}
    I(W; A \mid S)
    = \E_{s \sim p(s)} \left[ 
        \E_{w \sim p(w)} \left[ 
            \KL \big( \pi(a \mid s, w) \,\Vert\, p(a \mid s) \big) 
        \right] 
    \right].
\end{equation}

In this formulation, $p(a \mid s)$ is interpreted as the marginal action distribution under latent sampling:
\begin{equation}
    p(a \mid s) = \E_{w \sim p(w)} \big[ \pi(a \mid s, w) \big].
\end{equation}

This derivation provides a formal and tractable characterization of the mutual information between latent variables and actions under a latent-conditioned policy. It also justifies the use of mutual information as a measure of multimodality: if $w$ has a significant influence on the action distribution $\pi(a \mid s, w)$, then the divergence between conditionals and the marginal $p(a \mid s)$ is large, leading to a high $I(W; A \mid S)$. Conversely, if the latent has little effect on the action distribution, the mutual information approaches zero.

\section{Multimodality Implies Positive Mutual Information}
\label{sec:proof_MI_positive}

\begin{proposition}
Let $W, A$ and $S$ be discrete random variables such that
\begin{align*}
P_{A|S,W}(\cdot | s_0, w_1) \ne P_{A|S,W}(\cdot | s_0, w_2)
\end{align*}
for some $s_0, w_1, w_2$ with $P_S(s_0) P_W(w_1) P_W(w_2) > 0$.
Then  $I(W;A|S) > 0$. 
\end{proposition}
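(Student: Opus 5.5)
We argue by contraposition: assuming $I(W;A\mid S)=0$, we show that $P_{A|S,W}(\cdot\mid s_0,w)$ cannot depend on $w$ among values of positive probability. The tool is the standard characterization of conditional mutual information as an averaged KL divergence, as in Appendix~\ref{sec:app_MI_KL}:
\begin{equation*}
I(W;A\mid S)=\sum_{s,w} P_{S,W}(s,w)\,\KL\big(P_{A|S,W}(\cdot\mid s,w)\,\Vert\,P_{A|S}(\cdot\mid s)\big),
\end{equation*}
where $P_{A|S}(\cdot\mid s)=\sum_w P_{W|S}(w\mid s)P_{A|S,W}(\cdot\mid s,w)$. This identity holds for general discrete variables and does not need the independence $p(w\mid s)=p(w)$. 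The proposition's hypothesis is phrased with $P_S(s_0)P_W(w_1)P_W(w_2)>0$, and this implies $P_{S,W}(s_0,w_i)>0$ exactly under the independence assumed in Appendix~\ref{sec:app_MI_KL}, namely $p(s,w,a)=p(s)p(w)\pi(a\mid s,w)$. I will adopt that assumption explicitly, so that $P_{S,W}(s_0,w_i)=P_S(s_0)P_W(w_i)>0$ for $i=1,2$. Without some such assumption, $P_{A|S,W}(\cdot\mid s_0,w_i)$ is not even well defined when $P_{S,W}(s_0,w_i)=0$.

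\textbf{Key steps.}
\begin{enumerate}
\item Every summand above is nonnegative by Gibbs' inequality. All weights $P_{S,W}(s,w)$ are nonnegative.
\item Suppose $I(W;A\mid S)=0$. Then each term with positive weight vanishes. In particular, for $i=1,2$,
\begin{equation*}
\KL\big(P_{A|S,W}(\cdot\mid s_0,w_i)\,\Vert\,P_{A|S}(\cdot\mid s_0)\big)=0 .
\end{equation*}
\item KL divergence is zero iff the two distributions coincide. Hence
\begin{equation*}
P_{A|S,W}(\cdot\mid s_0,w_1)=P_{A|S}(\cdot\mid s_0)=P_{A|S,W}(\cdot\mid s_0,w_2),
\end{equation*}
which contradicts the hypothesis. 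Therefore $I(W;A\mid S)>0$.
\end{enumerate}

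\textbf{Main obstacle.} The substance lies in the equality case of Gibbs' inequality, $\KL(P\Vert Q)=0\Rightarrow P=Q$, together with the well-definedness of the KL terms. Absolute continuity $P_{A|S,W}(\cdot\mid s_0,w)\ll P_{A|S}(\cdot\mid s_0)$ holds automatically, because the marginal is a mixture with positive weight on that component. For the equality case I would use strict concavity of $\log$ via Jensen: $-\KL(P\Vert Q)=\sum_a P(a)\log\frac{Q(a)}{P(a)}\le\log\sum_{a:P(a)>0}Q(a)\le 0$. Equality forces $Q(a)/P(a)$ to be constant on the support of $P$ and forces $Q$ to put all its mass there, so $P=Q$.

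An alternative route avoids the mixture entirely. One can show directly that $I(W;A\mid S)=0$ iff $W\perp A\mid S$, i.e.\ $P_{W,A|S}=P_{W|S}P_{A|S}$ on $\{P_S>0\}$. Conditional independence then gives $P_{A|S,W}(\cdot\mid s_0,w_i)=P_{A|S}(\cdot\mid s_0)$ whenever $P_{S,W}(s_0,w_i)>0$. I would present the KL-decomposition argument as the main proof, since it reuses Appendix~\ref{sec:app_MI_KL} directly.
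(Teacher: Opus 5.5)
Your proposal is correct and follows essentially the paper's route: the KL decomposition of $I(W;A\mid S)$ from Appendix~\ref{sec:app_MI_KL}, combined with strict positivity of the KL divergence between distinct distributions via the equality case of Jensen. Your contrapositive (both terms vanish, so both conditionals equal the marginal) is equivalent to the paper's choice of an index $i^*$ whose conditional differs from $P_{A|S}(\cdot\mid s_0)$, and your explicit adoption of $p(s,w)=p(s)p(w)$ is the same independence the paper uses implicitly when it weights the surviving term by $P_S(s_0)P_W(w_{i^*})$.
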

\begin{proof}
First we show that for any discrete probability distributions $P$ and $Q$ on a common sample space $\mathcal{X}$, 
\begin{align}
P \ne Q \qquad\Rightarrow \qquad \DKL(P\parallel Q) > 0.
\label{KL-positive}
\end{align}
Indeed, as $\log$ is concave, 
\begin{align*}
-\DKL(P\parallel Q) = \sum_{x : P(x)>0} P(x) \log \frac{Q(x)}{P(x)} \overset{\text{(a)}}{\le} \log\left(\sum_{x : P(x)>0} P(x) \frac{Q(x)}{P(x)}\right) \overset{\text{(b)}}{\le} 0.
\end{align*}
Moreover, either:
\begin{itemize}
\item $Q(x_1)/P(x_1) \ne Q(x_2)/P(x_2)$ for some $x_1, x_2$ in the support of $P$, and as $\log$ is strictly concave, it follows that inequality (a) is strict;
\item or $Q(x)/P(x)$ is constant for $x$ in the support of $P$, and as $P \ne Q$ it follows that $\sum_{x : P(x) > 0} Q(x) < 1$ so inequality (b) is strict.
\end{itemize}
Therefore (\ref{KL-positive}) holds.

Let $i^* \in \{1,2\}$ be an index such that $P_{A|S}(\cdot | s_0) \ne P_{A|S,W}(\cdot|s_0,w_{i^*})$, noting that such an $i^*$ exists by the hypothesis that the distributions $P_{A|S,W}(\cdot | s_0, w_i)$ for $i=1,2$ are distinct.
Using the expression for $I(W;A|S)$ of Appendix~[YourRef], and the nonnegativity of KL-divergence,
\begin{align*}
I(W;A|S) &= \mathbb{E}_{S} \mathbb{E}_{W}[\DKL( P_{A|S,W}(\cdot | S, W) \parallel P_{A|S}(\cdot | S))] \\
&\ge P_S(s_0) P_W(w_{i^*}) \DKL( P_{A|s_0,w_{i^*}}(\cdot | s_0, w_{i^*}) \parallel P_{A|S}(\cdot | s_0)) \\
&> 0
\end{align*} 
where in the last line we used  the hypothesis that $P_S(s_0) P_W(w_1) P_W(w_2) > 0$ and equation (\ref{KL-positive}).
\end{proof}

\section{Method Details}

\subsection{Connection to Skill Discovery.}
\label{appendix:skill_discovery_discussion}
The variational lower bound in equation~\ref{eq:variational_MI} is formally analogous to those used in prior skill discovery methods, but its purpose in our setting is fundamentally different.
In mutual-information-based skill discovery, the bound is optimized jointly with the policy to encourage exploration and broaden coverage of the state space. By contrast, our diffusion policy is pre-trained and fixed, so the mutual information objective cannot alter the state distribution. Instead, maximizing $I(Z;  S)$ serves to uncover and control the intrinsic multimodality already embedded in the generative policy by promoting diversity in the action space $\mathcal{A}$. In addition, this bound provides a practical metric to quantify multimodality in pre-trained policies, as demonstrated in Section~\ref{sec:2D_Gaussians}.

%, but given the fixed pre-trained policy this also presents a lower bound for conditioned on states already visited by the pre-trained model. 
% While the mapping between states and actions might not be strictly bijective, so the lower bound does not fully capture the mutual information between $Z$ and $\mathcal{A}$, it nevertheless provides a principled mechanism to uncover latent modes encoded in the diffusion policy. Thus, rather than discovering novel skills through active interaction with the environment, our approach leverages the same mutual information lower bound to reveal and steer the intrinsic multimodality of a fixed generative policy.

\subsection{Curriculum Learning.} 

Unlike in standard skill discovery, we have access to full trajectory rollouts for each mode we want to discover. However, this makes the joint optimization of the steering policy and inference model challenging as the policy must maintain temporal consistency while producing behaviors that remain discriminable by the inference model $q_\phi$, which can lead to instability during training. To mitigate this, we introduce a curriculum strategy that gradually increases the trajectory horizon. Concretely, instead of unrolling episodes for the full environment length $T$ from the outset, we begin training with shorter horizons $H < T$ and progressively extend them until reaching the maximum length. This staged schedule eases the optimization by allowing the policy to first acquire locally consistent behaviors before being required to sustain them over longer time horizons, thereby improving the stability and quality of the learned latent modes. The proposed curriculum is visualized in Figure~\ref{fig:curriculum}.

\begin{figure}[h]
    \centering
    \includegraphics[width=0.98\linewidth]{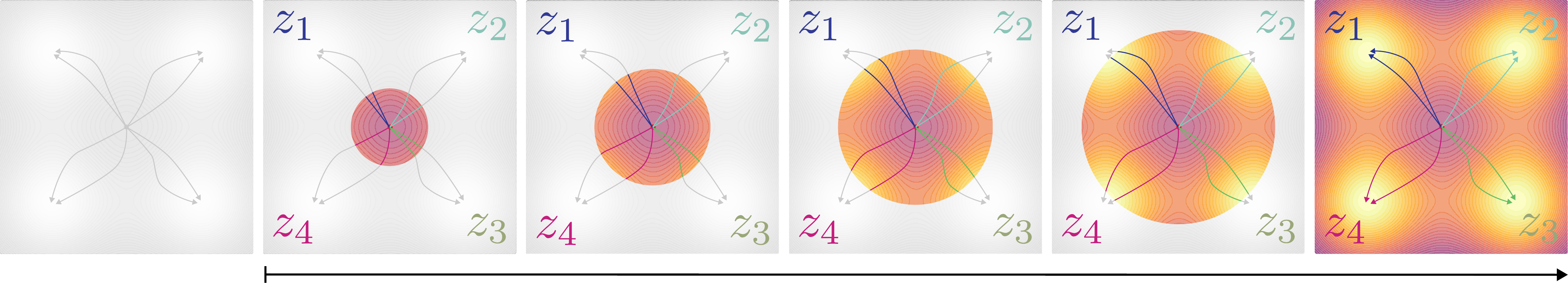}
    \caption{\textbf{Curriculum Learning.} Illustration of the curriculum strategy in a toy environment with four discrete modes. The environment is defined by a mixture of four Gaussian modes (details in Section~\ref{sec:2D_Gaussians}), each corresponding to a distinct cluster of trajectories. Starting from short horizons, the inference model $q_\phi$ only needs to discriminate local trajectory prefixes, which simplifies learning. As the horizon gradually increases, the trajectory distributions expand, and the modes become more separable across the state-action space. The curriculum thus enables the steering policy to develop temporally consistent and discriminable behaviors, progressively uncovering the underlying latent structure of the pre-trained model.}

    \label{fig:curriculum}
\end{figure}

\subsection{Algorithm}
\label{appendix:algorithm}

We outline here Algorithm~\ref{alg:mode_finetuning}. We begin from the pre-trained diffusion policy $\pi_{\theta}(a\mid s,w)$ and initialize the steering policy $\pi^{\mathcal{W}}_{\psi}(w\mid s,z)$, inference model $q_{\phi}(z\mid s,a)$, and critic $V_{\omega}(s,z)$, with intrinsic scale $\lambda\!\ge\!0$, uniform prior $p(z)$, epochs $E$, episodes per epoch $N$, warm-start $E_{\text{wp}}$, initial horizon $H_0$, max horizon $T$, and a scheduler $H(e)\!\in\![H_0,T]$ that increases the rollout horizon by a fixed step every $20$ epochs after a first warm-up of $100$ epochs. For each epoch $e$ and episode $n$, we sample a latent $z\!\sim\!p(z)$ once and keep it fixed over the rollout of length $H(e)$; at each step we draw $w_t\!\sim\!\pi^{\mathcal{W}}_{\psi}(w\mid s_t,z)$, then $a_t\!\sim\!\pi_{\theta}(a\mid s_t,w_t)$, and transition $s_{t+1}\!\sim\!p(\cdot\mid s_t,a_t)$. The intrinsic reward is ${r^{\text{int}}_t=\lambda\big(\log q_{\phi}(z\mid s_{t+1},a_t)-\log p(z)\big)}$. During the \emph{mode-discovery} stage ($e<E_{\text{wp}}$) we optimize using intrinsic-only returns $r^{\text{tot}}_t=r^{\text{int}}_t$, allowing the curriculum $H(e)$ to grow from $H_0$ toward $T$ so the policy first attains locally consistent behaviors before sustaining them over longer horizons. After the warm-start ($e\ge E_{\text{wp}}$), we introduce the task reward and train with ${r^{\text{tot}}_t = r_{\text{env}}(s_t,a_t) + r^{\text{int}}_t}$ to steer toward high-return regions without collapsing diversity. At the end of each epoch, we update the actor and critic with PPO, minimizing $L_{\pi}^{\text{PPO}}(\psi)+c_V L_V(\omega)+c_{\mathcal{H}} L_{\mathcal{H}}(\psi)$, and train the inference model by NLL, $\min_{\phi} L_q(\phi) = -\mathbb{E}[\log q_{\phi}(z\mid s,a)]$; this repeats for $e=1,\dots,E$ with horizon scheduling and the stage switch as specified.

\section{Implementation Details} 
\label{appendix:implementation_details}
We now detail the implementation and training of the pre-trained policy, all the baseline policies, and the discriminator. We also describe how our method integrates with these general fine-tuning strategies. All approaches employ PPO as the \ac{rl} algorithm for \ac{rlft} with clipping parameter $\epsilon=0.2$, GAE $\lambda=0.95$, discount $\gamma=0.99$, and Adam with learning rate $3\times 10^{-4}$. To facilitate reproducibility, we will release the full codebase together with all hyperparameters required to reproduce the results reported in this paper.

\subsection{Pre-trained policy and DPPO fine-tuning} 
The diffusion policy is trained with the standard behavioral cloning objective for diffusion models, where the network predicts the injected noise conditioned on the noisy actions. We follow the implementation and hyperparameter setup of DPPO~\cite{ren2024diffusion}, using a cosine noise schedule during training. The action horizon coincides with the execution horizon and consists of $4$ action steps per chunk. Pre-training is performed with $20$ denoising steps, while inference uses DDIM~\citep{song2020denoising} sampling with $2$ steps. For frozen policies, we set $\eta=0$, whereas for fine-tuning, we set $\eta=1$, which is equivalent to applying DDPM~\citep{ho2020denoising}. This choice ensures steerability of the policy and avoids memoryless noise schedules. The policy head is implemented as a multi-layer perceptron (MLP) with hidden dimensions $\{512, 512, 512\}$, and a time-embedding dimension of $16$, which we found to improve training stability compared to UNet backbones, similar to~\cite{ren2024diffusion}. For fine-tuning, we follow the implementation and hyperparameters introduced in~\cite{ren2024diffusion}, with the only addition of decreasing the number of fine-tuning steps of the denoising process from $10$ to $2$ to ensure a non-memoryless noise schedule.

\subsection{Residual Policy} 
The residual policy learns an additive correction to the action chunk $a_{t:t+H}$ of length $H$ proposed by the pre-trained diffusion policy, such that $a^*_{t:t+H} = a_{t:t+H} + \lambda \Delta a_{t:t+H}$. 
Concretely, the residual network receives as input the state and the pre-trained action chunk, and outputs a correction term that is passed through a $\tanh$ activation to ensure bounded updates, $\pi^{\mathrm{RES}}(\Delta a_{t:t+H}\mid s_t, a_{t:t+H})$. 
To prevent the residual from completely overriding the original action, its contribution is scaled by a tunable factor $\lambda$, which balances task success with fidelity to the pre-trained behavior. 
This scaling parameter is selected following prior work and tuned empirically to trade off between preserving the original action distribution and improving task success rates. 
The residual policy is implemented as a Gaussian policy parameterized by a multilayer perceptron with hidden layers of dimension $\{256, 256, 256\}$ and Mish activations.

\subsection{Steering policy} 
The steering policy $\pi_\psi^{\mathcal{W}}(w \mid s, z)$ is implemented as a Gaussian policy parameterized by an MLP with hidden layers of size $\{256, 256, 256\}$. %To constrain its support within that of the original diffusion prior, we apply a KL regularization during training of the form
% \[
% \mathcal{L}_{\mathrm{KL}} = \mathbb{E}_{s,z}\Big[ 
% D_{\mathrm{KL}}\!\left(\pi_\psi^{\mathcal{W}}(w \mid s,z)\,\big\|\,\mathcal{N}(0,I)\right)
% \Big],
% \]
% where $\mathcal{N}(0,I)$ denotes the isotropic Gaussian prior used in the diffusion model. 
The latent variable $z \in {0,1,\dots,K{-}1}$ is sampled from a uniform categorical prior $p(z)$, as we empirically found discrete latents easier to learn and more stable than continuous ones. The dimensionality of the latent space is a hyperparameter; in the experiments, we consider $K=\{ 4, 8, 16\}$. Training proceeds in two stages: for the first 200 epochs, the steering policy is optimized only with the intrinsic reward $\log q_\phi(z \mid s, a) - \log p(z)$, serving as a mode-discovery phase; in the remaining epochs, the environment reward is added to steer behaviors toward high-return regions while retaining multimodality. %\al{Missing the critic?}

\subsection{Inference model}
The inference model $q_\phi(z \mid s)$ is implemented as a categorical classifier over the latent codes ${z \in \{0,\dots,K-1\}}$. It consists of a multilayer perceptron with hidden layers of dimension $\{256,256,256\}$, Mish activations~\citep{misra2019mish}, and a final softmax output producing the class probabilities $q_\phi(z \mid s)$. To prevent overfitting to small variations in continuous states, Gaussian noise with standard deviation $\{1.0, 0.01, 0.001\}$ (depending on the task) is injected into the inputs during training only. 
The model is trained by minimizing the negative log-likelihood $
\mathcal{L}_{\mathrm{NLL}}(\phi) = - \mathbb{E}_{(s,a,z)}\big[\log q_\phi(z \mid s)\big],
$
where the expectation is taken over state-action pairs generated by the steering policy and latent codes sampled from the prior $p(z)$. During training of the steering policy, the log-posterior $\log q_\phi(z \mid s)$ serves as an intrinsic reward, combined with the prior correction term $-\log p(z)$, thereby providing the intrinsic objective for mode discovery and diversity-preserving fine-tuning.

\subsection{Integrating with other fine-tuning techniques.} 
The steering policy with mode discovery uncovers and controls the behavioral modes of the pre-trained diffusion mode, steering them toward regions of high reward. However, because this mechanism does not update the diffusion weights directly, its performance remains bounded by the expressiveness of the pre-trained policy. From this perspective, the steering policy can be viewed as an \emph{exploration agent} that guides state visitation in a structured way, and can therefore be seamlessly combined with existing fine-tuning methods discussed in Section~\ref{sec:rw}. A key distinction is that our framework provides access to a discriminator that evaluates whether the fine-tuned behaviors remain consistent with the discovered modes, supplying an intrinsic reward that discourages collapse into a single strategy. While the steering policy itself can continue to adapt jointly with the diffusion model, we found it beneficial to update the discriminator with a very low learning rate: this allows it to accommodate novel states encountered during fine-tuning while preserving the previously identified mode structure, thereby stabilizing multimodality retention.

% \

\section{Baseline Methods and Evaluation Metrics Discussion}

Following the characterization introduced in Section~\ref{sec:fine-tuning_tech}, we benchmark our approach against representative strategies for on-policy fine-tuning of generative policies, focusing on diffusion models but noting that analogous evaluations apply to flow-matching policies. Specifically, we consider methods that do (i) direct fine-tuning, (ii) residual corrections, and (iii) steering, noting that none of these explicitly seek to preserve multimodality. As a direct fine-tuning approach, we include \texttt{DPPO}~\citep{ren2024diffusion}, which optimizes the diffusion policy weights with PPO. We consider the DDIM parameterization of the generative process to ensure non-memoryless noise schedules, while maintaining a balance between $\eta > 0$ and the number of reverse diffusion steps to facilitate weight fine-tuning. To examine the effect of decreasing the number of reverse diffusion steps, we also consider the original hyperparameters of the \texttt{DPPO} baseline that uses the full denoising chain for action sampling with DDPM parameterization, and fine-tunes the last $10$ steps, denoted \texttt{DPPO[10]}, which makes the generation process non-memoryless.

As a residual fine-tuning approach (\texttt{RES}), we evaluate Policy Decorator~\citep{yuan2024policy}, where a lightweight residual network is trained on top of the frozen pre-trained diffusion model. This allows task adaptation while limiting catastrophic interference with the base model. 
Finally, we consider~\cite{wagenmaker2025steering} as a steering-based policy \texttt{DSRL}, which adapts the latent noise distribution $w$ to bias the pre-trained policy toward high-reward behaviors. This category operates entirely in the latent space and, like the others, does not include any explicit mechanism for mode discovery or diversity preservation. 

Importantly, our approach is orthogonal to these categories: the proposed multimodality-preserving regularizer can be combined with either residual or steering-based fine-tuning under non-memoryless noise schedules. Accordingly, we report results both for the standalone baselines and for their variants augmented with our multimodality regularizer, denoted as \texttt{X[\methodname{}]}, where \texttt{X} indicates the corresponding baseline. Full implementation details for all baselines and their regularized variants are provided in Appendix~\ref{appendix:implementation_details}.

\paragraph{Evaluation Metrics.}
We assume access to the ground truth modes of the trajectories executed by the policy in simulation. and we evaluate fine-tuned policies along two axes: \emph{task success} and \emph{behavioral diversity}. For task success, we report the overall success rate $\mathrm{SR}$, and two mode-aggregated success measures: the success rate weighted for each mode
$
{\mathrm{SR}_{\text{M}}=\tfrac{1}{K}\sum_{i=1}^K \mathrm{SR}_i,}
$
which guards against degenerate solutions (e.g.\, \(100\%\) success on a single mode but failure on others), and
mode coverage ${\mathrm{mc}@\tau=\tfrac{1}{K}\sum_{i=1}^K \mathbf{1}\{\mathrm{SR}_i \ge \tau\}}$,
the fraction of modes solved above threshold \(\tau\).

To further measure multimodality, we follow the D3IL benchmark~\citep{jia2024towards} and compute the entropy of the empirical distribution over modes among all rollouts: $H(\pi)=-\sum_{i=1}^K p_i\log p_i$, where $p_i$ is the fraction of episodes in mode $i$. A higher entropy reflects more balanced usage of the available modes, whereas a reduction after fine-tuning is indicative of mode collapse. All metrics are computed from $N=1024$ evaluation episodes with fixed seeds for fair comparison, and we report both the mean and standard deviation over three independent runs with different random seeds.

\begin{figure*}[t!]
  \centering

  % ---------- LEFT: Original ----------
  \begin{minipage}[t]{0.32\textwidth}\vspace{40pt}
    \centering
    \includegraphics[width=0.9\linewidth]{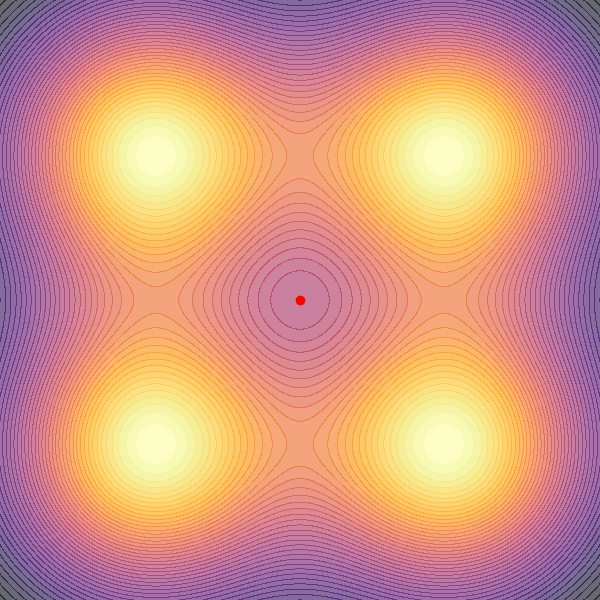}
    \subcaption{Original.}
    % \label{fig:original}
  \end{minipage}\hfill%
  \begin{minipage}[t]{0.32\textwidth}\vspace{40pt}
    \centering
    \includegraphics[width=0.9\linewidth]{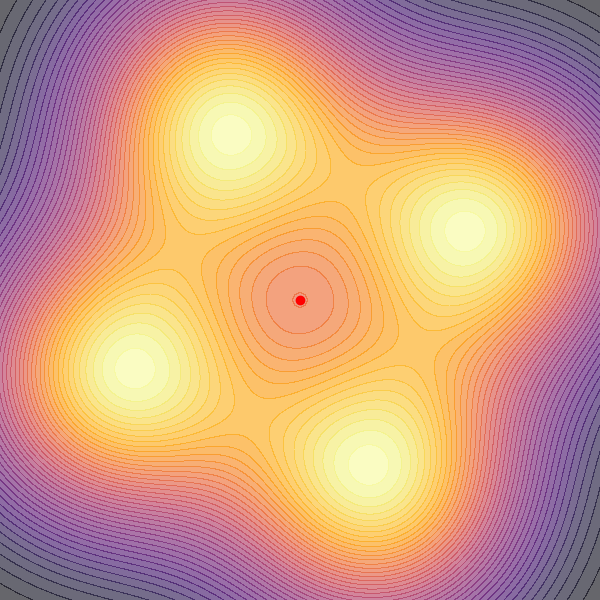}
    \subcaption{\textbf{G1}: $\pi/8$}
    % \label{fig:original}
  \end{minipage}\hfill%
  \begin{minipage}[t]{0.32\textwidth}\vspace{40pt}
    \centering
    \includegraphics[width=0.9\linewidth]{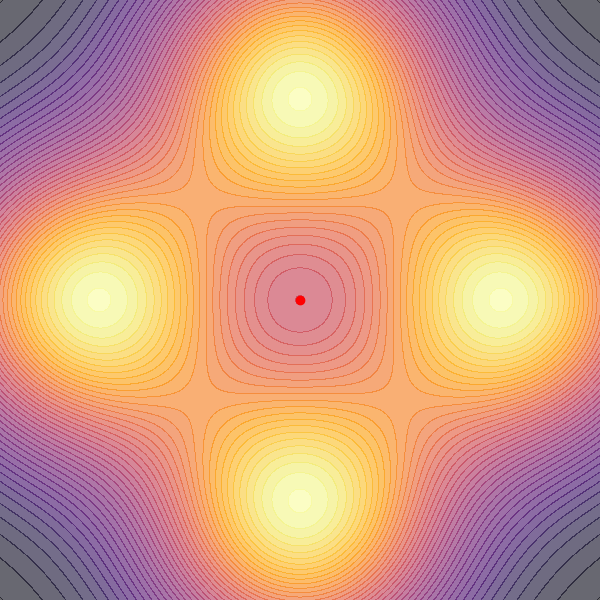}
    \subcaption{\textbf{G2}:  $\pi/4$}
    % \label{fig:original}
  \end{minipage}

  \caption{Reward landscapes: (a) Original environment and the rotated goal variants.}
  \label{fig:reward_landscapes}
\end{figure*}

\section{2D Gaussian Mixture Environment}

We provide in this section detailed information regarding the implementation of the 2D Gaussian mixture environment, as well as ablation evaluation on the dimensionality of the latent space, the structure learned by the steering policy, and the effect of removing the steering policy after fine-tuning.

\subsection{Implementation Details}
\label{appendix:gaussian_env}
We designed a two-dimensional navigation task where the reward landscape is given by a mixture of $4$ Gaussians.  
The agent’s state is its position $(x,y) \in \mathbb{R}^2$, initialized at the origin $(0,0)$.  
Actions are modeled as displacements $(\Delta x, \Delta y)$ applied at each step.  
The instantaneous reward at position $\mathrm{pos}=(x,y)$ is defined as
\begin{equation}
    r(x,y) = \sum_{(c_x,c_y) \in \mathcal{C}} \exp\!\left(-\tfrac{(x-c_x)^2 + (y-c_y)^2}{2\sigma^2}\right),
\end{equation}
where $\mathcal{C}$ is the set of goal centers and $\sigma$ controls the spread of each Gaussian mode.  Each Gaussian mode contributes equally to the reward. This creates a symmetric multimodal environment where all goal regions are equally attractive.  
An episode is successful if the agent reaches within a fixed distance of any goal center. Figure~\ref{fig:reward_landscapes} provides visualizations of the different reward landscapes used in the experiments.

% We consider two variants of this reward landscape:  

% \begin{itemize}
% \item \textbf{Balanced landscape.} Each Gaussian mode contributes equally to the reward. This creates a symmetric multimodal environment where all goal regions are equally attractive.  

% \item \textbf{Unbalanced landscape.} To introduce variability in mode prominence, we assign each Gaussian a random weight $w_i \sim \mathcal{U}(0,1)$.  
% To avoid degenerate scaling while preserving relative preferences, the weights are normalized via a softmax transformation, i.e.
% \[
% \tilde{w}_i = \frac{\exp(w_i)}{\sum_j \exp(w_j)},
% \]
% and the reward is defined as $r(x,y) = \sum_i \tilde{w}_i \, \exp\!\left(-\tfrac{(x-c_x^{(i)})^2 + (y-c_y^{(i)})^2}{2\sigma^2}\right)$.  
% This ensures that all modes remain present but with uneven reward magnitudes, yielding a more challenging and realistic multimodal landscape.
% \end{itemize}

% We refer to these as the \textbf{unbalanced G1} and \textbf{unbalanced G2} environments.  

\subsection{Expert Demonstrations and Policies Rollouts}
\label{appendix:gaussian_env_demos}

We collect demonstrations using a simple heuristic that samples actions to move the agent towards a randomly selected goal center, adding a small random noise to the action. Figure~\ref{fig:q1-env-trajectories} (top) shows the expert demonstration dataset used for the experiments in section~\ref{sec:2D_Gaussians}, along with the corresponding rollouts of the pre-trained policies (middle) as well as a Monte Carlo estimate of the action distribution at $t=0$ (bottom).

The modes learned by the steering policy are illustrated in Figure~\ref{fig:q1-modes-by-z}, where the policy is conditioned on different latent codes $z\in\{0,1,2,3\}$. The consistent trajectories across different $z$ values suggest that the steering policy is able to learn a latent space that organizes noise into meaningful and controllable behavioral modes.

% \begin{figure*}[t!]
%   \centering
%     \centering
%     % Row 1
%     \begin{subfigure}[t]{0.32\textwidth}
%       \centering
%       \includegraphics[width=\linewidth]{figures/2D_Mixture/expert_trajectories_1.png}
%       \caption{Dataset (1 mode)}
%     \end{subfigure}\hfill
%     \begin{subfigure}[t]{0.32\textwidth}
%       \centering
%       \includegraphics[width=\linewidth]{figures/2D_Mixture/expert_trajectories_2.png}
%       \caption{Dataset (2 modes)}
%     \end{subfigure}\hfill
%     \begin{subfigure}[t]{0.32\textwidth}
%       \centering
%       \includegraphics[width=\linewidth]{figures/2D_Mixture/expert_trajectories_4.png}
%       \caption{Dataset (4 modes)}
%     \end{subfigure}

%     \captionof{figure}{Expert datasets with different multimodal behaviors used to pre-train diffusion models to investigate mutual information as a proxy of multimodality. }
%     \label{fig:appendix_gaussian_demo}
% \end{figure*}

\begin{figure*}[t]
  \centering

    % ---------- LEFT: 2 rows × 3 cols ----------
    \begin{minipage}[t]{0.60\textwidth}
      \centering
      % Row 1
      \begin{subfigure}[t]{0.32\textwidth}
        \centering
        \includegraphics[width=0.9\linewidth]{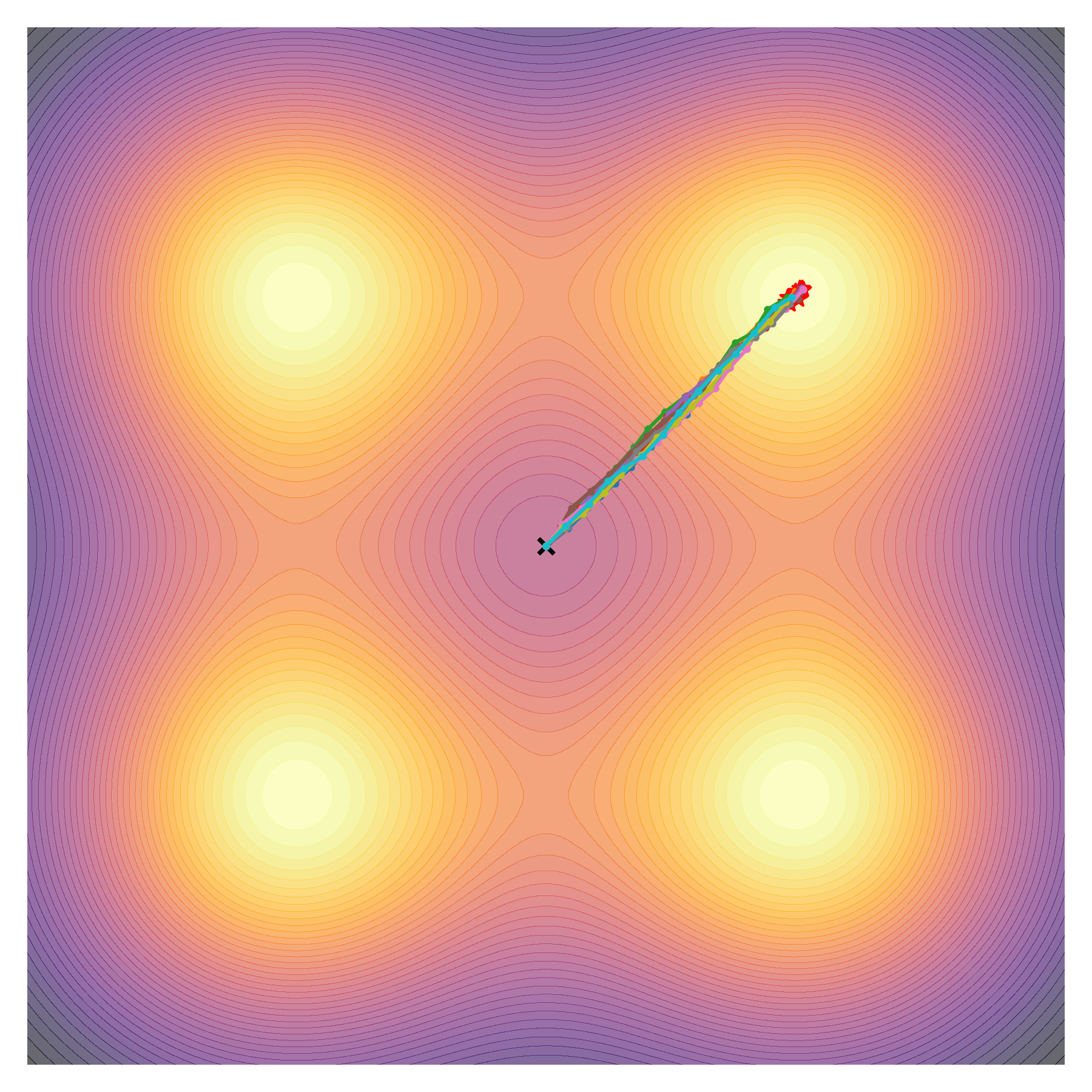}
        \caption{Dataset (1 mode)}
      \end{subfigure}\hfill
      \begin{subfigure}[t]{0.32\textwidth}
        \centering
        \includegraphics[width=0.9\linewidth]{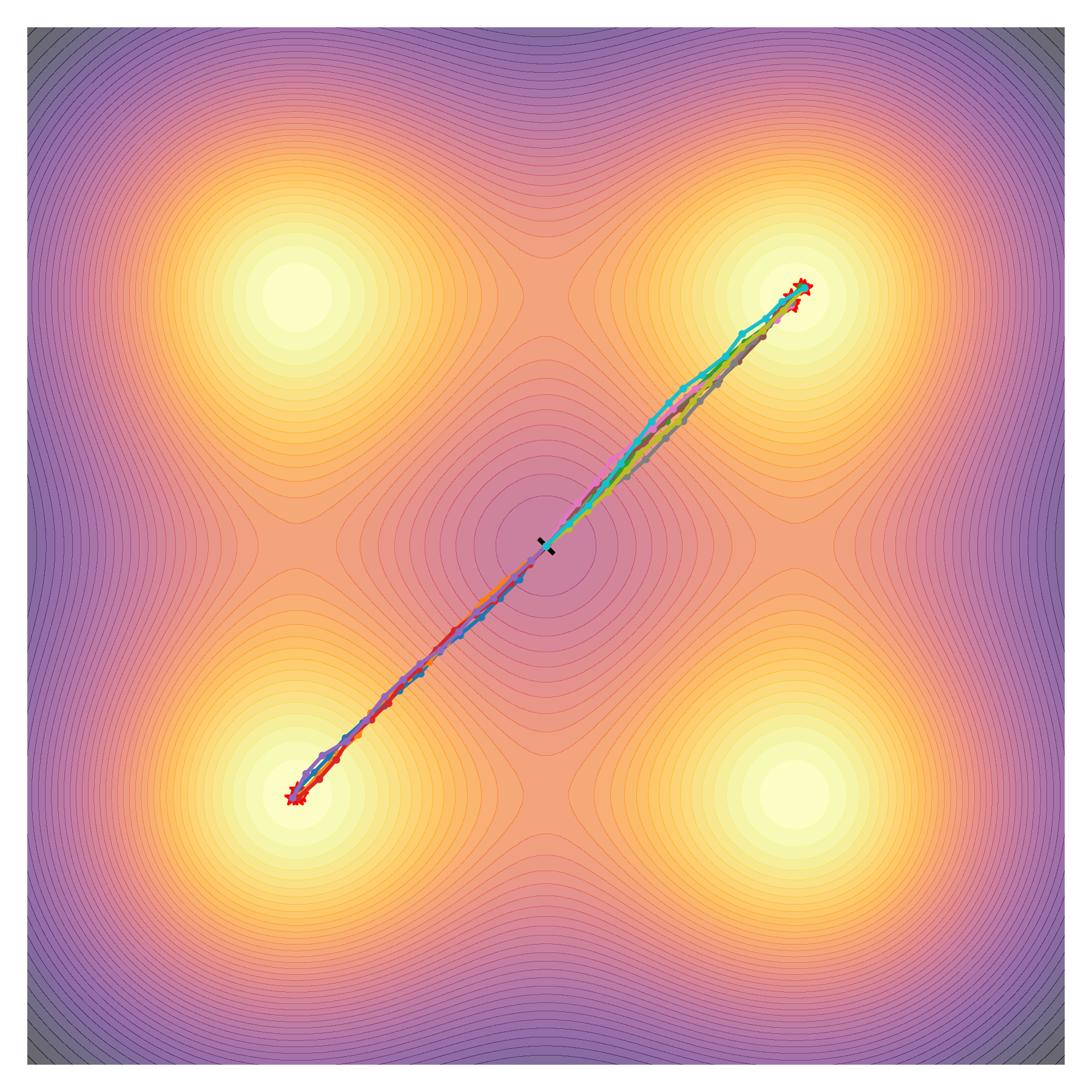}
        \caption{Dataset (2 modes)}
      \end{subfigure}\hfill
      \begin{subfigure}[t]{0.32\textwidth}
        \centering
      \includegraphics[width=0.9\linewidth]{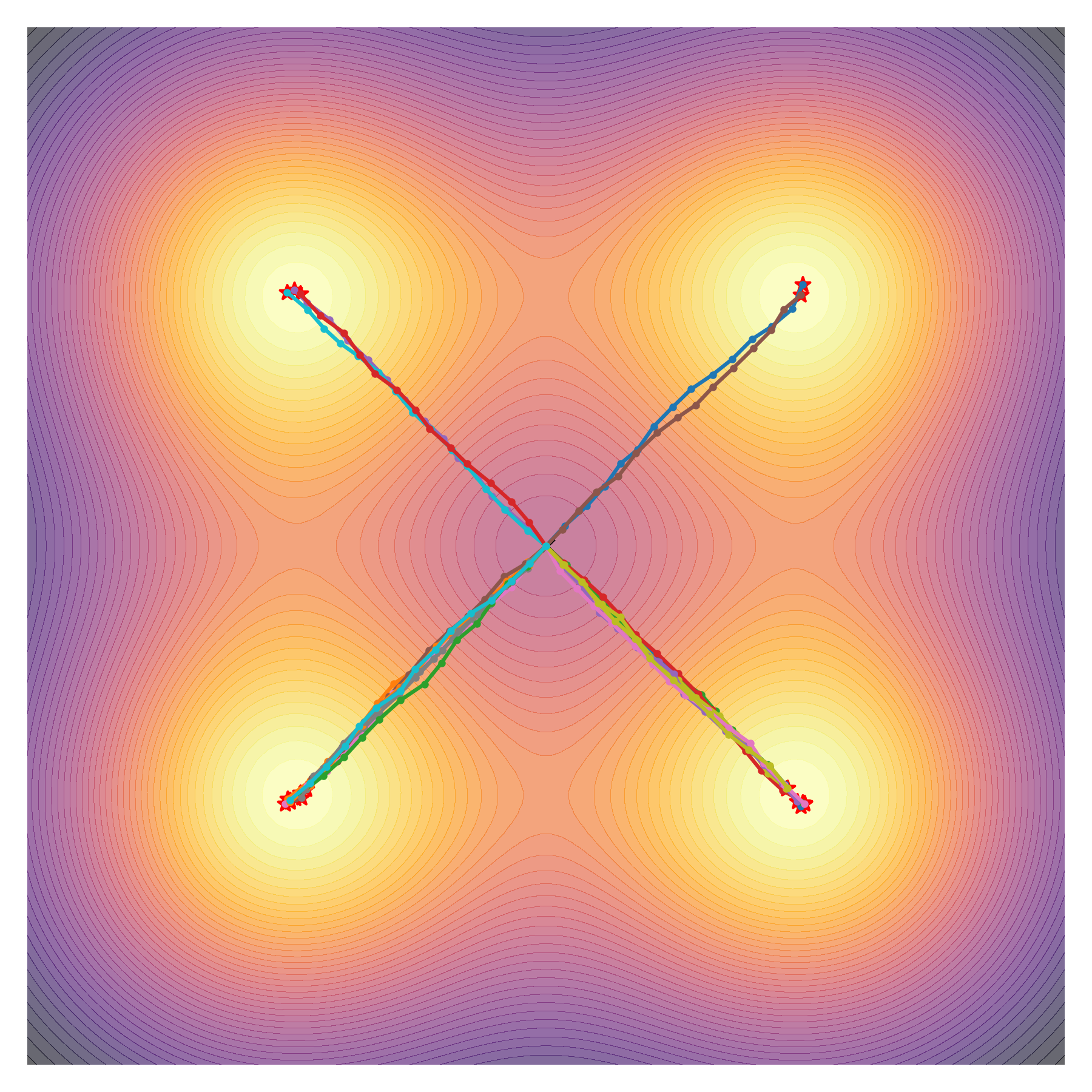}
        \caption{Dataset (4 modes)}
      \end{subfigure}

      \vspace{0.6em}
  % ---------- LEFT: 3 rows × 3 cols ----------
    % Row 1
    \begin{subfigure}[t]{0.32\textwidth}
      \centering
      \includegraphics[width=0.9\linewidth]{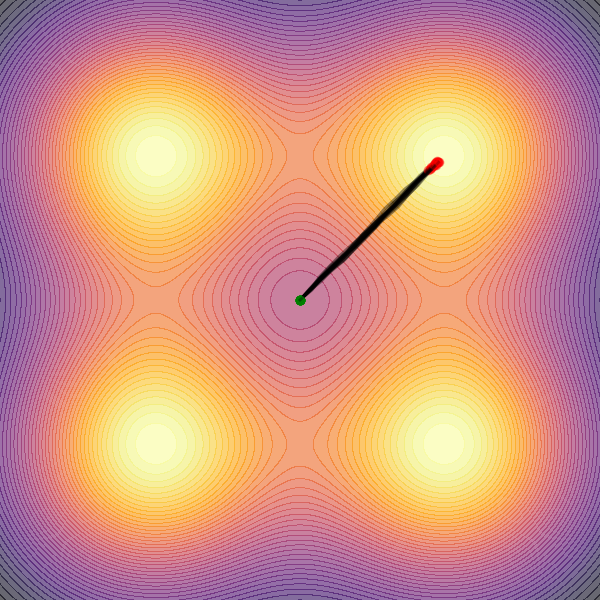}
      \caption{Policy (1 mode)}
    \end{subfigure}\hfill
    \begin{subfigure}[t]{0.32\textwidth}
      \centering
      \includegraphics[width=0.9\linewidth]{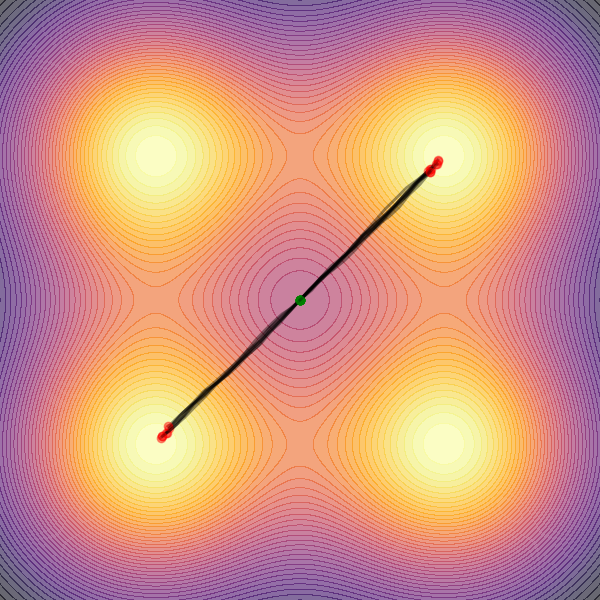}
      \caption{Policy (2 modes)}
    \end{subfigure}\hfill
    \begin{subfigure}[t]{0.32\textwidth}
      \centering
    \includegraphics[width=0.9\linewidth]{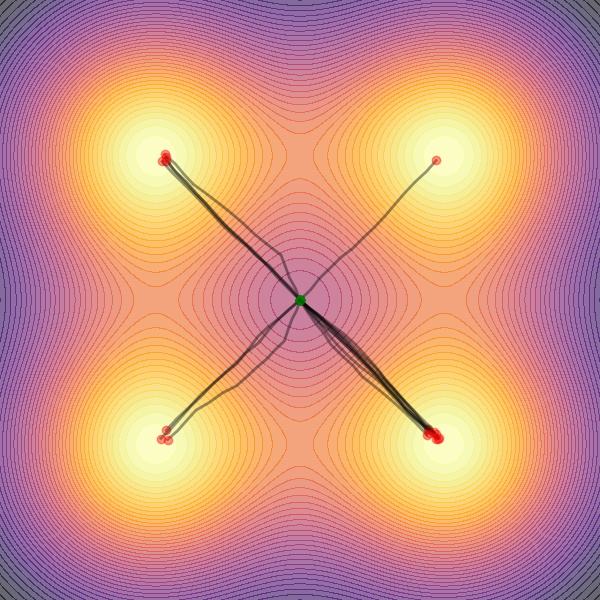}
      \caption{Policy (4 modes)}
    \end{subfigure}

    \vspace{0.6em}

    % Row 2
    \begin{subfigure}[t]{0.32\textwidth}
      \centering
      \includegraphics[width=\linewidth]{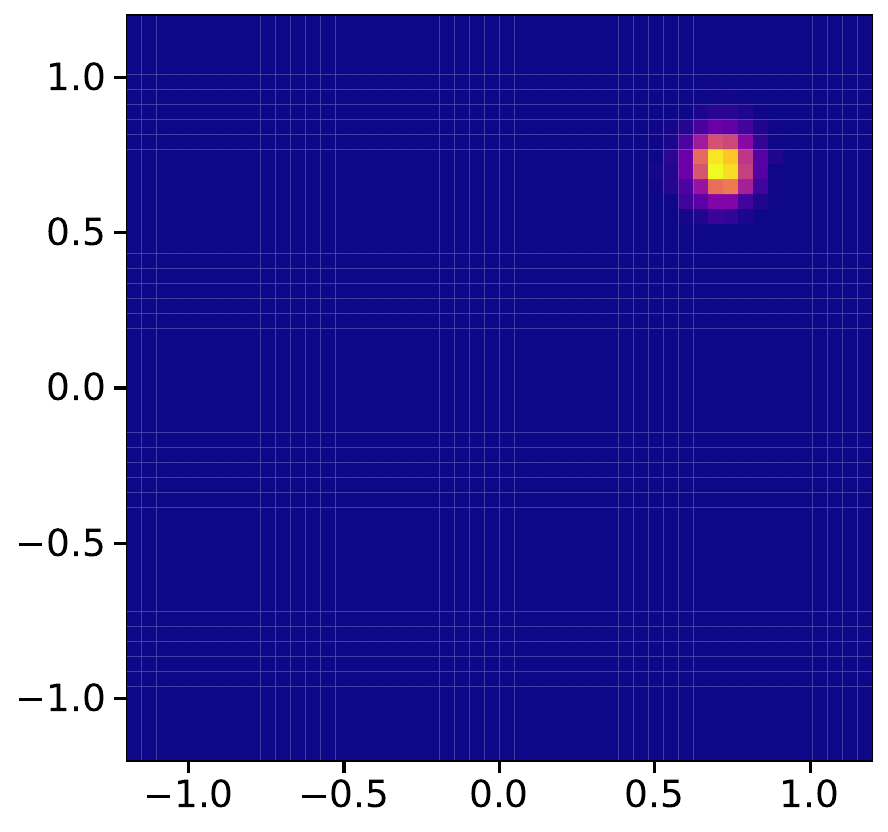}
      \caption{1 mode}
    \end{subfigure}\hfill
    \begin{subfigure}[t]{0.32\textwidth}
      \centering
      \includegraphics[width=\linewidth]{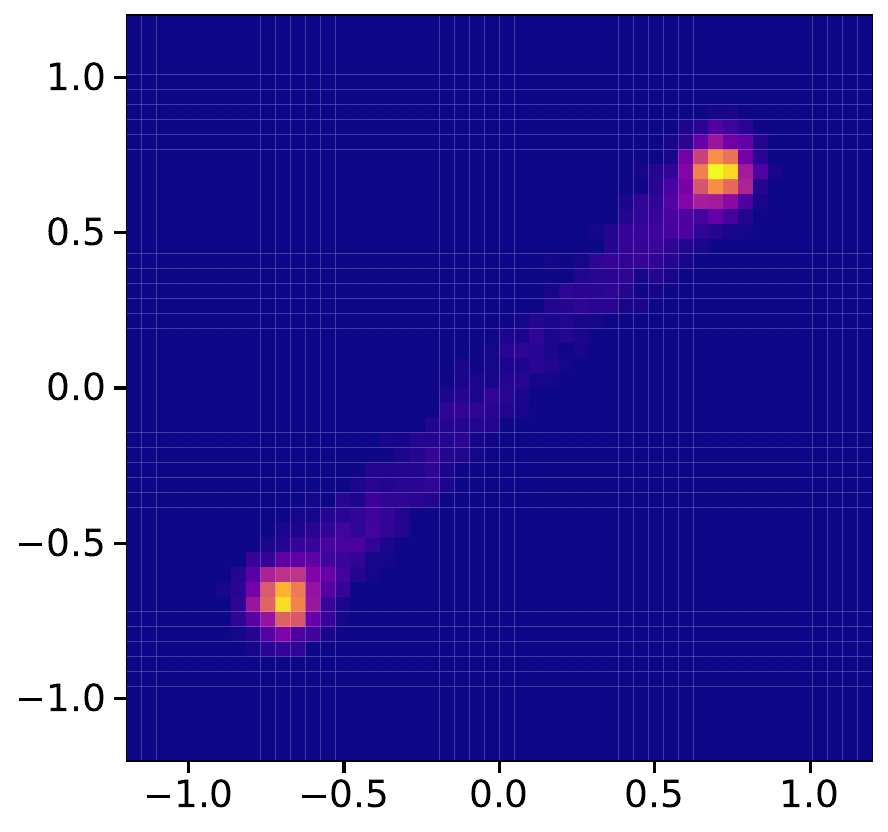}
      \caption{2 modes.}
    \end{subfigure}\hfill
    \begin{subfigure}[t]{0.32\textwidth}
      \centering
      \includegraphics[width=\linewidth]{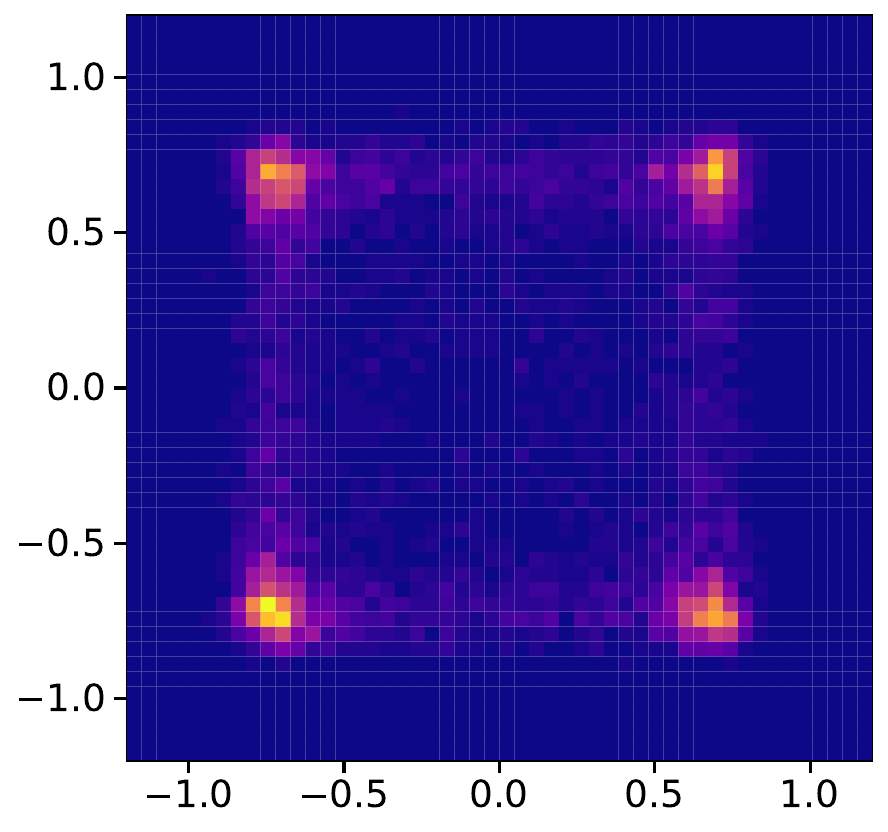}
      \caption{4 modes.}
    \end{subfigure}
    \captionof{figure}{(Top) Trajectories generated from policies pre-trained. % on demonstration datasets containing different degrees of multimodality. 
    (Bottom) Monte Carlo estimate of the action distribution ($\Delta x, \Delta y$) at $t=0$.}
    \label{fig:q1-env-trajectories}
  \end{minipage}
  \hfill
  % ---------- RIGHT: 2 rows × 2 cols (modes by z) ----------
  \begin{minipage}[t]{0.36\textwidth}
    \centering

    \vspace{0.001em}
    \vspace{-3em}
    \begin{subfigure}[t]{0.48\textwidth}
      \centering
      \includegraphics[width=\linewidth]{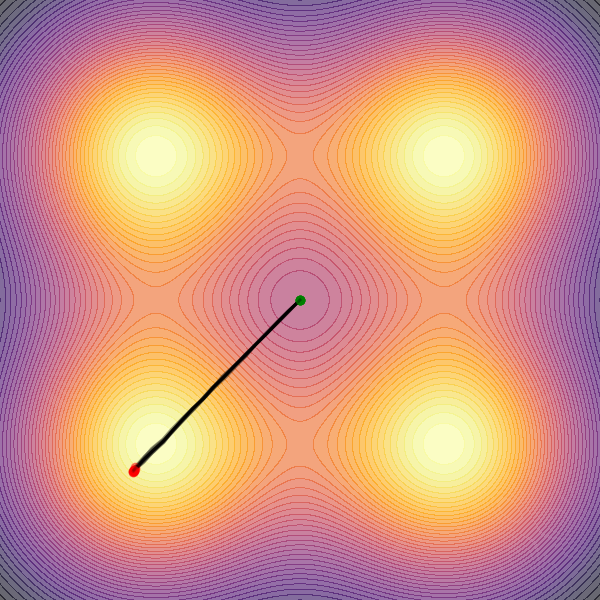}
      \caption{$z=0$}
    \end{subfigure}\hfill
    \begin{subfigure}[t]{0.48\textwidth}
      \centering
      \includegraphics[width=\linewidth]{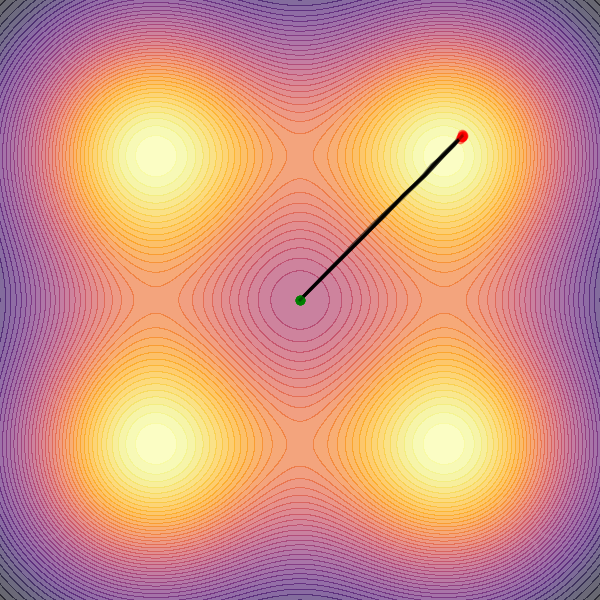}
      \caption{$z=1$}
    \end{subfigure}

    \vspace{0.6em}

    \begin{subfigure}[t]{0.48\textwidth}
      \centering
      \includegraphics[width=\linewidth]{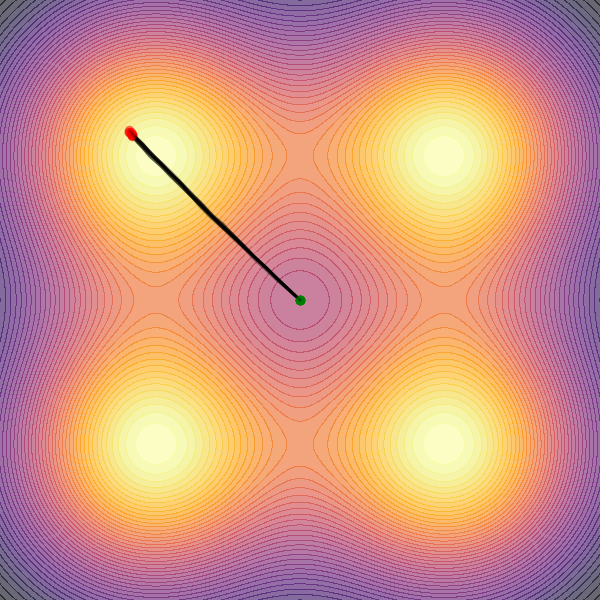}
      \caption{$z=2$}
    \end{subfigure}\hfill
    \begin{subfigure}[t]{0.48\textwidth}
      \centering
      \includegraphics[width=\linewidth]{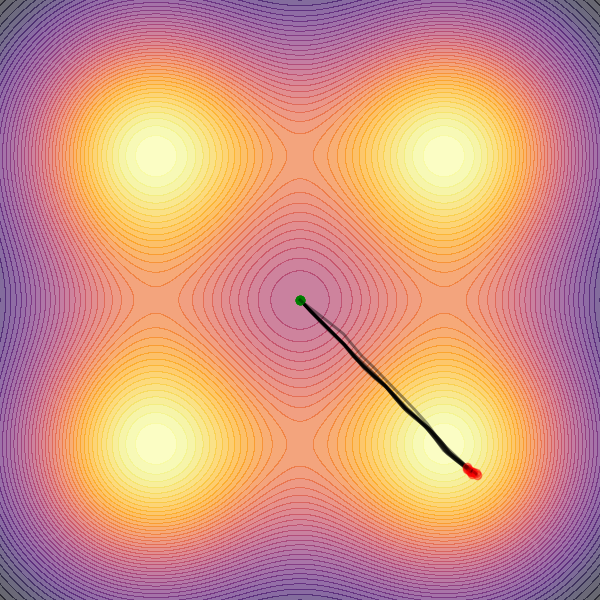}
      \caption{$z=3$}
    \end{subfigure}
    \captionof{figure}{Rollouts generated by steering the policy with latent codes \(z\in\{0,1,2,3\}\).}
    \label{fig:q1-modes-by-z}

  \end{minipage}
\end{figure*}

% \begin{wraptable}[14]{r}{0.58\textwidth} % [9] = reserve ~9 text lines
% \vspace{-0.6\baselineskip}              % tweak if needed
% \centering
% \caption{\textbf{Ablation on the dimensionality of $\mathcal{Z}$.} }
% \resizebox{0.58\textwidth}{!}{%
% \begin{tabular}{lcccc}
% \toprule
% \rowcolor{lightgray}   & \multicolumn{4}{c}{\textbf{G2}} \\
% \rowcolor{lightgray} \textbf{Method} &  $\mathrm{SR}$ &  $\mathrm{SR}_{\mathrm{M}}$ & $\mathrm{mc}@0.80$  & $\mathcal{H}$  \\
% \midrule
% \rowcolor{lightgray} \multicolumn{5}{c}{\textbf{ $|\mathcal{Z}|=4$}} \\
% \texttt{RES[\methodname{}]} & $1.00 \pm 0.00$ & $0.75 \pm 0.00$ & $3.00 / 4$ & $0.74 \pm 0.00$ \\
% \texttt{DPPO[\methodname{}]}  & $1.00 \pm 0.00$ & $0.75 \pm 0.00$ & $3.00 / 4$ & $0.74 \pm 0.00$ \\
% \midrule
% \rowcolor{lightgray} \multicolumn{5}{c}{\textbf{ $|\mathcal{Z}|=8$}} \\
% \texttt{RES[\methodname{}]}  & $1.00 \pm 0.00$ & $1.00 \pm 0.00$ & $4.00 / 4$ & $0.92 \pm 0.00$ \\
% \texttt{DPPO[\methodname{}]} & $0.64 \pm 0.45$ & $0.63 \pm 0.45$ & $2.33 / 4$ & $0.99 \pm 0.00$ \\
% \midrule
% \rowcolor{lightgray} \multicolumn{5}{c}{\textbf{$|\mathcal{Z}|=16$}} \\
% \texttt{RES[\methodname{}]} & $1.00 \pm 0.00$ & $1.00 \pm 0.00$ & $4.00 / 4$ & $0.94 \pm 0.00$ \\
% \texttt{DPPO[\methodname{}]}  & $0.79 \pm 0.00$ & $0.82 \pm 0.00$ & $2.00 / 4$ & $0.94 \pm 0.00$ \\
% \bottomrule
% \end{tabular}%
% }

% \label{tab:z_values}
% \end{wraptable}

\subsection{Dimensionality of $\mathcal{Z}$} 
\label{appendix:dim_z}
We next examine the effect of the latent dimensionality $|\mathcal{Z}|$ on multimodality preservation. We repeat the \textbf{G2} evaluation using the \texttt{RES} and \texttt{DPPO} baselines with mode discovery, varying the number of latent codes. 

\begin{wraptable}[18]{r}{0.58\textwidth} % [9] = reserve ~9 text lines
\vspace{-0.6\baselineskip}              % tweak if needed
\centering
\caption{\textbf{Ablation on the dimensionality of $\mathcal{Z}$.} }
\resizebox{0.58\textwidth}{!}{%
\begin{tabular}{lcc>{\columncolor{lightblue!20}}c c}
\toprule
  & \multicolumn{4}{c}{\textbf{G2}} \\
 \textbf{Method} &  $\mathrm{SR}$ &  $\mathrm{SR}_{\mathrm{M}}$ & $\mathrm{mc}@0.80$  & $\mathcal{H}$  \\
\midrule
\rowcolor{lightgray!50} \multicolumn{5}{c}{\textbf{ $|\mathcal{Z}|=4$}} \\
\texttt{RES[\methodname{}]} & $1.00 \pm 0.00$ & $0.75 \pm 0.00$ & $3.00 / 4$ & $0.74 \pm 0.00$ \\
\texttt{DPPO[\methodname{}]}  & $1.00 \pm 0.00$ & $0.75 \pm 0.00$ & $3.00 / 4$ & $0.74 \pm 0.00$ \\
\midrule
\rowcolor{lightgray!50} \multicolumn{5}{c}{\textbf{ $|\mathcal{Z}|=8$}} \\
\texttt{RES[\methodname{}]}  & $1.00 \pm 0.00$ & $1.00 \pm 0.00$ & $4.00 / 4$ & $0.92 \pm 0.00$ \\
\texttt{DPPO[\methodname{}]} & $0.64 \pm 0.45$ & $0.63 \pm 0.45$ & $2.33 / 4$ & $0.99 \pm 0.00$ \\
\midrule
\rowcolor{lightgray!50} \multicolumn{5}{c}{\textbf{$|\mathcal{Z}|=16$}} \\
\texttt{RES[\methodname{}]} & $1.00 \pm 0.00$ & $1.00 \pm 0.00$ & $4.00 / 4$ & $0.94 \pm 0.00$ \\
\texttt{DPPO[\methodname{}]}  & $0.79 \pm 0.00$ & $0.82 \pm 0.00$ & $2.00 / 4$ & $0.94 \pm 0.00$ \\
\bottomrule
\end{tabular}%
}

\label{tab:z_values}
\end{wraptable}
Results are reported in Table~\ref{tab:z_values}. A dimension of $|\mathcal{Z}|=4$, which matches the ground-truth number of modes, fails to fully capture all task modalities. This limitation stems from our inference model, which distinguishes modes through state coverage and can become sensitive to minor state variations, occasionally treating nearby but distinct states as different modes. 
Increasing dimensionality ($|\mathcal{Z}|=8,16$) improves coverage by promoting exploration of diverse trajectories. However, excessively large latent spaces introduce inefficiencies: for instance, \texttt{DPPO[\methodname{}]} deteriorates at $|\mathcal{Z}|=16$, likely due to a trade-off between task optimization and diversity. These results suggest that latent dimensionality should be tuned to the complexity of the multimodal structure, and that more robust inference models beyond simple state coverage may further improve mode discovery, representing an interesting direction for future work. %Given our design of the discriminator, we can observe how for example a dimensionality of $4$ leads to the loss of some modalities as the modes discovered by the model are diverse enough to be recognised by the discriminator but to not cover well the full state space. Increasing the dimensionality helps exploration and thus the retention of all modalities, but the higher the number of modes, the harder the problem becomes, making more samples inefficient for the training of DDIM, for example. 

\section{Tasks Description}
\label{appendix:manip_tasks}

We evaluate our approach on five robotics tasks. These include three robotic manipulation tasks and one locomotion task implemented within the ManiSkill~\citep{tao2024maniskill3} framework: \emph{Reach}, \emph{Lift}, \emph{Avoid} (re-implemented from D3IL~\citep{jia2024towards}), as manipulation tasks and \emph{ANYmal} as a locomotion task. We further integrate the \emph{Franka Kitchen} environment from D4RL~\citep{fu2020d4rl} as a sequential multi-task setting. Each task (shown in Figure~\ref{fig:tasks}) exhibits distinct forms and degrees of multimodality. Multimodality arises either from goal diversity or, for a fixed goal, from multiple feasible trajectories that lead to successful completion. All manipulation tasks are performed with a Franka Emika Panda robot, where agent actions are parameterized as 6-DoF end-effector delta poses $(\Delta x, \Delta y, \Delta z, \Delta \text{roll}, \Delta \text{pitch}, \Delta \text{yaw})$. The action space for the locomotion task consists of the 12 delta joint position commands controlling the four legs of the ANYmal.

% \begin{figure}[t!]  
%   \centering
%     \centering
%     % Row 1
%     \begin{subfigure}[t]{0.32\textwidth}
%       \centering
%       \includegraphics[width=\linewidth]{figures/tasks/reach.png}
%       \caption{\emph{Reach}}
%     \end{subfigure}\hfill
%     \begin{subfigure}[t]{0.32\textwidth}
%       \centering
%       \includegraphics[width=\linewidth]{figures/tasks/lift.png}
%       \caption{\emph{Lift}}
%     \end{subfigure}\hfill
%     \begin{subfigure}[t]{0.32\textwidth}
%       \centering
%       \includegraphics[width=\linewidth]{figures/tasks/avoid.png}
%       \caption{\emph{Avoid}}
%     \end{subfigure}

%     \captionof{figure}{Visualization of the three ManiSkill tasks used in our evaluation: \emph{Reach}, \emph{Lift}, and \emph{Avoid}.  
% For each task we display four random environment initializations and highlight representative modes for solving the task.}
%     \label{fig:tasks}
% \end{figure}

\paragraph{Reach (2 modes).} In \emph{Reach}, the agent must contact a green sphere while avoiding a gray obstacle; success can be achieved by approaching from either side. These left–right approaches constitute two disjoint solution classes, creating a simple bimodal structure. This task is comparatively simple, as multimodality appears only at the beginning of the trajectory, after which the policy is effectively committed to a single mode. The state space comprises the robot joint positions and velocities, the end-effector pose, as well as goal and bar poses. The maximum episode length is $100$ steps. The task is considered to be successful if the agent reaches the goal within a pre-defined threshold

\paragraph{Lift (2 modes).} In \emph{Lift}, the agent must lift a peg into vertical position. The peg can be grasped and lifted upright from either the red or blue side, yielding multiple valid grasping strategies. Here, multimodality reflects the existence of several grasp affordances around the object, which lead to distinct grasp–lift trajectories even when the final goal is identical. The initial randomization of object configurations increases the ambiguity and difficulty of separating modes.  The state space comprises the robot joint positions and velocities, the end-effector pose, as well as the peg pose. The maximum episode length is $200$ steps. The task is considered to be successful if the peg is successfully lifted (assessed through the pose of the object) and stable.

\paragraph{Avoid (24 modes).} In the \emph{Avoid} task, the agent must cross the green line by avoiding the obstacles in the table. This is the most challenging as numerous modalities emerge later in the trajectory, each corresponding to a distinct avoidance strategy with different path lengths. In this case, only the initial end-effector position is randomized at reset, while the obstacle remains fixed, emphasizing the diversity of possible avoidance strategies. The state representation encompasses the end-effector’s desired position and actual position in Cartesian space, with the caveat that the robot’s height (z position) remains fixed. The actions are represented by the desired velocity of the robot along the x and y axes. The maximum episode length is $300$ steps. The task is considered to be successful if the robot-end-effector reaches the green finish line.

\paragraph{ANYmal Locomotion (4 modes).}
In this locomotion task, a quadrupedal ANYmal robot must navigate to one of four goal locations placed at fixed positions in the environment. Multimodality arises from goal diversity: each goal corresponds to a distinct target direction and therefore induces different optimal locomotion strategies and turning behaviors. Demonstrations are generated by training four separate RL agents, each optimized to reach a single goal, producing unimodal expert trajectories for each target. When combined, these demonstrations form a four-modal behavior distribution that the policy must preserve. The state space includes proprioceptive robot observations (joint states, base pose, and velocities) and the relative goal position with respect to the agent position. The maximum episode length is $200$ steps, and an episode is successful if the robot reaches any goal within a predefined tolerance.

\paragraph{Franka Kitchen (24 modes).}The Franka Kitchen environment from D4RL~\citep{fu2020d4rl} contains demonstrations of a robot manipulating several articulated objects (microwave, kettle, burner, light switch). We train on the mixed demonstration dataset, which contains trajectories performing different task combinations in varying orders, but never completing all four evaluation subtasks sequentially. As a result, multimodality emerges both from the diversity of partial task orders and from multiple valid ways to interact with each object. For evaluation, we follow the common benchmark and consider four subtasks—\texttt{microwave}, \texttt{kettle}, \texttt{bottom burner}, \texttt{light switch}. Success is achieved when the policy completes three of the four subtasks within the episode, possibly in any order. The state space consists of robot joint states, end-effector pose, and object poses; the maximum horizon is $280$ steps.

All environments provide dense or intermediate reward functions to support fine-tuning, and we employ a heuristic to identify the mode associated with each trajectory, enabling consistent evaluation of multimodality. Additional implementation details will be available upon the release of the codebase.

\begin{figure}[t!]
  \centering
  % Five tasks in one row
  \begin{subfigure}[t]{0.18\textwidth}
    \centering
    \includegraphics[width=\linewidth]{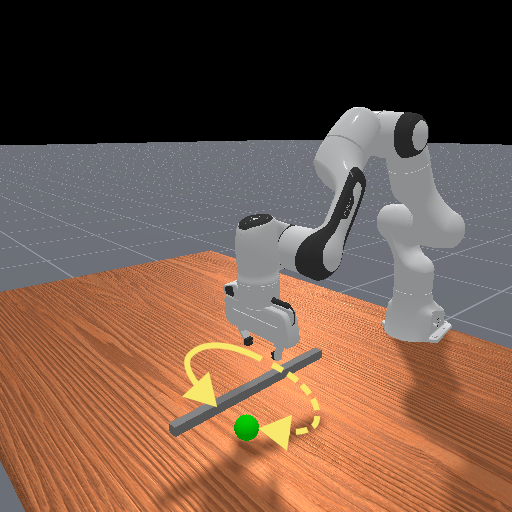}
    \caption{\emph{Reach}}
  \end{subfigure}\hfill
  \begin{subfigure}[t]{0.18\textwidth}
    \centering
    \includegraphics[width=\linewidth]{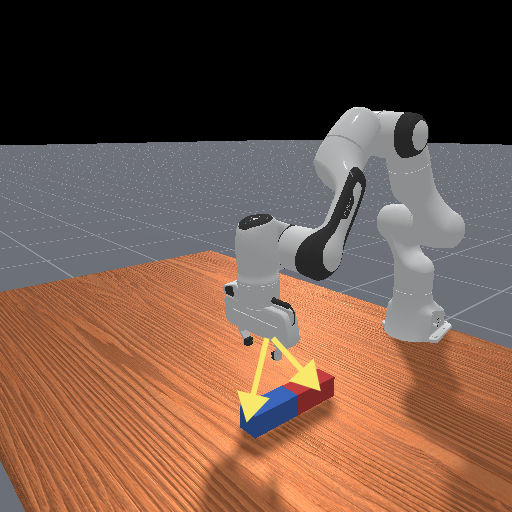}
    \caption{\emph{Lift}}
  \end{subfigure}\hfill
  \begin{subfigure}[t]{0.18\textwidth}
    \centering
    \includegraphics[width=\linewidth]{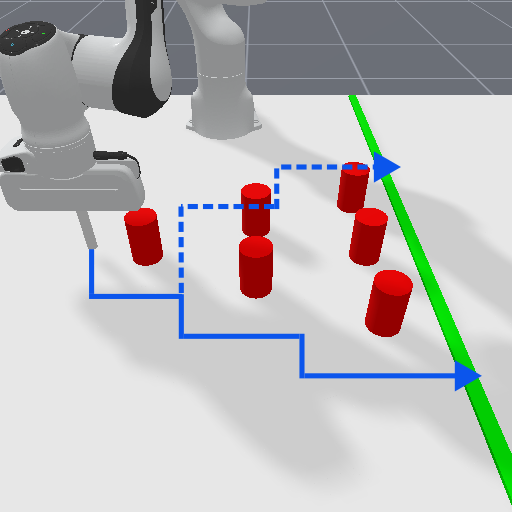}
    \caption{\emph{Avoid}}
  \end{subfigure}\hfill
  \begin{subfigure}[t]{0.18\textwidth}
    \centering
    \includegraphics[width=\linewidth]{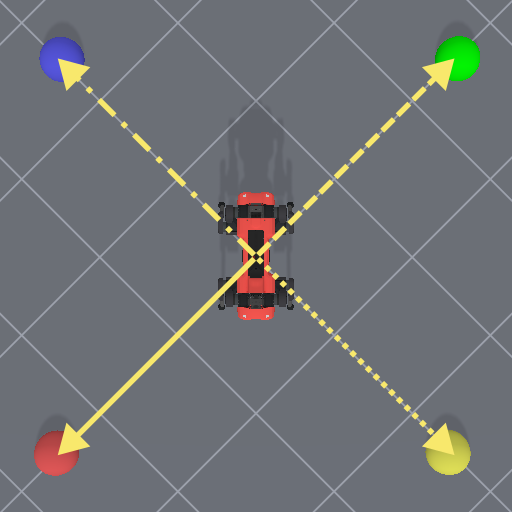}
    \caption{\emph{ANYmal}}
  \end{subfigure}\hfill
  \begin{subfigure}[t]{0.23\textwidth}
    \centering
    \includegraphics[width=\linewidth]{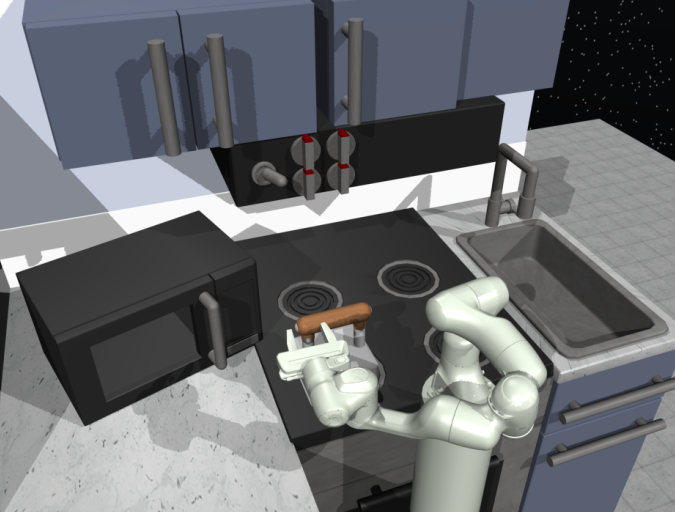}
    \caption{\emph{Franka Kitchen}}
  \end{subfigure}

  \caption{Visualization of the five ManiSkill tasks used in our evaluation. 
  For each task, except the \emph{Franka Kitchen}, we highlight representative modes for solving the task.}
  \label{fig:tasks}
\end{figure}

% \paragraph{Implementation Details}
% For each task, we collect $1000$ demos with a motion planner for each task and pre-train the diffusion model for $1000$ epochs. \al{Can we evaluate the difference between a trajectory trained with RL and the motion planning demos to quantify the reward landscape shift?} The baseline models are fine-tuned using only the task rewards, while the regularized version first pre-trains the steering policy with only the mode discovery objective.\al{Some other details}.

\section{Ablation Experiments}

\subsection{Method Ablations}
\label{appendix:manipulation_ablation}

\begin{wrapfigure}{r}{0.5\textwidth}  % r = right, l = left
\vspace{-3.9\baselineskip}             % adjust vertical spacing if needed
\centering
\includegraphics[width=0.9\linewidth]{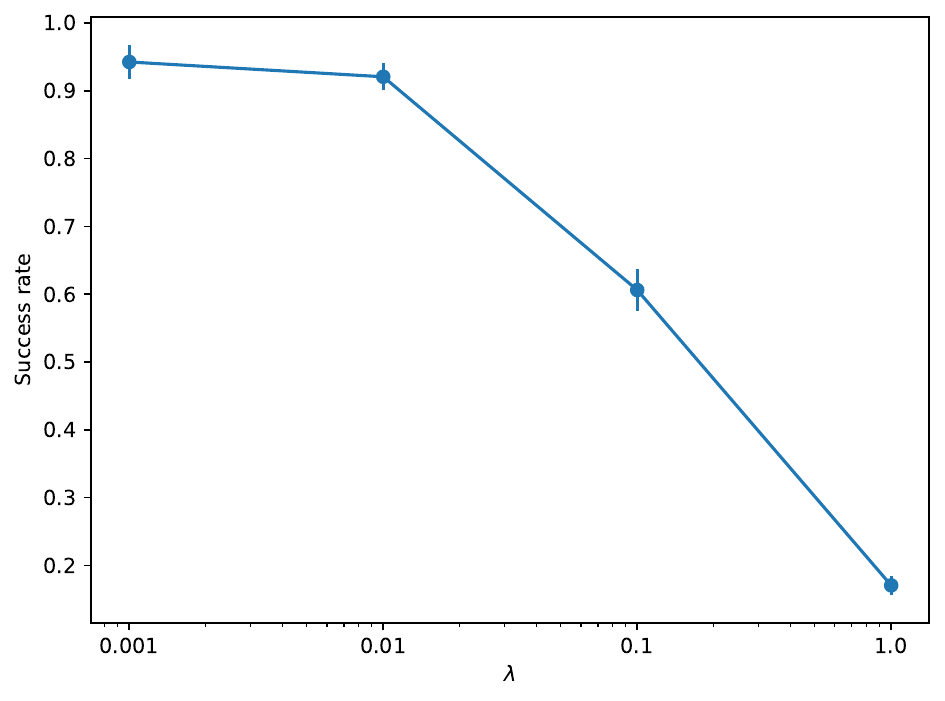}

\caption{Impact of the regularization coefficient $\lambda$ on the task success rate. }
\label{fig:lambda}
\vspace{-0.6cm} 
\end{wrapfigure}

We first study the effect of the regularization weight $\lambda$ on task performance, focusing on the \emph{Lift} task with the \texttt{RES[\methodname{}]} baseline. Figure~\ref{fig:lambda} shows that as $\lambda$ increases, the intrinsic reward increasingly dominates over the task reward, leading to a drop in success rate. This illustrates the trade-off: stronger regularization favors diversity at the expense of task performance.

Next, we analyze the impact of (i) pre-training with only the mode-discovery reward (\texttt{[NO-FT \methodname{}]}) and (ii) omitting fine-tuning of the inference model and steering policy when adapting the main policy with another fine-tuning technique (\texttt{[NO-PRE \methodname{}]}), (iii) removing the curriculum stage during the mode-discovery phase (\texttt{[NO-CURR \methodname{}]}). These ablations, reported in Table~\ref{tab:ablation}) for the \emph{Lift} task with \texttt{RES[\methodname{}]}, reveal that all factors negatively affect performance. In particular, disabling fine-tuning of the inference model and steering policy is catastrophic: the mutual-information signal becomes uninformative as the policy is driven toward out-of-distribution states relative to pre-training.

\begin{table*}[t!]
\centering
\begin{minipage}[t]{0.48\textwidth}
  \centering
  \caption{Ablation experiments on key design choices: (i) pre-training with only the mode-discovery reward, (ii) omitting fine-tuning of the inference model and steering policy when adapting the main policy with another fine-tuning technique, and (iii) removing the curriculum stage during the mode-discovery phase.}
  \label{tab:ablation}
  \resizebox{\linewidth}{!}{%
  \begin{tabular}{lcc>{\columncolor{lightblue!20}}c c}
    \toprule
    \textbf{Method} &  $\mathrm{SR}$ &  $\mathrm{SR}_{\mathrm{M}}$ & $\mathrm{mc}@0.80$  & $\mathcal{H}$  \\
    \midrule
    \texttt{PRE} & $0.14 \scriptscriptstyle\pm 0.01$ & $0.15 \scriptscriptstyle\pm 0.01$ & $0.00 / 2$& $0.97 \scriptscriptstyle\pm 0.01$ \\
    \midrule
    \texttt{RES[\methodname{}]} & $0.99 \scriptscriptstyle\pm 0.00$ & $0.99 \scriptscriptstyle\pm 0.00$ & $2.00 / 2$& $1.00 \scriptscriptstyle\pm 0.00$ \\
    \midrule
    \texttt{RES[NO-PRE \methodname{}]} & $0.91 \scriptscriptstyle\pm 0.04$ & $0.79 \scriptscriptstyle\pm 0.11$ & $1.33 / 2$& $0.74 \scriptscriptstyle\pm 0.08$  \\
    \texttt{RES[NO-FT \methodname{}]} & $0.00 \scriptscriptstyle\pm 0.00$ & $0.00 \scriptscriptstyle\pm 0.00$ & $0.00 / 2$& $0.00 \scriptscriptstyle\pm 0.00$  \\
    \texttt{RES[NO-CURR \methodname{}]}& $0.85 \scriptscriptstyle\pm 0.08$ & $0.83 \scriptscriptstyle\pm 0.08$ & $1.33 / 2$& $0.95 \scriptscriptstyle\pm 0.05$   \\
    \bottomrule
  \end{tabular}
  }
\end{minipage}\hfill
\begin{minipage}[t]{0.48\textwidth}
  \centering
  \caption{Ablation experiment on removing the steering policy after fine-tuning with \methodname{}}
  \label{tab:remove_steering}
  \resizebox{\linewidth}{!}{%
  \begin{tabular}{lcc>{\columncolor{lightblue!20}}c c}
    \toprule
    \textbf{Method} &  $\mathrm{SR}$ &  $\mathrm{SR}_{\mathrm{M}}$ & $\mathrm{mc}@0.80$  & $\mathcal{H}$  \\
    \midrule
    \texttt{PRE} & $0.14 \scriptscriptstyle\pm 0.01$ & $0.15 \scriptscriptstyle\pm 0.01$ & $0.00 / 2$& $0.97 \scriptscriptstyle\pm 0.01$ \\
    \midrule
    \rowcolor{lightgray!50} \multicolumn{5}{c}{\emph{With Steering Policy}} \\
    \midrule
    \texttt{RES[\methodname{}]} & $0.99 \scriptscriptstyle\pm 0.00$ & $0.99 \scriptscriptstyle\pm 0.00$ & $2.00 / 2$& $1.00 \scriptscriptstyle\pm 0.00$ \\
    \texttt{DPPO[\methodname{}]} & $0.99 \scriptscriptstyle\pm 0.00$ & $0.55 \scriptscriptstyle\pm 0.07$ & $1.00 / 2$& $0.06 \scriptscriptstyle\pm 0.04$ \\
    \midrule
    \rowcolor{lightgray!50} \multicolumn{5}{c}{\emph{Without Steering Policy (Random Sampling)}} \\
    \midrule
    \texttt{RES[\methodname{}]} & $0.95 \scriptscriptstyle\pm 0.02$ & $0.94 \scriptscriptstyle\pm 0.02$ & $2.00 / 2$& $0.93 \scriptscriptstyle\pm 0.03$ \\
    \texttt{DPPO[\methodname{}]} & $0.99 \scriptscriptstyle\pm 0.00$ & $0.58 \scriptscriptstyle\pm 0.06$ & $1.00 / 2$& $0.08 \scriptscriptstyle\pm 0.03$\\
    \bottomrule
  \end{tabular}
  }
\end{minipage}
\end{table*}

Finally, we evaluate whether policies fine-tuned with \methodname{} retain multimodality and performance once the steering head is removed, i.e., actions are again driven by the original latent noise prior. Table~\ref{tab:remove_steering} reports success and multimodality metrics for only the \texttt{DPPO[\methodname{}]} and \texttt{RES[\methodname{}]} on \emph{Lift}, as removing the steering on the \texttt{DSRL} baseline would regress the performance back to the original pre-trained policy. The residual baseline shows minimal degradation after removing the steering head, indicating that residual updates internalize the discovered modes into the policy. Similarly, \texttt{DPPO[\methodname{}]} exhibits similar performance with respect to the version including the steering head.

We hypothesize that \methodname{}’s regularization on the steering output, penalizing deviations from the original normal noise, encourages compatibility between the learned behaviors and the base diffusion noise. During fine-tuning, steering guides exploration over $z$ to expose distinct modes, while the regularizer keeps the induced noise close to the prior, allowing the policy to absorb mode structure without depending on explicit steering at inference. Consequently, \texttt{RES[\methodname{}]} can execute diverse behaviors when sampling from the unmodified prior, preserving multimodality with limited impact on task success and making it a strong candidate for fine-tuning generative policies.

\subsection{Mode Stability Analysis}
\label{appendix:mode stability}

To assess the stability of the latent steering variable discovered by \methodname{}, we evaluate whether different training seeds induce consistent mappings between latent codes and environment modes in the \emph{ANYmal} task. We consider $3$ different seeds, all steering policies were trained under identical conditions and number of fine-tuning epochs, and for each checkpoint, we rolled out $1024$ episodes from randomized initial states. We predefined the sampled latent code $z$ for each episode (shared across the evaluations of the three seeds) and recorded the resulting environment-defined mode. 

The confusion matrices in Figure~\ref{fig:confusion_matrices} show that all checkpoints exhibit highly consistent mode assignments across initial-state perturbations; however, the third checkpoint collapses two latent codes into the same mode, mirroring the behavior observed in our 2D Gaussian Mixture analysis, where small state perturbations cause our diversity objective to treat trajectories ending in the same mode as distinct, thereby compressing the latent space.   A qualitative visualization of the modes learned by the three checkpoints is shown in Figure~\ref{fig:modes_anymal}. We further report the success rate per mode in Table~\ref{tab:sr_modes}.

To quantify seed-level agreement, we compare the checkpoints using three metrics: (i) the Normalized Mutual Information (NMI), which measures the similarity of the mode distributions produced across seeds; (ii) the Adjusted Rand Index (ARI), which evaluates the alignment of the underlying mode-cluster structure; and (iii) a Z-Consistency score, defined as the fraction of latent codes whose dominant mode matches across seeds. As shown in Table~\ref{tab:z_stability_pairwise}, NMI and ARI remain high, indicating that all seeds recover consistent sets of behavioral modes, while the collapsed latent in the third checkpoint naturally leads to non-perfect scores. The Z-Consistency score is zero for the first checkpoint comparisons, confirming that although the learned modes are stable, the specific latent-code assignments are not necessarily preserved across seeds, reflecting the inherent permutation symmetry of unsupervised latent discovery.

\begin{figure}[t!]
  \centering
\centering
\includegraphics[width=\linewidth]{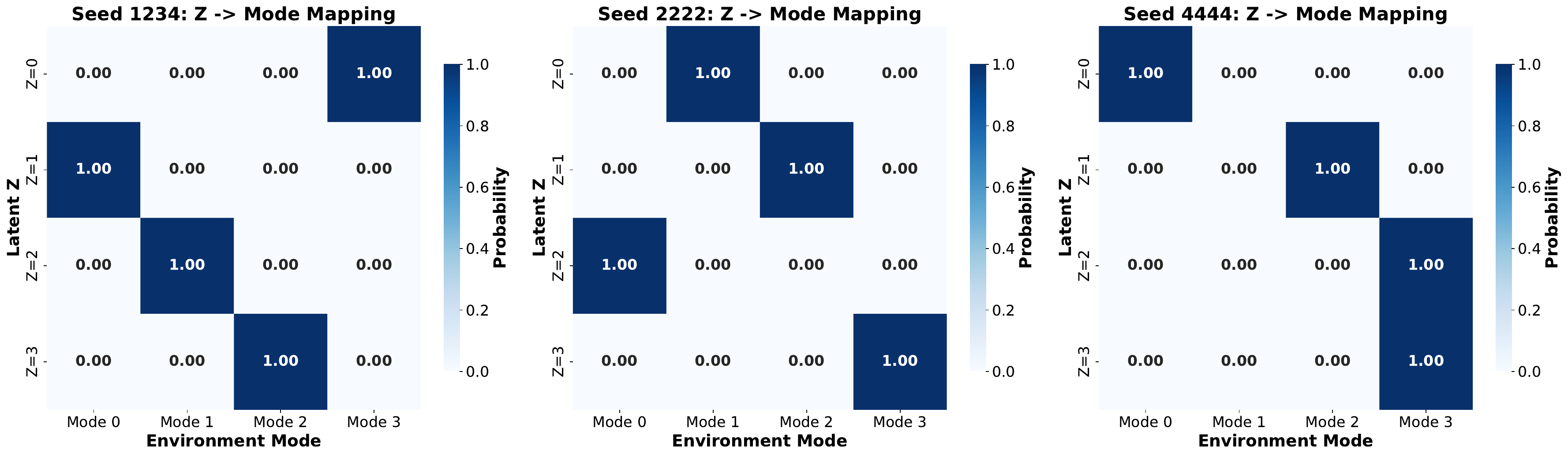}
\caption{Confusion matrices of the mappings from the latent $z\in\mathcal{Z}$ to the ground truth environment's modes.}
\label{fig:confusion_matrices}

\end{figure}

\begin{figure}[t]
\centering
\label{fig:locomotion_skills_viz}

\begin{subfigure}[t]{0.3\linewidth}
    \centering
    \includegraphics[width=\linewidth]{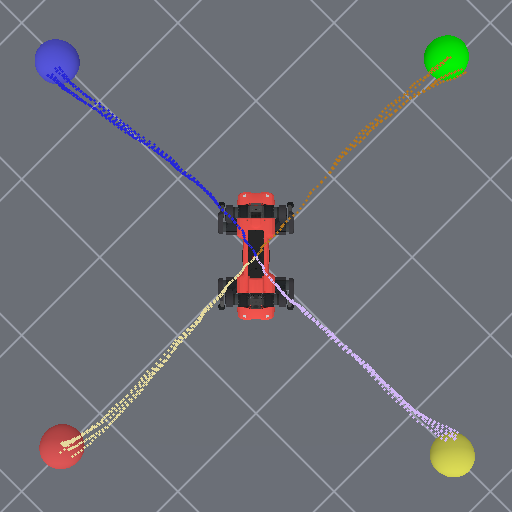}
    \caption{Checkpoint $1$ (seed 1234).}
    \label{fig:cp1-seed1234}
\end{subfigure}
\hfill
\begin{subfigure}[t]{0.3\linewidth}
    \centering
    \includegraphics[width=\linewidth]{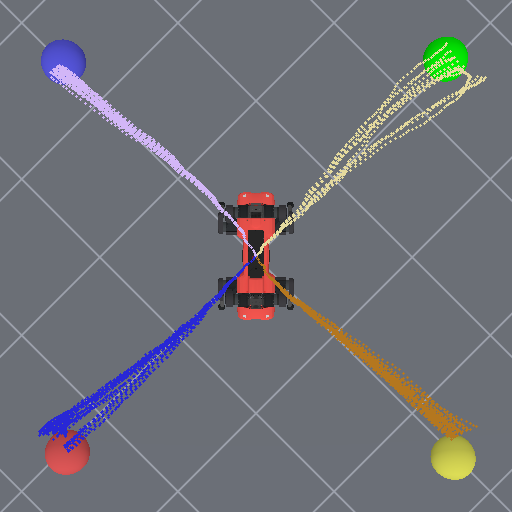}
    \caption{Checkpoint $2$ (seed 2222).}
    \label{fig:cp1-seed1234-b}
\end{subfigure}
\hfill
\begin{subfigure}[t]{0.3\linewidth}
    \centering
    \includegraphics[width=\linewidth]{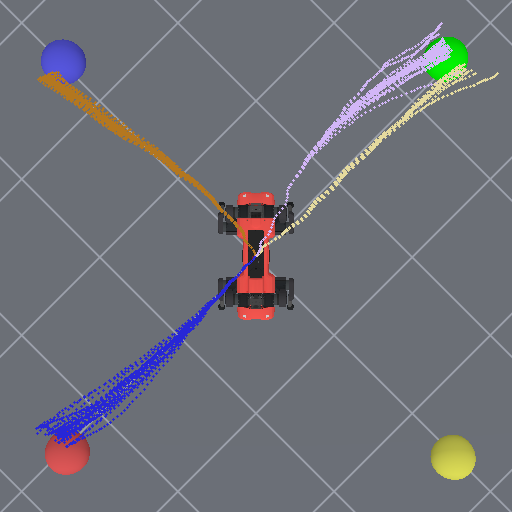}
    \caption{Checkpoint $3$ (seed 4444).}
    \label{fig:cp1-seed1234-c}
\end{subfigure}

\caption{Qualitative visualization of the trajectories distributions for different checkpoints, where different colors correspond to different $z\in\mathcal{Z}$.}
\label{fig:modes_anymal}
\end{figure}

\begin{table*}[t]

\centering
\begingroup
% \scriptsize
% \setlength{\tabcolsep}{2pt} % tighter columns (default ~6pt)
\begin{minipage}{0.465\linewidth}
\centering
\caption{Success rate per mode.}
\label{tab:sr_modes}
\begin{tabular}{@{}cccc@{}}
\toprule
 \textbf{Mode} & \textbf{C1} $(\uparrow)$ & \textbf{C2} $(\uparrow)$ & \textbf{C3} $(\uparrow)$ \\
\midrule
\rowcolor{lightgray!50}0 & 1.00 & 1.00 & 1.00 \\
1 & 0.97 & 0.98 & -- \\
\rowcolor{lightgray!50}2 & 1.00 & 0.91 & 0.95 \\
3 & 0.97 & 0.89 & 0.92 \\
\bottomrule
\end{tabular}
\end{minipage}
\hfill
\begin{minipage}{0.465\linewidth}
\centering
\caption{Pairwise comparison metrics for latent Z stability across checkpoints. }
\label{tab:z_stability_pairwise}
\begin{tabular}{@{}lccc@{}}
\toprule
 \textbf{Pair} & \textbf{NMI} $(\uparrow)$ & \textbf{ARI} $(\uparrow)$ & \textbf{Z Consistency} $(\uparrow)$ \\
\midrule
\rowcolor{lightgray!50}\texttt{C1 vs C2} & 1.00 & 1.00 & 0.00 \\
\texttt{C1 vs C3} & 0.87 & 0.74 & 0.00 \\
\rowcolor{lightgray!50}\texttt{C2 vs C3} & 0.87 & 0.74 & 0.50 \\
\bottomrule
\end{tabular}
\end{minipage}
\endgroup

\end{table*}

\subsection{Noise and Dynamics Perturbations}
\label{sec:noise_dyn_perturbations}

This section evaluates the robustness of the proposed method to environmental perturbations beyond reward shifts, specifically focusing on observation noise and dynamics alterations. We conduct these experiments in the \emph{ANYmal} environment. For the observation-noise ablation, we inject Gaussian noise with standard deviation $0.01$ into the observations prior to feeding them into the policy. For the dynamics-shift ablation, we immobilize the first joint of one leg by forcing the corresponding action to zero before each simulator step. These perturbation magnitudes were chosen to avoid completely destabilizing the pre-trained policy: the method assumes a non-trivial degree of multimodality in the underlying model, and more severe interventions (e.g.\, blocking the second joint) caused immediate falls, thereby collapsing behavioral diversity and falling outside the scope of this study.

We evaluated the steering policy and compared its performance against the same policy trained under the same shifts without the \methodname{} regularization, using the same metrics introduced in the previous section. We further included the performance of the pre-trained model evaluated with the suggested shifts. Taken together, the results in Tables~\ref{tab:obs_noise} and~\ref{tab:dynamics_perturbation} show that \methodname{} consistently preserves multimodal behavior under observation noise and maintains distinct latent-behavioral modes even when the underlying system dynamics are perturbed. Although dynamic shifts reduce overall task performance, the fine-tuned steering policy still succeeds on two of the four environment modes (per-mode success: (0) 0.02, (1) 0.92, (2) 0.79, (3) 0.98), corresponding to $\mathrm{mc}@80=2/4$. This demonstrates the robustness of the discovered behavioral modes, while also revealing room for improvement in handling stronger dynamic variations.

\begin{table*}[t]
\centering
\begin{minipage}{0.465\linewidth}
\centering
\caption{Observation noise perturbation. }
\label{tab:obs_noise}
\resizebox{\linewidth}{!}{%
\centering
\begin{tabular}{lcc>{\columncolor{lightblue!20}}c c}
\toprule
\textbf{Method} & $\mathrm{SR}$$(\uparrow)$ & $\mathrm{SR}_{\mathrm{M}}$ $(\uparrow)$& $\mathrm{mc}@0.80$ $(\uparrow)$& $\mathcal{H}$ $(\uparrow)$\\
\midrule
\rowcolor{lightgray!50}\texttt{PRE} & $0.32 $ & $0.31  $ & \cellcolor{lightblue!20}$0.00 / 4$& $0.99 $  \\
\midrule
\texttt{DSRL} & $1.00 $ & $0.25  $ & $1.00 / 4$& $0.00  $ \\
\rowcolor{lightgray!50}\texttt{DSRL[\methodname{}]} & $0.96 $ & $0.96  $ & \cellcolor{lightblue!20}$4.00 / 4$& $0.99 $  \\
\bottomrule
\end{tabular}%
}
\end{minipage}\hfill
\begin{minipage}{0.465\linewidth}
\centering
\caption{Dynamics shift perturbation.}
\label{tab:dynamics_perturbation}
\centering
\resizebox{\linewidth}{!}{%
\begin{tabular}{lcc>{\columncolor{lightblue!20}}c c}
\toprule
\textbf{Method} & $\mathrm{SR}$ $(\uparrow)$& $\mathrm{SR}_{\mathrm{M}}$ $(\uparrow)$ & $\mathrm{mc}@0.80$ $(\uparrow)$& $\mathcal{H}$ $(\uparrow)$ \\
\midrule
\rowcolor{lightgray!50}\texttt{PRE} & $0.05 $ & $0.06  $ & \cellcolor{lightblue!20}$0.00 / 4$& $0.90 $  \\
\midrule
\texttt{DSRL} & $0.98 $ & $0.24  $ & $1.00 / 4$& $0.00  $ \\
\rowcolor{lightgray!50}\texttt{DSRL[\methodname{}]} & $0.69 $ & $0.69  $ & \cellcolor{lightblue!20}$2.00 / 4$& $0.99 $  \\
\bottomrule
\end{tabular}%
}
\end{minipage}
\end{table*}

\subsection{Successful Kitchen Tasks Sequences}

Table~\ref{tab:kitchen_sequences} reports the distinct successful task sequences executed by each method in the Franka Kitchen environment. The pre-trained policy exhibits multiple valid one- and two-task sequences, while all baselines collapse to a single sequence per task count. In contrast, \methodname{} recovers multiple successful sequences across all levels, often preserving the original multimodality but losing for some seeds the sequence ``[microwave, bottom burner]''.

\begin{table}[h!]
\caption{Unique successful action sequences discovered by each method, grouped by number of completed tasks. Underlined sequences are lost for some seeds.}
\label{tab:kitchen_sequences}
\resizebox{\linewidth}{!}{%
\centering
\small
\begin{tabular}{l c l}
\toprule
\textbf{Method} & \textbf{\# Tasks} & \textbf{Unique Sequences} \\
\midrule
\tablerowcolors
\multirow{2}{*}{\texttt{PRE} }
  & 1 & [microwave], [kettle] \\
  & 2 & [kettle, bottom burner], [microwave, bottom burner], [microwave, kettle] \\
    & 3 & - \\
\midrule
\multirow{3}{*}{\texttt{RES} / \texttt{DSRL} / \texttt{DPPO} / \texttt{DPPO[10]}}
  & 1 & [kettle] \\
  & 2 & [kettle, bottom burner] \\
  & 3 & [kettle, bottom burner, light switch] \\
\midrule
\multirow{3}{*}{\texttt{DSRL[\methodname{}]}}
  & 1 & [microwave], [kettle] \\
  & 2 & [kettle, bottom burner], \underline{[microwave, bottom burner]}, [microwave, kettle] \\
  & 3 & [kettle, bottom burner, light switch], \underline{['microwave', 'bottom burner', 'light switch']}, [microwave, kettle, bottom burner], \\
\bottomrule
\end{tabular}
}
\end{table}

\end{document}